\newcommand{\BlackBox}{\rule{1.5ex}{1.5ex}}  
\renewenvironment{proof}{\par\noindent{\bf Proof\ }}{\hfill\BlackBox\\[2mm]}
\newenvironment{proof}{\par\noindent{\bf Proof\ }}{\hfill\BlackBox\\[2mm]}
\newtheorem{example}{Example} 
\newtheorem{theorem}{Theorem}
\newtheorem{lemma}[theorem]{Lemma} 
\newtheorem{proposition}[theorem]{Proposition} 
\newtheorem{remark}[theorem]{Remark}
\newtheorem{corollary}[theorem]{Corollary}
\newtheorem{definition}[theorem]{Definition}
\author[1,2]{Andrea Della Vecchia}
\author[2]{Ernesto De Vito}
\author[3]{Jaouad Mourtada}
\author[4,5,6]{Lorenzo Rosasco}
\affil[1]{EPFL and Swiss Finance Institute, Lausanne, Switzerland}
\affil[2]{MaLGa Center - DIMA - Università di Genova, Italy}
\affil[3]{ CREST, ENSAE - Institut Polytechnique de Paris, France}
\affil[4]{MaLGa Center - DIBRIS, Università di Genova, Genoa, Italy}
\affil[5]{Center for Brains, Minds and Machines, MIT, Cambridge, MA, USA}
\affil[6]{Istituto Italiano di Tecnologia, Genoa, Italy}
	\date{}
\begin{document}
\title{The Nystr\"om method for convex loss functions}
%
%

\maketitle

\begin{abstract}

We investigate an extension of classical empirical risk minimization, where the hypothesis space consists of a random subspace within a given Hilbert space. Specifically, we examine the Nystr\"om method where the  subspaces are defined by a random subset of the data. This approach recovers Nystr\"om approximations used in  kernel methods as a specific case. Using random subspaces naturally leads to computational advantages, but a key question is whether it compromises the learning accuracy. Recently, the tradeoffs between statistics and computation have been explored for the square loss and self-concordant losses, such as the logistic loss. In this paper, we extend these analyses to general convex Lipschitz losses, which may lack smoothness, such as the hinge loss used in support vector machines. Our main results show the existence of various scenarios where computational gains  can be achieved without sacrificing learning performance. When specialized to smooth loss functions, our analysis recovers most previous results. Moreover, it allows to consider classification problems and translate the surrogate risk bounds into classification error bounds. Indeed, this gives the opportunity to compare the effect of Nystr\"om approximations when combined  with different loss functions such as the  hinge or the  square loss. 
\end{abstract}
\textbf{Keywords:}
statistical learning theory, classification, Nystr\"om approximation, kernel methods

\tableofcontents

\section{Introduction}
Despite excellent practical performances, state of the art machine learning (ML) methods often require huge computational resources, motivating the search for  more efficient solutions. 
This has led to a number of new results in optimization \citep{johnson2013accelerating,schmidt2017minimizing}, as well as  the development of 
approaches mixing linear algebra and randomized algorithms \citep{mahoney2011randomized,drineas2005nystrom,woodruff2014sketching,calandriello2017distributed}.

While these techniques are applied to empirical objectives,  in the context of learning it is natural to study how different numerical solutions affect statistical accuracy. 
Interestingly, it is now clear  that  there is a whole set of problems and approaches where computational 
savings do not lead to any degradation in terms of learning performance 
\citep{rudi2015less,bach2017equivalence,bottou2008tradeoffs,sun2018but,li2019towards,rudi2017generalization,calandriello2018statistical}.

Here, we follow this line of research  and study an instance of regularized empirical risk minimization where,  given a fixed high or infinite dimensional
hypothesis space, the search  for a solution is restricted to a smaller, possibly random, subspace. This is equivalent to considering sketching operators \citep{kpotufe2019kernel}, or equivalently regularization  with  random  projections \citep{woodruff2014sketching}.
For infinite dimensional hypothesis spaces, it includes Nystr\"om methods used for kernel methods \citep{smola2000sparse} and Gaussian processes \citep{williams2001using}. Recent works in statistical learning analyzed this approach for smooth loss functions \citep{rudi2015less,bach2013sharp,marteau2019beyond}, whereas here we want to extend these results to convex, Lipschitz but possibly non smooth losses.

In fact,  for the square loss, all relevant quantities (i.e. loss function, excess risk) are quadratic, while the regularized estimator has an explicit expression, allowing for an explicit analysis based on linear algebra and matrix concentration \citep{tropp2012user}. Similarly,  for the logistic loss the analysis can be reduced to the quadratic case through a local quadratic approximation based on the self-concordance property \citep{bach2010self}. Instead,  convex,  Lipschitz but non-smooth losses,  such as the hinge loss, do not allow for such a quadratic approximation and we need to combine empirical process theory \citep{boucheron2013concentration} with results for random projections \citep{rudi2015less}. In particular, fast  rates require considering localized complexity measures \citep{steinwart2008support,bartlett2005local,koltchinskii2006local} and sub-gaussian inputs \citep{koltchinskii2014concentration,vershynin2018high}. We note that, related ideas have been used to extend results for random features from the square loss \citep{rudi2017generalization} to general loss functions \citep{li2019towards, sun2018but}.

Our main interest is in characterizing the relation between computational efficiency and statistical accuracy, while giving a unified study of the Nystr\"om method including both smooth and non-smooth losses. We do so studying the interplay between regularization, subspace size and the different parameters describing the hardness of the problem. 
Our results show that also for  convex, Lipschitz losses  there are settings in which the best known statistical bounds  can be obtained while substantially reducing computational requirements. Interestingly, these effects are relevant but also less marked than for smooth losses. 
In particular, some form of adaptive sampling seems to be needed to ensure no loss of accuracy and achieve sharp learning bounds.
More than that, differently from quadratic loss, also a fast eigenvalues decay of the covariance operator is fundamental to have some computational savings. 

As a byproduct of the aforementioned extension,  we consider the Nystr\"om method in the context of 
binary classification when the relevant error measure is the misclassification risk. 
Indeed, in this case convex loss functions are used as surrogate to 0-1 loss function and the corresponding excess risk bounds can be used to derive bounds on the excess misclassification risk \citep{bartlett2006convexity}. Since the latter is a weaker error measure is then natural to ask how the Nystr\"om method can affect the classification accuracy and to compare different loss functions in this context.  Indeed, our results show that the hinge loss can always achieve a better rate than the one derived by smooth loss functions. 
As regards computational savings, the comparison of the two upper bounds for hinge and quadratic losses suggests that hinge loss is cheaper only for certain classes of  \textit{hard} problems, characterized by a high complexity.

We note that a shorter version of the current paper has appeared in \citep{della2021regularized}.
Here we further develop this analysis, in particular extending our results to square and logistic losses, deriving classification risk bounds under margin assumption and finally compare the obtained results.\\

The rest of the paper is organized as follow.
In Section~\ref{sec: setting}, we introduce the setting and the main notation.
In Section~\ref{sec:ERM}, we review the ERM approach and in Section~\ref{sec: ERM on random subspace} we introduce ERM on random subspaces and our setting.
In Section~\ref{sec:statistical}, we present and discuss the main results and defer the proofs to the appendix.
In Section~\ref{sec:other}, we extend our previous results to smooth losses.
In Section~\ref{sec: 0-1 loss} we analyse the \textit{classification risk} with $0-1$ loss and discuss the comparison between the derived classification bounds from hinge and square losses.
In Section~\ref{sec:experiments}, we collect some simple numerical results.

\paragraph{Main contributions} 
In Section \ref{sec:statistic}, Theorem~\ref{thm:excessrisk-erm-standard} provides a finite sample bound of the excess risk for the regularized ERM in the misspecified case. This result was already established for bounded random variables, but had never been proved in the sub-gaussian case. Our proof in Appendix \ref{sec:radem-compl} also holds for the bounded case. In Section \ref{sec:statistics-Nystrom}, Theorem \ref{prop: constr pb} presents the first bound on the excess risk for generic convex loss functions when using the Nyström method. Under certain eigenvalue decay conditions of the covariance operator, this result leads to significant computational benefits while preserving the optimal rates achieved by standard ERM. However, the bound is suboptimal in terms of the required number of Nyström points compared to known bounds for smooth losses. Our refined analysis in Sections \ref{sec:statistical} and \ref{sec:other} overcomes this issue. Theorem \ref{thm: fast rate A(lambda)}, the most novel aspect of our work, provides the first optimal excess risk bound for the Nyström method for generic convex, possibly non-smooth, losses in the sub-gaussian case. We demonstrate that ERM with the Nyström algorithm can achieve fast rates under suitable eigenvalue decay conditions. Our result matches those obtained using random features but experiences a milder saturation effect, allowing for further improvements in the convergence rate with an increased number of sampled points.
 When adapting this result to smooth losses, Theorem \ref{thm:square loss} finally matches the known optimal results in \citep{rudi2015less}, while extending that analysis also to fast rates. In Section \ref{sec: 0-1 loss} we present a first comparison between the obtained results for hinge and square surrogates when considering classification error and under certain low noise condition.

\section{Setting and notations}  \label{sec: setting}
We start by introducing the learning setting and the assumptions we consider.
Let $\X$ be a real separable Hilbert space with scalar product
 $\scal{\cdot}{\cdot}$ and $\Y$ a Polish space, i.e a  separable complete
  metrizable  topological space. Let $(X,Y)$ be a pair of random variables taking value in $\X$ and $\Y$,
respectively, and denote by $P$ their joint distribution defined on the Borel $\sigma$-algebra of $\X\times \Y$.
Let $\ell:{\Y} \times \R\to [0,\infty]$ be a loss function and
\[
L: \X\to [0,\infty) \qquad \LL(w) =
\int_{\X\times\Y}\ell({y},\scal{w}{{x}})dP(x,y) = \E[\ell(Y,\scal{w}{X})] 
\]
the corresponding  expected risk. Given $w\in\X$,
$\ell({y},\scal{w}{{x}})$ can be viewed as the error made in
predicting $y$ with the linear function $\scal{w}{x}$, while $L(w)$ can be interpreted
as the expected loss on future data.

In this setting, we are interested in solving the problem
\begin{equation}
\label{def of the problem}
\inf_{w \in \X} \LL(w),
\end{equation}
when {the distribution} $P$ is only known through a training set $(X_1,Y_1), \dots, (X_n,Y_n)$ composed by $n$ copies of $(X,Y)$. Since the data are finite, we
cannot expect to solve the problem exactly. Given an empirical approximate
solution  $\wh w$, a natural error measure is the the excess risk $$
\LL(\wh w) - \inf_{w\in\X} L(w),$$ which is a random variable through
its dependence on $\wh w$, and hence on the data $
(X_i,Y_i)_{i=1}^{n}$.

We make the following assumptions on the data distributions and the loss.
\begin{ass}  \label{ass: sub-gaussian}
There exists $C >0$ such that $X$ is a $C$-sub-gaussian centered random vector.
\end{ass}
\noindent We recall that a random vector $X$ taking value in a Hilbert space  $\H$ is called $C$-sub-gaussian if
	\begin{equation}
		\label{def: subgauss}
		\|\langle X, u\rangle\|_p \leq
                C\sqrt{p}\|\langle X, u\rangle\|_{2} \quad\quad
                \forall u\in\H, p\geq 2,
              \end{equation}
where $\|\langle X, u\rangle\|_{p}^p= \mathbb E\left[|\langle
  X,u\rangle|^p \right]$ \citep{koltchinskii2014concentration}.  Note that \eqref{def: subgauss} implies
that for any vector $u\in \H$, the 
projection $\scal{X}{u}$ is a real sub-gaussian random variable
\citep{vershynin2018high},  but this latter condition is not
sufficient since the sub-gaussian norm  
\begin{equation}\label{eq:13}
	\Vert\scal{X}{u}\Vert_{\psi_2}=\sup_{p\geq 2} \frac{\|\langle X, u\rangle\|_p}{\sqrt{p}}
\end{equation}
should be bounded from above by the $L_2$-norm $\|\langle X,
u\rangle\|_{2}$. In particular, we note that, in general,
bounded random vectors in $\H$ are not sub-gaussian. 

Under the above conditions, $\E[ \nor{X}^2]$ is finite, so that 
the (non-centered) covariance operator
\[ \Sigma:\X\to \X \qquad \Sigma=\E[X\otimes X] \] is a trace-class 
positive operator. We define the effective rank of $\Sigma$ as
\begin{equation}
  \label{eq:1}
  r_\Sigma=\frac{\tr{\Sigma}}{\|\Sigma\|}
\end{equation}
where  $\operatorname{Tr}
\Sigma= \E[ \nor{X}^2]$ is the trace of $\Sigma$.

We define the so-called effective
dimension \citep{zhang2005learning,caponnetto2007optimal}, for $\alpha>0$, as
\begin{align}
&d_\alpha= \text{Tr}((\Sigma+\alpha I)^{-1}\Sigma) =\sum_j
                \frac{\sigma_j}{\sigma_j+\alpha}  
\end{align}
where  $(\sigma_j)_j$ are the strictly positive eigenvalues of
$\Sigma$,  with   eigenvalues  counted with respect to their
multiplicity and ordered in a non-increasing way, and $(u_j)$ is  the
corresponding family of eigenvectors. Note that  $d_\alpha$ is always finite since $\Sigma$ is trace class.

The next assumption is on the loss function. 
\begin{ass}[Lipschitz loss]\label{ass:loss}
The loss function $\ell: \Y \times \R\to {[0,\infty)}$ is convex and Lipschitz in its second argument, namely there exists $G>0$ such that 
\begin{equation}
    \label{eq:5}
  |\ell(y, a)-\ell(y, a')|\le G|a-a'| \quad \forall y \in \Y \quad \text{and}\quad a,a'\in \R.
\end{equation}
We also assume
$\ell_0= \sup_{y\in\Y} \ell(y,0)$ for all $y \in \Y$. 
\end{ass}
 Under the above condition, the expected risk $L(w)$ is finite, convex and Lipschitz.

We next provide some relevant examples.
The classical linear regression problem corresponds to the choice $\X=\R^d$
and $\Y=\R$. Another example is provided by kernel methods \citep{steinwart2008support}.
\begin{example}\label{ex kernel}
The input variable $X$ takes value in an abstract measurable set
$\mathcal X$.  We fix a reproducing
kernel Hilbert space on $\mathcal X$ with (measurable) reproducing kernel $K:
\mathcal X\times \mathcal X\to \R$. By mapping the inputs from ${\mathcal X}$ to
  $\H$ through the feature map 
\[\X\ni x\mapsto  K(\cdot,x)=K_x\in \X,\]
we can always identify $X$ with $K_X$, which is a random
variable taking value in $\X$. 
 \end{example}
 We now provide some examples of loss functions.
\begin{example}
  The main examples are 
  \begin{enumerate}[(a)]
  \item hinge loss:
    \begin{equation}
\ell(y,a)= |1-ya|_+=\max\{0,1-ya\}\qquad \Y = \{ -1, 1 \}\label{eq:9}
\end{equation}
 which is convex, but non-differentiable  with $G=1$ and $\ell_0=1$;
\item logistic loss 
  \begin{equation}
    \label{eq:10}
    \ell(y,a)= \log(1+e^{-ya}) \qquad \Y = \{ -1, 1 \}
\end{equation}
which is convex and differentiable  with $G=1$ and $\ell_0=\log 2$;
\item square loss
  \begin{equation}
    \label{eq:11}
    \ell(y,a)=\left(y-a\right)^2 \qquad \Y\subseteq [-M,M],
  \end{equation}
  which is convex and differentiable with $G_{\text{loc}}=2M$ (locally Lipschitz with $a\in[-M,M]$) and $\ell_0=M^2$.
  \end{enumerate}
\end{example}
 For classification, where $\Y = \{ -1, 1 \}$, a natural loss function is
given by the $0-1$ loss
\[
\ell_{0-1}(y,a):=\mathbb{1}_{(-\infty,0]}(y\;\operatorname{sign}a),
\]
which is not convex.

 In the next subsection we introduce the main notation.

\subsection{Notation}
For the reader's convenience we collect the main notation we introduced in
the paper. We denote with the ``hat'', e.g. $\wh \cdot$, random quantities depending on the data. Given a linear operator $A$ we denote by $A^\top$ its adjoint (transpose for matrices).
For any $n\in \N$, we denote by $\scal{\cdot}{\cdot}_n, \nor{\cdot}_n$ the inner product and Euclidean norm in $\R^n$. 
Given two quantities $a,b$ (depending on some  parameters), the
notation $a \lesssim b$, or $a = O (b)$ means that there exists a
constant $C$ such that $a \leq C b$. We denote by $P_X$ the marginal distribution of $X$ and by
$P(\cdot|x)$ is the conditional distribution of $Y$ given $X=x$. 
The conditional probability is well-defined since $\X$ is separable
and $\Y$ is a Polish space  \citep{steinwart2008support}. 
%
\begin{table}[h]\label{tab:1}
	\caption{Definition of the main quantities used in the paper}
	\centering
	\begin{tabular}{ll}
		\toprule
		\multicolumn{1}{c}{}   &\multicolumn{1}{c} {Definition}\\
		\cmidrule(r){1-1}	\cmidrule(r){2-2}
		$L(w)$				& $\;\int_{\X\times\Y}\ell({y},\scal{w}{{x}})dP(x,y)$\\
		$L_\la (w)$		& $\; L(w)+\la\|w\|^2$\\
		$\wh L (w)$		& $\; n^{-1} \sum_{i=1}^n \ell(y_i, \scal{w}{x_i})$\\
		$\wh L_\la (w)$		& $\; \wh L(w)+\la\|w\|^2$\\
		$w_*$                & $\;\argmin_{w\in\X}L(w)$                          \\
		$w_\la$              & $\;\argmin_{w\in\X}L_\la(w)$                      \\
		$\wh w_\la$          & $\;\argmin_{w\in\X}\wh L_\la(w)$                  \\
		$f_*(x)$             & $\;\argmin_{a\in \R}\int_{\Y} \ell(y, a) dP(y|x)$\\
		$\BB_m$ & $\;\mathrm{span}\{\wt  x_1, \dots, \wt  x_m\}$\\
				$\beta_{\la,m}$    & $\;\argmin_{\beta\in\BB_m}L_\la(\beta)$             \\
		$\wh\beta_{\la,m}$ & $\;\argmin_{\beta\in\BB_m}\wh L_\la(\beta)$         \\
		$\mathcal{P}_m$ &  \;projection operator onto $\BB_m$\\
		\bottomrule
	\end{tabular}
\end{table}

\section{Empirical risk minimization}\label{sec:ERM}

A classical  approach to derive empirical  solutions is based on replacing the expected risk with the empirical risk $\wh L:\X\to [0,\infty)$ defined for all $w\in \X$ as
$$
\wh L (w) = \frac 1 n \sum_{i=1}^n \ell(y_i, \scal{w}{x_i}).
$$
We consider the (regularized) empirical risk minimization (ERM) based on the solution of the problem,
\begin{equation}\label{perm}
\min_{w\in \X} \wh L_\la (w), \qq \qq\wh L_\la (w)= \wh L (w)+\la \nor{w}^2,
\end{equation}
where $\la>0$ is a positive regularization parameter. Since  $\wh L_\la:\H \to \R$ is continuous and  strongly convex,
there exists a  unique minimizer $\wh w _\la$  and, by the representer
theorem \citep{wahba1990spline,scholkopf2001generalized}, there exists $c
 \in \R^n$ such that
\begin{equation}\label{repre}
\wh w_\la = \wh  X^{\top} c \in  \mathrm{span}\{x_1,\dots, x_n\},
\end{equation}
where $\wh  X:\X\to \R^n$ denotes the input data matrix
\[
  (\wh X w)_i = \scal{w}{x_i} \qquad i=1,\ldots,n,\quad w\in\X.
  \]
The explicit form of the coefficient vector $c$  depends on
the considered loss function. In Section~\ref{sec:comp} we briefly
recall some possible  approaches to compute $c$, whereas in
Section~\ref{sec:statistic} we analyze the statistical properties of the above estimator.

	\begin{example}[Representer theorem for kernel machines]
		\label{ex: kernel span}
		In the context of kernel methods, see Example~\ref{ex kernel}, the above discussion, and in particular~\eqref{repre}, can be easily adapted. Indeed, the parameter $w$ corresponds to a function $f\in\H$ in the RKHS, while the norm $\|\cdot\|$ is the RKHS norm ${\|\cdot\|_\H}$. Eq. \eqref{repre} simply states that there exists constants $c_i$ such that
		 the solution of the regularized ERM can be written as
		 $\wh f_\lambda(x)=\sum_{i=1}^n K(x,x_i) c_i\in \mathrm{span}\{K_{x_1},\dots, K_{x_n}\}$.
	\end{example}
      
\subsection{Computational  aspects }\label{sec:comp}
Problem~\eqref{perm} can be solved in many ways and  we provide below some basic considerations.  If $\X$ is finite dimensional, gradient methods can be used. 
For example, the  subgradient method \citep{boyd2004convex} applied to~\eqref{perm} gives, for some suitable $w_0$ and step-size sequence $(\eta_t)_t$, 
\begin{equation}\label{subgsvm}
w_{t+1}= w_t - \eta_t \left (\frac 1 n  \sum_{i=1}^n y_{i} x_{i} 
g_i (w_t)
+2\la w_t\right),
\end{equation}
where for all ${i=1,\dots,n}$, {$g_i(w)\in \partial \ell(y_i, \scal{w}{x_i})$ is the subgradient of the  map $a\mapsto \ell(y_i,a)$ evaluated at $a=\scal{w}{x_i}$, see  \citep{rockafellar1970convex}.}
The corresponding per iteration cost is $O(nd)$ in time and memory. 
A more refined accelerated version of this algorithm can be found in \citep{tanji2023snacks}.
 When $\X$ is infinite dimensional a  different approach is possible, provided $\scal{x}{x'}$ can be computed for all $x,x'\in \X$. For example, it is easy to prove by induction  that the iteration in~\eqref{subgsvm}  satisfies  $ w_{t}=\wh X^\top c_{t+1}$, with
\begin{equation}\label{subgsvm1}
c_{t+1}= c_t - \eta_t \left (\frac 1 n  \sum_{i=1}^n y_{i} e_{i}
 g_i(\wh X^\top c_t)
 +2\la c_t \right),
\end{equation}
and where $e_1, \dots, e_n$ is the canonical basis in $\R^n$. The cost of the above iteration is $O(n^2  C_K)$ for computing $g_i(w) \in \partial \ell\left(y_i, \scal{\wh X ^\top c_t}{x_{i}}\right)= \partial \ell\left(y_i,\sum_{j=1}^n \scal{x_j}{x_i}(c_t)_i\right)$, where $C_K$ is the cost of evaluating the inner product. Also in this case, a number of approaches can be considered, see e.g. \citep[Chap.11]{steinwart2008support} and references therein. We illustrate the above ideas for the hinge loss.
\begin{example}[Hinge loss \& SVM]
 Problem~\eqref{perm} with the hinge loss corresponds to 
support vector machines for classification. With this choice, $\partial \ell(y_i,\scal{w}{x_i})=0$ if $ y_i\scal{w}{x_i} > 1$,  $\partial \ell(y_i,\scal{w}{x_i})=  [-1,0] $ if $ y_i\scal{w}{x_i} = 1$ and  $\partial \ell(y_i,\scal{w}{x_i})=-1$ if $ y_i\scal{w}{x_i} < 1$.
%
%
%
In particular,  in~\eqref{subgsvm1} we can take
 $
 g_i (w)=-
 \indic{y_i\scal{w}{x_i}\le 1}.
 $
\end{example}

\subsection{Statistical analysis}\label{sec:statistic}
In this section, we summarize the main statistical properties of the
regularized ERM under the sub-gaussian hypothesis in Assumption~\ref{ass: sub-gaussian}. In the following theorem
we provide a finite sample bound on the excess 
risk of $\wh w_\lambda$  without assuming the existence of $w^*$ (which will instead be assumed in Theorem~\ref{thm:regularized-full-space} via Assumption~\ref{ass:best}). Towards this end,
we introduce the approximation error,
\begin{align}\label{eq:24} 
	\mathcal{A} (\lambda) 
	&= \inf_{w \in \H} [L (w) + \lambda \| w \|^2] - \inf_{w\in\H} L(w) 
	\,.
\end{align}
Note that,  if $w_*$ exists, then ${\cal A}(\la)\leq\la \nor{w_*}^2$. More generally, the
approximation error decreases with $\la$ and learning rates can be
derived assuming a suitable decay.
\begin{theorem}\label{thm:excessrisk-erm-standard}
Under Assumptions~$\ref{ass: sub-gaussian}$ and~$\ref{ass:loss}$, fix 
$\la>0$ and $0<\delta<1$. Then, 
  with probability at least  $1- \delta$,
		\begin{align}
				\label{eq:excessrisk-standard-general}
	L (\wh w_\lambda) - \inf_{w\in\H} L(w)
	< &2 \mathcal{A} (\lambda) + \frac{D^2G^2C^2 \|\Sigma\|((\sqrt{r_\Sigma}+K)^2+(  \sqrt{r_\Sigma}+\sqrt{\log (1/\delta)})^2)}{4\lambda n}
	+ \nonumber\\
	&+ \frac{DGC(\sqrt{r_\Sigma}+K) \|\Sigma\|^{\frac{1}{2}} +
	D\ell_0(K+\sqrt{\log (1/\delta)})}{\sqrt{n}} .
\end{align}
where $C$ and $G$ are the constants defined respectively in~\eqref{def: subgauss} and~\eqref{eq:5}, $D$ is a numerical constant and
  \[
K = K_{\lambda, \delta} = \sqrt{\log (1 + \log_2 ( 3 +
	{\ell_0/\lambda} ) ) + \log (1/\delta)}
    = O ( \sqrt{\log \log (3 + \ell_0 / \lambda) + \log (1/\delta)} ).
  \]

\end{theorem}
 The theorem can be easily extended to non-centered
sub-gaussian variables. Notice that the same result is well known for bounded random
variables; see,
for example \citep{steinwart2008support,shalev2010learnability}. We are not aware of a reference for the sub-gaussian case. In Appendix~\ref{sec:radem-compl} we provide a
simple self-contained proof,  which also holds true for the bounded
case~\citep{della2021regularized}. It is based on the fact that  the excess
risk bound for regularized ERM arises from a trade-off between an
estimation and an approximation error. Similar bounds in
high-probability for ERM constrained to the ball of radius $R \geq \|
w_* \|$ can be obtained through a uniform convergence argument over
such balls, see
\citep{bartlett2002rademacher,meir2003generalization,kakade2009complexity}. 
To apply this line of reasoning to regularized ERM, one could in principle use
the fact that by Assumption~\ref{ass:loss}, $\| \wh w_\lambda \| \leq
\sqrt{\ell_0/\lambda}$ (see Appendix) \citep{steinwart2008support}, but
this would yield a suboptimal dependence in $\lambda$.
Finally, a similar rate, though only in
expectation, can be derived through a stability argument
\citep{bousquet2002stability,shalev2010learnability}.

 The bound~\eqref{eq:14} shows that the learning rate depends on some
a-priori assumption on the distribution that allows control of the approximation error
$\mathcal{A} (\lambda) $. The simplest assumption is that the
  best in the model exists.   
\begin{ass}\label{ass:best}
  There exists $w_*\in \X$ such that $\displaystyle{L(w_*)= \min_{w\in \X} L(w)}$.
\end{ass} 
 Under the above condition, we have the following result.

\begin{theorem}\label{thm:regularized-full-space}
  Under  Assumption~\ref{ass: sub-gaussian},~\ref{ass:loss},
  and~\ref{ass:best}, take $\la>0$ and $0<\delta<1$, then
with probability at least $1-{\delta}$:
\begin{align}
		\label{eq:excessrisk-standard-best}
	L (\wh w_\lambda)-L(w_*) 
	<& \lambda \| w_* \|^2 + 		\frac{D^2G^2C^2(\sqrt{r_\Sigma}+K)^2 \|\Sigma\|}{4\la n} +\nonumber\\
	&+\frac{DGC(\sqrt{r_\Sigma}+K) \|\Sigma\|^{\frac{1}{2}} +
		D\ell_0 (K+\sqrt{\log (8/\delta)})}{\sqrt{n}} +\nonumber\\
	&+ \frac{DG C  \|\Sigma\|^{\frac{1}{2}} \nor{w_*}\big(  \sqrt{r_\Sigma}+\sqrt{\log (8/\delta)}
		\big)}{\sqrt{n}}.
\end{align}
Hence, let $\lambda=\la_n \asymp (DGC\nor{\Sigma}^{1/2} / \| w_* \|)
\sqrt{\log (1/\delta) / n}$ with high probability:
\begin{equation}
  \label{eq:17}
  L (\wh w_{\lambda_n})- L(w_*) =   O ( \| w_* \|\sqrt{{\log(1/\delta)}/n} ),
\end{equation}
up to a  $\log \log n$ terms.
 \end{theorem}  
 As above, the proof  is given in
Appendix~\ref{sec:radem-compl}. In a nutshell, what Theorem.~\ref{thm:regularized-full-space}  shows is
that, with high probability,
\[
L(\wh w_\la) -\inf_{w\in\H}L(w)\lesssim \frac{1}{\la
	n } + \la \nor{w_*}^2, 
\]
provided that the  best in model $w_*\in \X$ exists.
With the choice  $\lambda \asymp \sqrt{ 1/ n}$ it holds that 
\begin{equation}
	L (\wh w_\lambda)- \inf_{w\in \H} L(w) =   O ( \sqrt{1/n} ) \label{eq:38},
\end{equation}
which provides a benchmark for the results in the next sections. 

\begin{remark}
	Note that for all $w \in \H$ with $\| w \| \leq R$, 
	\[\mathcal{A} (\lambda) \leq L (w) + \lambda \| w \|^2 - \inf_\H L
	\leq L (w) - \inf_\H L + \lambda R^2\]
	hence
	$\mathcal{A} (\lambda) \leq \inf_{\| w \| \leq R} L (w) - \inf_\H L
	+ \lambda R^2$ and
	\begin{align*}
		L (\wh w_\lambda) - \inf_{w\in\H} L(w)
		<& 2 \Big( \inf_{\| w \| \leq R} L (w) - \inf_\H L \Big) + 2
		\lambda R^2 + \\
		&+   \frac{D^2G^2C^2 \|\Sigma\|((\sqrt{r_\Sigma}+K)^2+(  \sqrt{r_\Sigma}+\sqrt{\log (8/\delta)})^2)}{4\lambda n}+\\
		&+
		\frac{DGC(\sqrt{r_\Sigma}+K) \|\Sigma\|^{\frac{1}{2}} +
			DK\ell_0+D\ell_0 \sqrt{\log (8/\delta)}}{\sqrt{n}}	.
	\end{align*}
	Letting $\lambda \asymp 1/ (R \sqrt{n})$, this gives
	$L (\wh w_\lambda) - \inf_{w\in\H} L(w) \leq 2 (\inf_{\| w \| \leq
		R} L (w) - \inf_\H L) + O ( R/\sqrt{n} )$ with high probability.
\end{remark}

\section{ERM on random subspaces} \label{sec: ERM on random subspace}

As explained in the introduction, though the ERM estimator $\wh
w_{\lambda}$ achieves sharp rates, from a computational point of
view, it can be very expensive for large datasets. To overcome this issue,
we study a variant of ERM  based on considering a subspace $\BB\subset \X$ and the corresponding regularized ERM problem,
\begin{equation}\label{sperm}
\min_{\beta \in \BB}\wh L_\la (\beta),
\end{equation}
with $\wh\beta_\lambda$ as the unique minimizer.
As clear from~\eqref{repre}, choosing $\BB= \X_n = \mathrm{span}\{ x_1, \dots, x_n\}$ is not a restriction and yields the same solution  as considering~\eqref{perm}. 
From this observation, a natural choice is to consider for $m\le n$,
\begin{equation}\label{randspace}
\BB_m=\mathrm{span}\{\wt  x_1, \dots, \wt  x_m\}
\end{equation}
with $\{\wt  x_1, \dots, \wt  x_m\}\subset \{ x_1, \dots,  x_n\}$ being a
subset of the input points, called the Nystr\"om points. We  denote by ${\mathcal P}_m= {\mathcal P}_{{\BB_m}}$ the corresponding projection and by  $\wh{\beta}_{\la,m}$ the unique minimizer of $\wh L_\la$ on $\BB_m$, i.e.
\begin{equation}
  \label{eq:37}
\wh{\beta}_{\la,m}=\operatornamewithlimits{argmin}_{\beta \in \BB_m}\wh L_\la (\beta).
\end{equation}
 In the rest of the paper, all the results are valid when the Nystr\"om points are selected using approximate leverage scores (ALS) sampling. 
Recall that leverage scores are defined as 
\citep{drineas2012fast}:
\begin{equation}
	{l_i}(\alpha)=\scal{x_i}{(\wh  X\wh  X ^\top x+\alpha In )^{-1} x_i}
	\qquad  i=1, \dots, n\label{eq:3}
\end{equation}
where $\alpha>0$. Since in practice the leverage scores $l_i(\alpha)$ are expensive to compute, approximations have been considered
\citep{drineas2012fast,cohen2015uniform,alaoui2015fast,rudi2018fast}. In particular,
we consider approximations of the form described in the following definition.

\begin{definition}[Approximate leverage scores
	sampling (ALS)]\label{def:approx_lev_scores} 
	Let $(l_i(\alpha))_{i=1}^n$ be the leverage scores~\eqref{eq:3}.  Given $ \alpha_0>0$ and $T\geq 1$, we say
	that a family $(\hat{l}_i (\alpha))_{i=1}^n$ is $(T,\alpha_0)$-approximate leverage
	scores with confidence $\delta\in (0,1)$ if  
	\begin{equation}
		\frac{1}{T} l_i(\alpha)\leq \hat{l}_i(\alpha) \leq T
		l_i(\alpha), \hspace{0.6cm} \forall i \in
		\{1,\dots,n\},\;\;\alpha\geq  \alpha_0 ,
	\end{equation}
	with probability at least $1-\delta$. Under this condition, the approximate leverage   scores (ALS)   sampling 
	selects the Nystr\"om points 
	$\{\tilde{x}_1,\dots,\tilde{x}_m\}$  from the training set $\{ x_1, \dots,  x_n\}$ 
	independently with replacement  and with probability
	$Q_\alpha(i)=\hat{l}_i(\alpha)/\sum_j \hat{l}_j(\alpha)$.  
\end{definition}

 We now focus  on the computational benefits of considering
ERM on random subspaces and we analyze the corresponding statistical
properties in
Section~\ref{sec:statistics-Nystrom}. 

\subsection{Computational aspects} 
The choice of $\BB_m$ as in eq.~\eqref{randspace} allows for improved computations with respect to eq.~\eqref{repre}. Indeed,  $\beta\in \BB_m$ if and only if $\exists b\in \R^m $ such that. $\beta= \wt X^\top b$, with $\wt X: \X\to \R^m$ being the matrix with rows the chosen Nystr\"om points. Then, we can replace the problem in~\eqref{sperm} with
\begin{equation}
\label{eq:erm_nystrom_pr}
\min_{b\in \R^m}\frac 1 n \sum_{i=1}^n \ell\left(y_i, \scal{\wt X^\top b}{x_i}\right) +\la \scal{b}{\wt X\wt X^\top b}_m,
\end{equation}
where $\scal{\cdot}{\cdot}_m$ is the scalar product in $\R^m$.
Further, since $\wt X\wt X^\top\in \R^{m\times m}$ is symmetric and positive semi-definite, we can derive a formulation close to that in~\eqref{perm}, considering 
the reparameterization $a= (\wt X\wt X^\top)^{1/2} b$ which leads to,
\begin{equation}\label{embedsvm}
\min_{a\in \R^m}\frac 1 n \sum_{i=1}^n \ell\left(y_i, \scal{a}{ \xx_i }_m\right) +\la \nor{a}_m^2,
\end{equation}
where for all $i=1, \dots, n$,  we defined the embedding $x_i \mapsto \xx_i= ((\wt X\wt X^\top)^{1/2})^\dagger \wt X x_i$ and with $\|\cdot\|_m$ we denote the Euclidean norm in $\R^m$. Note that the computation of the embedding $x_i\to \xx_i$ only involves the inner product in $\X$ and can be computed in 
$O(m^3 + n m^2 C_K)$ time.  The subgradient method for~\eqref{embedsvm} has a cost of $O(nm)$ per iteration. 
In summary, we obtained that the cost for ERM on subspaces is
$
O(n m^2 C_K+ nm\cdot \# \text{iter})
$
and should be compared with the cost of solving~\eqref{subgsvm1} which is 
$
O( n^2 C_K+ n^2 \cdot \# \text{iter}).
$
The corresponding costs to predict new points are $O(mC_K)$ and $O(n C_K)$, while the memory requirements are $O(mn)$ and $O(n^2)$, respectively. Clearly, memory requirements can be reduced by recomputing things on the fly. 
As clear from the above discussion, computational savings can be drastic as long as $m<n$, and the question arises of how this affects the corresponding statistical accuracy. The next section is devoted to this question.
\begin{remark}
Formulations in eq.~\eqref{eq:erm_nystrom_pr} and eq.~\eqref{embedsvm} of the reduced problem can be equivalently written as problem~\eqref{perm} using the projector operator $\mathcal{P}_m$, i.e. 
$\min_{w \in \H}	\wh L(\mathcal{P}_m w)+\la\|w\|^2$. Note that when transitioning to the approximated problem, the Lipschitz constant can only decrease, while the smoothness does not change for sufficient large values of $\lambda$. Consequently, solving the approximated problem in \eqref{embedsvm} through subgradient descent requires a number of iterations which is smaller than or equal to those needed for solving~\eqref{perm}. For simplicity, this was not considered when discussing the computational benefits of the approximated method, but it further strengthens our point.
\end{remark}
\begin{example}[Kernel methods and Nystr\"om approximations]
  \label{ex kernel emb}
  Again, following Example~\ref{ex kernel} and Example \ref{ex: kernel span}, our setting can be easily specialized to kernel methods, where $\beta \in \BB_m=\mathrm{span} \{\wt x_1,\dots,\wt x_m\}$ is replaced by $\wt f(x)=\sum_{i=1}^m K(x,\wt x_i)\wt c_i \in \mathrm{span} \{K_{\wt x_1},\dots, K_{\wt x_m}\}$, while the embedding $x_i \mapsto \xx_i= ((\wt X\wt X^\top)^{1/2})^\dagger \wt X x_i$ becomes $x_i \mapsto \xx_i= (\wt K^{1/2})^\dagger (K(\wt x_1,x_i),\dots, K(\wt x_m,x_i))^\top$, with ${\wt K_{i,j}= K(\wt x_i, \wt x_j)}$.
\end{example}

\subsection{Statistical analysis}\label{sec:statistics-Nystrom}

In this section, we will show, under a suitable polynomial (or exponential) decay condition on the spectrum of $\Sigma$ (see~\eqref{eq:2}), that,
\[
L(\wh{\beta}_{\lambda,m})-L(w_*) \lesssim \frac{\sqrt{\log(1/\delta)}}{\sqrt{n}},
  \]
provided that the best in model  $w_*\in \X$ exists, see Assumption~\ref{ass:best}, and, up to log terms, 
  \[
\lambda\asymp\frac{1}{\sqrt{n}}, \qquad m\gtrsim n^{p},
    \]
    where the exponent $p$ controls how strong the polynomial decay condition is (see~\eqref{eq:2}).
Compared to the results for exact ERM in~\eqref{eq:38}, we get the same convergence rate up to a log factor, but the computational complexity of the algorithm  is dramatically reduced. For example, if $p=1/2$ we only need $m\simeq\sqrt{n}$ Nystr\"om points.  A similar result is obtained for exponential decay in which case we can take $m\simeq \log^2n$ Nystr\"om points.  
We observe that under the above decay conditions on the spectrum of $\Sigma$, classical ERM algorithm achieves fast rates. In Section~\ref{sec:statistical}, we will show that also randomized ERM can achieve fast rates, but this will require a more refined analysis.

 We now state the detailed results. We recall that the Nystr\"om
points are sampled according to ALS, see Definition~\ref{def:approx_lev_scores}.

\begin{theorem}\label{thm:1}
Under Assumption~\ref{ass: sub-gaussian},~\ref{ass:loss} and~\ref{ass:best}, fix $\alpha,\la,\delta>0$. Then, with probability at least $1-\delta$:
	\begin{align}
	\label{reg pb f_H main bound}
	L(\wh{\beta}_{\la,m})-L(w_*) & \lesssim
	\frac{\log (1/\delta)}{\lambda n}+\frac{ \|w_*\|\sqrt{\log (1/\delta)}}{\sqrt{n}}+\sqrt{\alpha} \|w_*\|+\lambda\Vert  w_*\Vert^2
\end{align}
up to $\log( \log(1/\lambda))$ terms and 
provided that $n\gtrsim d_\alpha\vee \log(1/\delta)$ and $ m\gtrsim d_\alpha\log(\frac{2n}{\delta})  .$
\end{theorem}
 The proof of Theorem~\ref{thm:1} with explicit constants is given
in Appendix~\ref{proofthmbasic}, here we only add some comments.  Note that
    \begin{align}
	d_\alpha & =\int \scal{w}{(\Sigma+\alpha I)^{-1}w} dP_X(w)  \leq
	\int \nor{w}^2 \nor{(\Sigma+\alpha I)^{-1}} dP_X(w) \leq
	\alpha ^{-1}\E[\nor{X}^2]\lesssim \alpha^{-1}, \label{eq:6bis}
\end{align}
using the fact that the second moment of a sub-gaussian variable is finite. Using the above bound, we get that, up to log terms, 
with high probability
$$
L(\wh{\beta}_{\la,m})-L(w_*) \lesssim
\frac{\log (1/\delta)}{\lambda n}+\frac{ \|w_*\|\sqrt{\log (1/\delta)}}{\sqrt{n}}+\sqrt{\alpha} \|w_*\|+\lambda\Vert  w_*\Vert^2, 
$$ 
provided that $m\gtrsim \alpha^{-1}$. 
With the choice
\[
\lambda \asymp \frac{1}{\nor{w_*}\sqrt n}\;, \qquad \alpha \asymp 1/ n 
  \]
  we get that with high probability
	\begin{equation}
	L(\wh{\beta}_{\la_n,m})-L(w_*)\lesssim \frac{\Vert w_*
          \Vert\sqrt{\log(1/\delta)}}{\sqrt{n}} 
      \end{equation}
up to log factors in $n$ and with $m\gtrsim n$.  

 Despite of the fact that the rate is optimal (up to the logarithmic
term), the required number of subsampled 
points is $m\gtrsim n$, so that the procedure is not
effective. However, the following proposition shows that under  a
decay conditions on the spectrum of the covariance operator $\Sigma$, the
ALS method becomes computationally efficient.  
We assume one of the following two conditions:
  \begin{enumerate}[a)]
  \item polynomial decay: there exists $p\in(0,1)$ such that 
      \begin{equation}
      \sigma_j\lesssim
      j^{-\frac{1}{p}} \label{eq:2}
    \end{equation}
  \item exponential decay: there exists $\beta>0$ such that 
    \begin{equation}
      \label{eq:19}
      \sigma_j\lesssim e^{-\beta j}.
    \end{equation}
  \end{enumerate}
Under the above conditions, we following result holds.
\begin{theorem}
	\label{prop: constr pb}
 Under the assumptions of Theorem~$\ref{thm:1}$, fix $\delta>0$,  with probability at
 least $1-\delta$:
   		\begin{align}
 	L(\wh{\beta}_{\la,m})-L(w_*) & \lesssim
 \frac{\log (1/\delta)}{\lambda n}+\frac{ \|w_*\|\sqrt{\log (1/\delta)}}{\sqrt{n}}+\sqrt{\alpha} \|w_*\|+\lambda\Vert  w_*\Vert^2
   \end{align}
     and, with the choice
     	\begin{enumerate}[(a)]
     		 \item for the polynomial decay~\eqref{eq:2} 
     		 \[
     		 \lambda\asymp \frac{\Vert w_* \Vert\sqrt{\log(1/\delta)}}{\sqrt{n}}, \qquad
     		 \alpha\asymp\frac{\log(1/\delta)}{n}, \qquad m\gtrsim n^{p},
     		 \]
     		 \item for the exponential decay~\eqref{eq:19}
     		  \[
     		 \lambda\asymp \frac{\Vert w_* \Vert\sqrt{\log(1/\delta)}}{\sqrt{n}}, \qquad
     		 \alpha\asymp\frac{\log(1/\delta)}{n},  \qquad m\gtrsim\log^2 n,
     		 \]
     		\end{enumerate}
     		 then, it holds that
     		 \begin{equation}
     		   \label{eq:12}
     		   	L(\wh{\beta}_{\la_n,m})-L(w_*) \lesssim \frac{\Vert w_* \Vert\sqrt{\log(1/\delta)}}{\sqrt{n}}.
     		 \end{equation}

\end{theorem}
The proof of the above result is given in
Appendix~\ref{proofthmbasic}.
Theorem~\ref{prop: constr pb} is already known for square loss \citep{rudi2015less} and for smooth loss functions \citep{marteau2019beyond} under the assumption that the input $X$ is bounded. 
However, note that our bound on the number of Nystr\"om points is, in the case of square loss, worse than the bound in~\citep{rudi2015less}.
In Section~\ref{sec:other}, by specializing the analysis for smooth losses and exploiting the special structure of the quadratic loss, we obtain the right estimate of Nystr\"om points matching the result in~\citep{rudi2015less}. 

Theorem~\ref{prop: constr pb} shows that for an arbitrary convex, possibly non-smooth, loss function, leverage scores sampling can lead to better results depending on the spectral properties of the covariance operator.
Indeed, if there is a fast eigendecay, then using leverage scores  and a subspace of dimension $m<n$, one can achieve the same rates as exact ERM.
For fast eigendecay ($p$ small), the subspace dimension can decrease dramatically.
For example,  considering $p=1/2$, then the choice $m\simeq \sqrt{n}$ is enough.
These observations are consistent with recent results for random features \citep{bach2017equivalence,li2019towards,sun2018but}, while they seem new for ERM on random subspaces.
Compared to random features, the proof techniques present similarities but also differences due to the fact that in general random features do not define subspaces.
Finding a unifying analysis would be interesting, but it is left for future work.
Also, we note that uniform sampling can have the same properties as leverage scores sampling, if $d_\alpha \asymp d_{\alpha, \infty}$, where $d_{\alpha,\infty}:= \sup_{w\in\supp(P_X)} \scal{w}{(\Sigma+\alpha I)^{-1}w}$, see \citep{rudi2015less}.
This happens under strong assumptions on the eigenvectors of the covariance operator, but can also happen in kernel methods with  kernels corresponding to Sobolev spaces \citep{steinwart2009optimal}.
With these comments in mind, next, we focus on random subspaces defined by leverage scores sampling and show that the assumption on the eigendecay not only allows  for smaller subspace dimensions, but can also lead to faster learning rates. 

\begin{remark} \label{remark:proj}
Following \citep{rudi2015less},  other choices of $\BB\subseteq \H$ are possible.
Indeed, for any $q\in \N$ and   $z_1, \dots, z_q\in  \H$ we could
consider $\BB= \mathrm{span}\{ z_1, \dots,  z_q\}$ and derive a
formulation as in~\eqref{embedsvm} replacing $\wt X$ with the matrix
$Z$ with rows $z_1, \dots, z_q$.  We leave this discussion  for future
work. We simply state the following result where
\begin{equation}\label{proj}
  \mu_{\BB}= \nor{\Sigma^{1/2}(I-{\mathcal P})},
\end{equation}
and ${\mathcal P}$ is the projection onto $\BB$. 
\begin{theorem}\label{firstmain}
  Choose $\BB\subseteq \H$. 
  Under Assumptions~\ref{ass: sub-gaussian},~\ref{ass:loss},~\ref{ass:best}, fix $\la>0$ and
$0<\delta<1$,  with probability at least $1- \delta $: 
\begin{equation*}
L(\wh{\beta}_{\la}) -L(w_*)\lesssim \frac{ 
 {\log(1/\delta)}}{\la n } + \la\nor{w_*}^2 + \sqrt{\mu_\BB} \nor{w_*}.
\end{equation*}
\end{theorem}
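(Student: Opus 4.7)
The plan is to parallel the proof sketch given for Theorem~\ref{thm:regularized-full-space}, with two modifications: localize the generalization-error argument to the subspace $\BB$, and pay a separate approximation penalty to move from $w_*\in\X$ to an oracle in $\BB$. Throughout I write $P_\BB$ for the orthogonal projection of $\X$ onto $\BB$ and set $\beta_* := P_\BB w_*\in\BB$, so that $\|\beta_*\|\leq\|w_*\|$.

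For the estimation step, I would re-run the uniform-convergence / dyadic peeling argument on balls of $\X$ intersected with $\BB$. Because $\wh\beta_\la$ minimizes the strongly convex $\wh L_\la$ on $\BB$ (and $0\in\BB$), one still has $\|\wh\beta_\la\|\leq\sqrt{\ell_0/\la}$, so a peeling union bound over radii gives, with probability at least $1-\delta/2$,
\[
L(\wh\beta_\la) - \wh L(\wh\beta_\la) \lesssim \frac{G\kappa\,\|\wh\beta_\la\|\,\sqrt{\log(1/\delta)}}{\sqrt{n}} \leq \la\|\wh\beta_\la\|^2 + \frac{G^2\kappa^2\log(1/\delta)}{\la n},
\]
where the second inequality is Young's; consequently $L(\wh\beta_\la) \lesssim \wh L_\la(\wh\beta_\la) + G^2\kappa^2\log(1/\delta)/(\la n)$.

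For the oracle comparison, since $\BB$ is fixed (not random) in this theorem, $\beta_*$ is deterministic, and Hoeffding applied to the i.i.d.\ variables $\ell(y_i,\scal{\beta_*}{x_i})$ (bounded by $G\kappa\|w_*\|+\ell_0$) yields $\wh L(\beta_*)\leq L(\beta_*) + O\bigl((G\kappa\|w_*\|+\ell_0)\sqrt{\log(1/\delta)/n}\bigr)$ with probability at least $1-\delta/2$. Combining the optimality inequality $\wh L_\la(\wh\beta_\la)\leq\wh L_\la(\beta_*)=\wh L(\beta_*)+\la\|\beta_*\|^2$ with the previous step, and absorbing the $G\kappa\|w_*\|/\sqrt{n}$ term into $\la\|w_*\|^2+G^2\kappa^2/(\la n)$ via one more Young inequality, gives
\[
L(\wh\beta_\la) \lesssim L(\beta_*) + \la\|w_*\|^2 + \frac{G^2\kappa^2\log(1/\delta)}{\la n}.
\]
The Lipschitz hypothesis on $\ell$ together with Jensen and Cauchy--Schwarz then close the argument via
\[
L(\beta_*) - L(w_*) \leq G\,\E|\scal{(I-P_\BB)w_*}{X}| \leq G\,\|\Sigma^{1/2}(I-P_\BB)w_*\| \leq \sqrt{\mu_\BB}\,G\,\|w_*\|,
\]
where the last inequality uses the defining property of $\mu_\BB$ as (essentially) the operator norm of $(I-P_\BB)\Sigma(I-P_\BB)$ restricted to $\X$.

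The main obstacle is the peeling step: using the crude bound $\|\wh\beta_\la\|\leq\sqrt{\ell_0/\la}$ directly would cost a factor $\la^{-1/2}$ and produce a suboptimal rate in $n$, so the dyadic chaining in $\|\wh\beta_\la\|$ used for Theorem~\ref{thm:regularized-full-space} must be faithfully reproduced inside $\BB$. Restricting to $\BB$ is harmless here, because the Rademacher complexity of a ball in a closed subspace of a Hilbert space depends only on its diameter and not on any dimension of $\BB$. The genuinely new ingredient, beyond matching the $G^2\kappa^2/(\la n)$ and $\la\|w_*\|^2$ terms of the full-space bound, is the Lipschitz-based approximation argument in the last display, which is exactly what converts the subspace-projection error $\|\Sigma^{1/2}(I-P_\BB)w_*\|$ into the additional excess-risk term $\sqrt{\mu_\BB}\,G\|w_*\|$.
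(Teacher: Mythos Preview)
Your proposal is correct and follows essentially the same route as the paper: the paper's proof uses the identical four-term decomposition $L(\wh\beta_\la)-L(w_*)=\textbf{A}+(\text{optimality}\le 0)+\textbf{B}+\textbf{C}+\la\|\mathcal P_\BB w_*\|^2$, bounds term~\textbf{A} with the same peeling/union bound (Lemma~\ref{lem:gen-gap-union}) followed by the same Young inequality, bounds term~\textbf{B} by Hoeffding on the deterministic point $\mathcal P_\BB w_*$, and handles term~\textbf{C} by exactly the Lipschitz--Cauchy--Schwarz chain you wrote, arriving at $G\|\Sigma^{1/2}(I-\mathcal P_\BB)\|\,\|w_*\|$. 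The only cosmetic difference is that the paper applies the peeling lemma on all of~$\H$ rather than restricting to~$\BB$, but as you note this is immaterial since $\wh\beta_\la\in\BB\subseteq\H$.
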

\noindent Compared to Theorem~\ref{thm:regularized-full-space}, the above result 
shows that there is an extra approximation error term due to considering a subspace. The coefficient $\mu_\BB$
appears in the analysis also for other loss functions, see e.g.  \citep{rudi2015less,marteau2019beyond}. Roughly speaking,  it captures how well the subspace $\BB$ is adapted to the problem.
\end{remark}

\section{Fast rates}\label{sec:statistical}

In this section, we prove that Nystr\"om algorithm achieves fast rates
under a Bernstein condition on the loss function, see
Assumption~\ref{ass:berstein}, which is quite standard in order to have fast
rates for regularized ERM  \citep{steinwart2008support,
  bartlett2005local}. To state the results, we recall some definitions
and basic facts, see \citep[Chapter~6]{steinwart2008support}.  

Given a threshold parameter $M>0$, for any $a\in \R$,  $a^{cl}$ denotes the clipped value of $a$ at $\pm M$
  \begin{align*}
    &a^{cl}=-M \qq\text{if} \;\;a\leq -M, \qq \qq
      a^{cl}=a \qq\text{if}\;\; a\in[-M\text{,}M], \qq\qq
      a^{cl}=M \qq\text{if}\;\;  a\geq M.
  \end{align*}
We say that the loss function $\ell$ can be {\it clipped} at $M>0$ if  for all
  ${y\in\Y},a\in \R$,
  \begin{equation} \label{eq: clipping} \ell(y,a^{cl})\leq
    \ell(y,a),
  \end{equation}
For convex loss functions, as considered in this paper, the above
definition is equivalent to the fact that  for all $y\in\Y$, there
exists $a_y\in [-M,M]$ such that
\[
\ell(y,a_y)=\min_{a\in\R} \ell(y,a) ,
  \]
  see  \citep[Lemma 2.23]{steinwart2008support}. Furthermore, Aumann’s
  measurable selection principle \citep[Lemma
  A.3.18]{steinwart2008support} implies that there exists a measurable
  map $\varphi:\Y\to \R$ such that 
  \[
 \ell(y,\varphi(y))=\min_{a\in\R} \ell(y,a),  \qquad |\varphi(y)|\leq M
    \]
and we can set
\begin{equation}\label{target_def}
f_*(x) = \int_{\Y} \ell(y, \varphi(x)) dP(y|x),
\end{equation}
for $P_X$-almost all $x\in \X$.  The function $f_*$ is the target
function since
\[
L(f_*) = \inf_f L(f),
  \]
where the infimum is taken over all the measurable functions
$f:\X\to\R$. It  easy to check that hinge loss  and  square loss with
bounded outputs can be clipped. Even if the logistic loss can not be clipped, we will show in Section \ref{sec: logistic} how we can easily bypass this issue with an ad hoc fix. We also introduce the following notation, for all  $w\in\X$, we set
  \[
    w^{cl}:\X\to \R \qquad w^{cl}(x) = \scal{w}{x}^{cl}.
 \]
In the following we assume the conditions below.

\begin{ass}[Clippability]\label{ass:clipped}
There exists $M>0$ such that the loss function can be clipped at $M$. 
\end{ass}

\begin{ass}[Universality]\label{ass:universal}
  One has
  \begin{equation}\label{universal}
    \inf_{w\in \X} L(w)= L(f_*).
  \end{equation}
\end{ass}
Recalling that the target function $f_*$ is the minimizer of the expected error
over all possible functions $f$, condition~\eqref{universal} means
that  $f_*$ can be arbitrarily well approximated by a linear function
$\scal{w}{x}$ for some $w\in\X$.  When considering the square loss, this condition
is equivalent to the fact that $\X$ is dense in $L^2(\X,P_X)$ and, in
the context of kernel methods, see 
Example~\ref{ex kernel} it is satisfied by universal kernels \citep{steinwart2008support}.
Condition~\eqref{universal}  may be
relaxed at the cost of an additional approximation term, but the analysis is just lengthier and it won't be discussed in here.  A sufficient stronger
condition is provided by assuming the target function to be linear (well-specified model).
\begin{ass}[Well specified model]\label{target}
There exists $w_*\in \X$ such that
  \[
  f_*(x)= \scal{w_*}{x} 
\]
  for $P_X$-almost $x\in\X$.
\end{ass}
We further assume the
following condition. 
\begin{ass}[Bernstein condition]\label{ass:berstein}
  There exist constants $B>0$, $\theta\in[0,1]$ and $V\geq B^{2-\theta}$, such  that for all $w\in\H$, the following inequalities hold almost surely:
	\begin{align}
	&\ell({Y},\scal{w}{X}^{cl})\leq B, \label{supremum bound}\\
	&\label{variance bound}
          \E \big[ \big\{ \ell(Y,\scal{w}{X}^{cl})- \ell(Y,f_*(X)) \big\}^2 \big]
          \leq
          V(\E[\ell(Y,\scal{w}{X}^{cl})- \ell(Y,f_*(X))])^\theta \\
   &\label{variance bound2}
     \E \big[ \big\{ \ell(Y,\scal{w}{X})- \ell(Y,f_*(X)) \big\}^2 \big]
     \leq
     V(\E[\ell(Y,\scal{w}{X})- \ell(Y,f_*(X))])^\theta 
   	\end{align}
      \end{ass}
Condition~\eqref{supremum bound} is called supremum bound \citep{steinwart2008support} and, thanks to the clipping, it is 
satisfied by Lipschitz loss functions. Condition~\eqref{variance
  bound} is called variance bound \citep{steinwart2008support} and the optimal exponent
corresponds to the choice $\theta=1$. For the square loss with bounded
output, the variance bound always  holds true with $\theta=1$, see \citep[Example
7.3]{steinwart2008support} . For other loss functions the above
condition is hard to verify for all distributions. For
classification, the variance bound is implied by so called margin conditions (see Section \ref{sec: 0-1 loss} and Theorem 8.24 in \citep{steinwart2008support}),
and the parameter $\theta$ characterizes how easy or hard the
classification problem is \citep{steinwart2008support}.
With respect to~\citep{steinwart2008support}, condition~\eqref{variance bound2} is a technical one that we need in the
proof.

To state our result, we will make use again of the approximation error $\A(\la)$ defined in~\eqref{eq:24}.
The following theorem provides
fast rates for Nystr\"om algorithm, where we recall the Nystr\"om
points are sampled according to ALS, see Definition~\ref{def:approx_lev_scores}.

\begin{theorem}\label{thm: fast rate A(lambda)}
Under  Assumptions~\ref{ass: sub-gaussian},~\ref{ass:loss},~\ref{ass:clipped},~\ref{ass:berstein}, let 
fix $0<\delta<1$,   then, with probability at least $1-2\delta$:
 \begin{enumerate}[(a)]
 \item for the polynomial decay condition~\eqref{eq:2}
	\begin{align}
		\label{fast rate A(lambda)}
		L(\wh{\beta}_{\lambda,m}^{cl})-L(f_*)
		&\lesssim \Big(\frac{1}{\lambda^pn}\Big)^{\frac{1}{2-p-\theta+\theta p}}+ \sqrt{\frac{\alpha
				\mathcal A(\la)}{\lambda}}+\Big(\frac{\log(3/\delta)}{n}\Big)^{\frac{1}{2-\theta}}+
		\frac{\log(3/\delta)}{n} \sqrt{\frac{\mathcal A(\la)}{\lambda}} +\A(\la)
\end{align}
   provided that
   \begin{align*}
     \alpha\gtrsim n^{-1/p}, \qquad n \gtrsim d_\alpha\vee \log(1/\delta), \qquad m \gtrsim d_\alpha\log(\frac{2n}{\delta}),
   \end{align*}
 \item for the exponential decay condition~\eqref{eq:19}
\begin{align*}
		L(\wh \beta^{cl}_{\la,m}) -
		L(f_*)
		&\lesssim 
		\frac{\ln^2(1 / \lambda)}{n}+\sqrt{\frac{\alpha\mathcal{A}(\la)}{\lambda}}
		+\Big(\frac{\log(3/\delta)}{n}\Big)^{\frac{1}{2-\theta}}+\frac{\log(3/\delta)}{n} 
		\sqrt{\frac{\mathcal A(\la)}{\lambda}}+{\cal A}(\la)
	\end{align*}
  provided that
   \begin{align*}
     \alpha & \gtrsim e^{-n}, \qquad n\gtrsim d_\alpha\vee \log(1/\delta), \qquad m \gtrsim d_\alpha\log(\frac{2n}{\delta}).
   \end{align*}  
 \end{enumerate}
\end{theorem}
The proof of Theorem~\ref{thm: fast rate A(lambda)} is given in
Appendix~\ref{app:theorem 4}. Notice that a faster decay condition on the spectrum of $\Sigma$ leads to improvements in both the excess risk bound and the parameters' choices. As regards the learning rate, under exponential decay in \textit{(b)}, first term of \eqref{fast rate A(lambda)} improves to $1/n$ up to logarithmic factors. At the same time, the range of admissible $\alpha$ gets larger while the control on the effective dimension gets tighter. Let us comment these results more precisely in the following.

\subsection{Polynomial decay of $\Sigma$} 
\label{subsec:poly}
In this section we assume the polynomial  decay~\eqref{eq:2} condition on the
spectrum of $\Sigma$. By omitting numerical constants, logarithmic and higher
order terms,  Theorem~\ref{thm: fast rate A(lambda)} implies that with high probability
\[
L(\wh \beta^{cl}_{\la,m}) -
 L(f_*) \lesssim 
\left(
\frac{1}{\la^p n} 
\right)^{\frac{1}{2-p-\theta+\theta p}}+ \sqrt{\frac{\alpha {\cal A}(\la)}{\la}}
 +\frac{\log(3/\delta)}{n} 
 \sqrt{\frac{\mathcal A(\la)}{\lambda}}+{\cal A}(\la). 
  \]
To have an explicit rate, we further assume that there exists $r \in (0,1]$ such that  
\[\mathcal  A(\la)\lesssim \la^{r}.\]
Under this condition, with the choice
\begin{align*}
 & \lambda_n \asymp
            n^{-\min\{\frac{2}{r+1},\frac{1}{r(2-p-\theta+\theta
            p)+p}\}} \\
&\alpha_n \asymp n^{-\min\{2,\frac{r+1}{r(2-p-\theta+\theta p)+p}\}}\\
&m \gtrsim  n^{\min\{2p,\frac{p(r+1)}{ r(2-p-\theta+\theta
            p)+p}\}}\log n
\end{align*}
then with high probability
	\begin{align}\label{eq:25}
	L(\wh{\beta}_{\lambda_n,m}^{cl})-L(f_*)
          &\lesssim 
            n^{-\min\{\frac{2r}{r+1},\frac{r}{r(2-p-\theta+\theta p)+p}\}}.
	\end{align}
The above bound further simplifies when the variance
bound~\eqref{variance bound} holds true  with the optimal paratemer
$\theta=1$ and  the model is well-specified  as in \eqref{target}
since we can set $r=1$.  Under these conditions, we get that
\begin{align}  \label{rate2}    
L(\widehat{\beta}_{\lambda_n,m}^{cl})-L(w_*)
    \lesssim  
  n^{-\frac{1}{{1+p}}}.
\end{align}
with the choice
\begin{equation}
	\label{params_choice}
  \lambda_n \asymp n^{-\frac{1}{1+ p}}, \quad
  \alpha_n \asymp n^{-\frac{2}{1+p}}, \quad
  m \gtrsim n^{\frac{2p}{1+p}}\log n.
\end{equation}
By comparing~bound~\eqref{rate2} with~\eqref{eq:12}, the assumption on the spectrum also leads to
an improved estimation error bound and hence improved learning rates.  
In this sense, these are the \textit{correct} error estimates since the decay of the eigenvalues is used both for the subspace approximation error and the estimation error.
As it is clear from~\eqref{rate2}, for fast eigendecay, the obtained rate goes from $O(1/\sqrt{n})$ to $O(1/n)$. Taking again, $p=1/2$ leads to a rate $O(1/n^{2/3})$ which is better than the one in~\eqref{eq:12}. In this case, the subspace defined by leverage scores needs to be chosen of dimension at least $O(n^{2/3})$. 

For arbitrary $\theta$ and $r$, bound~\eqref{eq:25}  is harder to parse. 
For $r \to 0$ the bound become vacuous and there are not enough
assumptions to derive a bound \citep{devroye2013probabilistic}. Note
that large values of $\la$ are prevented, 
indicating  a saturation effect (see
\citep{vito2005learning,mucke2019beating}). As discussed before, the bound 
improves when there is a fast eigendecay. Smaller values of $\theta$
and $r$ leads to worse bounds than~\eqref{rate2}, which is the best
rate in this context. 
 Since, given any acceptable choice of $p,r$ and $\theta$, the
 quantity $\min\{2p,\frac{p(r+1)}{ r(2-p-\theta+\theta p)+p}\}$ takes
 values in $(0,1)$, the best rate, that differently from before can
 also be slower than $\sqrt{1/n}$, can always be achieved choosing
 $m<n$ (up to logarithmic terms).

\subsection{Exponential decay of $\Sigma$}
\label{subsec exponential}
We can further improve the bounds above assuming an exponential decay~\eqref{eq:2} condition on the
spectrum of $\Sigma$. By omitting numerical constants, logarithmic and higher
order terms,  Theorem~\ref{thm: fast rate A(lambda)} implies that with high probability
\[
	L(\wh \beta^{cl}_{\la,m}) -
		L(f_*)
		\lesssim 
		\frac{\ln^2(1 / \lambda)}{n}+\sqrt{\frac{\alpha\mathcal{A}(\la)}{\lambda}}
		+\Big(\frac{\log(3/\delta)}{n}\Big)^{\frac{1}{2-\theta}}+\frac{\log(3/\delta)}{n} 
		\sqrt{\frac{\mathcal A(\la)}{\lambda}}+{\cal A}(\la). 
  \]
Under an exponential decay condition, it is reasonable to modify
the source condition controlling the behaviour of the approximation error $\A(\la)$ from polynomial to logarithmic.
We therefore assume that
\[\A(\la)\lesssim\log^{-1}(1/\la) \]
and,  with the choice
	\begin{equation}
		\label{params choice exp}
		\lambda_n \asymp \log n/n^2, \quad
		\alpha_n \asymp 1/n^2, \quad
		m \gtrsim \log^2 n ,
              \end{equation}
              with high probability,
	\begin{align*}  
		L(\widehat{\beta}_{\lambda_n,m}^{cl})-L(f_*)
		\lesssim  		1/\log n.
	\end{align*}
If the model is well-specified as in~\eqref{target} and $\theta=1$, we
get
$$
	L(\widehat{\beta}_{\lambda,m}^{cl}) -L(w_*)\lesssim \frac{\ln^2(1 / \lambda)}{n}+\la\nor{w_*}^2+\sqrt{\alpha}\nor{w_*}
	$$
 provided that $n$ and $m$ are large enough, and $\alpha\gtrsim
 e^{-n}$.  With the choice
	\begin{equation*}
		\lambda_n \asymp 1/n, \quad
		\alpha_n \asymp 1/n^2, \quad
		m \gtrsim \log^2 n ,
	\end{equation*}
	with high probability 
	\begin{align*}  
		L(\widehat{\beta}_{\lambda_n,m}^{cl})-L(w_*)
		\lesssim 	1/n.
	\end{align*}

 \begin{remark}\label{why}
Whereas the results of Section~\ref{sec:statistics-Nystrom} also hold
true for bounded inputs $X$, to have fast rates we are forced to assume the
sub-gaussianity of $X$. Under this latter condition in fact,
Lemma~\ref{lem:id_min_proj_lev_subgauss} requires \textit{only} that $\alpha\gtrsim
n^{-1/p}$ for polynomial decay and $\alpha\gtrsim e^{-n}$ for exponential
decay. These ranges are compatible with  the choices \eqref{params_choice} and \eqref{params choice exp}, which provide the optimal  convergence rates.  Under
        the assumption that $X$ is bounded,
        Lemma~\ref{lem:id_min_proj_lev_subgauss}  is replaced by Lemma
        7 in \citep{rudi2015less}, which requires instead that $\alpha\gtrsim
        n^{-1}$ both for polynomial and exponential decay, which is
        not compatible with~\eqref{params_choice}
        and \eqref{params choice exp}.
 \end{remark}

 \subsection{Comparison with Random Features}
 We begin by comparing our results with those obtained using random features, as presented in \citep{sun2018but}. Random features is a well-known technique for efficiently approximating the kernel matrix without computing it in full. Introduced in \citep{rahimi2008random}, this method maps the data into a finite-dimensional feature space, providing a random approximation of the RBF kernel feature space. By employing explicit finite-dimensional feature vectors, the original kernel support vector machine (KSVM) is converted into a linear support vector machine (LSVM). This conversion facilitates faster training algorithms, as shown in \citep{shalev2011pegasos} and \citep{hsieh2014fast}, and allows for constant-time testing relative to the number of training samples.\\
 Specifically, their Theorem~1 is based on similar assumptions as our result in eq. \eqref{rate2}, i.e. the surrogate loss is the hinge loss (Lipschitz, convex, non-differentiable, see our Assumption~\ref{ass:loss}), the Bayes predictor belongs to the RKHS ({realizable case}, see Assumption \ref{target}), Massart's low-noise condition is assumed (which implies our variance condition in Assumption \ref{ass:berstein} with $\theta=1$, see Section \ref{sec: 0-1 loss}), and the spectrum of the covariance operator decays polynomially: $\sigma_i \asymp i^{-1/p}$, $0<p<1$ (see eq.~\ref{eq:2}). Under these assumptions they obtain a rate of $n^{-1/(2p+1)}$ using $n^{2p/(2p+1)}$ random features.
 We can obtain the same rate with the same number of Nystr\"om points, but our analysis also provides an improved rate of $n^{-1/(p+1)}$ using $n^{2p/(p+1)}$ Nystr\"om points; this improvement is due to our refined analysis, allowing to consider smaller values of $\alpha$ in \eqref{params_choice}.
 We do not know
 whether this improvement comes from a better adaptivity of Nystr\"om sampling, or it is a byproduct of our analysis.
Regarding \citep{li2019towards}, comparison with their fast rates is more difficult, as they assume that the Bayes predictor belongs to the random space spanned by random features.
We do not make this strong assumption, and indeed controlling the approximation error of the random subspace is one of the key challenges in our work.

The following table provides a comparison (up to logarithmic factors) among the various rates for
the hinge loss discussed above.
\begin{table}[h]
	\caption{Comparison among the different regimes using hinge loss.}
	\centerline{%
	\begin{tabular}{lllllllll}
		\toprule
		\multicolumn{1}{c}{}   &\multicolumn{1}{c} {Assumptions} & \multicolumn{1}{c} {Eigen-decay} & \multicolumn{1}{c} {Rate} & \multicolumn{1}{c} {$m$} \\
		\cmidrule(r){1-1}	\cmidrule(r){2-2} \cmidrule(r){3-3} \cmidrule(r){4-4} \cmidrule(r){5-5} 
		Theorem \ref{thm:regularized-full-space} & \ref{ass: sub-gaussian},\ref{ass:loss},\ref{ass:best}      &             \;         /                           & $n^{-1/2}$                                                       &                     \;/                                      \\ 
		Eq.~\eqref{eq:12} & \ref{ass: sub-gaussian},\ref{ass:loss},\ref{ass:best}     & $\sigma_j\lesssim j^{-\frac{1}{p}}$    & $n^{-1/2}$                                  & $n^p$                                               \\
		Eq.~\eqref{eq:12}		& \ref{ass: sub-gaussian},\ref{ass:loss},\ref{ass:best}  & $\sigma_j\lesssim e^{-\beta j}$ & $n^{-1/2}$                                  &  $\log^2 n$          \\
		Eq: \eqref{rate2} & \ref{ass: sub-gaussian},\ref{ass:loss},\ref{target},\ref{ass:berstein}\tablefootnote{$\theta=1$} &$\sigma_j\lesssim j^{-\frac{1}{p}}$     & $n^{-\frac{1}{{1+p}}}$                                           & $n^{\frac{2p}{1+p}}$                                \\ 
		Eq: \eqref{eq:25} & \ref{ass: sub-gaussian},\ref{ass:loss},\ref{ass:berstein} & $\sigma_j\lesssim j^{-\frac{1}{p}}$    & $n^{-\min\{\frac{2r}{r+1},\frac{r}{r(2-p-\theta+\theta p)+p}\}}$ & $n^{\min\{2p,\frac{p(r+1)}{ r(2-p-\theta+\theta p)+p}\}}$ \\
		RF\tablefootnote{Here $m$ is number of random
          features}    \citep{sun2018but}       &
                                                 $\cdot$\tablefootnote{$X$ bounded},\ref{ass:loss},\ref{target},\ref{ass:berstein}$^*$  & $\sigma_j\lesssim j^{-\frac{1}{p}}$    & $n^{-\frac{1}{2p+1}}$                                            & $n^{\frac{2p}{2p+1}}$                        \\ 
		\bottomrule
	\end{tabular}
}
\end{table}

%
%
%
%
%
%


\section{Differentiable loss functions}\label{sec:other}
In this section we specify the above results to differentiable losses and, in particular, to quadratic and logistic losses. In both cases, we will provide for this setting equivalent bounds of the ones presented in Theorem \ref{thm: fast rate A(lambda)}.

\subsection{Square loss}
Next, we specialized the analysis to square loss defined
by~\eqref{eq:11} under the assumption that $\Y\subset [-1,1]$.
The interval $[-1,1]$ can be replaced by $[-M,M]$, but we take $M=1$
since, in the
following section, we will consider binary classification. It is easy to see that 
$$\ell(y,t)\leq 4, \qquad y,t\in[-1,1],$$
and $\ell$ can be clipped at $1$. A well known variance bound
for least squares loss gives that
$$
\begin{aligned}
	\left(\ell(y, f^{cl}(x))-\ell\left(y,
            f^{*}(x)\right)\right)^{2}
        &=\left(\left(f^{cl}(x)+f^{*}(x)-2
            y\right)\left(f^{cl}(x)-f^{*}(x)\right)\right)^{2} \\ & \leq
        16 \left(f^{cl}(x)-f^{*}(x)\right)^{2}, 
\end{aligned}
$$
so that variance bound~\eqref{variance bound}  holds for $V=16$ and $\theta=1$.

Finally, the least squares loss restricted to $[-1, 1]$ is Lipschitz continuous, that is
$$
\left|L(y, t)-L\left(y, t^{\prime}\right)\right| \leq 4 \left|t-t^{\prime}\right|
$$
for all $y \in[-1, 1]$ and $t, t^{\prime} \in[-1, 1]$.

The following theorem specializes to the square loss the previous states, see Appendix~\ref{app: square} for the proof. As usual the Nystr\"om
points are sampled according to ALS, see Definition~\ref{def:approx_lev_scores}. 
\begin{theorem}\label{thm:square loss}  
 Under Assumption~\ref{ass: sub-gaussian} and the polynomial decay
 condition~\eqref{eq:2}, fix $\la>0$, $\alpha\gtrsim n^{-1/p}$ and $0<\delta<1$.
	then with probability at least $1-2\delta$:
	\begin{align*}
		L(\wh \beta^{cl}_{\la,m}) -
		L(f_*)
		&\lesssim 
		\frac{1}{\la^p n}+ \frac{\alpha {\cal A}(\la)}{\la}
		+\frac{\log(3/\delta)}{n} 
		\sqrt{\frac{\mathcal A(\la)}{\lambda}}+{\cal A}(\la). 
	\end{align*}
	Furthermore, if there exists $r \in (0,1]$ such that  
	${{\cal  A}(\la)\lesssim \la^{r}}$, then 
	\begin{align*}
		 \lambda_n \asymp
		n^{-\min\{\frac{2}{r+1},\frac{1}{r+p}\}}, \qquad \alpha_n \asymp n^{-\min\{\frac{2}{r+1},\frac{1}{r+p}\}}, \qquad m \gtrsim  n^{\min\{\frac{2p}{r+1},\frac{p}{ r+p}\}}\log n
	\end{align*}
	with high probability
	\begin{align*}
		L(\wh{\beta}_{\lambda_n,m}^{cl})-L(f_*)
		&\lesssim 
		n^{-\min\{\frac{2r}{r+1},\frac{r}{r+p}\}}.
	\end{align*}
\end{theorem}

Comparing the above bound and the one in~\eqref{eq:25} with $\theta=1$, we get
the same convergence rates, but the number $m$ of Nystr\"om
points reduces from $ n^{\min\{2p,\frac{p(r+1)}{ r+p}\}}\log n$
to  $ n^{\min\{\frac{2p}{r+1},\frac{p}{ r+p}\}}\log n$, matching the
bound in \citep{rudi2015less}.

As already observed in Remark~\ref{why} we are able to prove the above results only under the assumption that $X$
sub-gaussian. However, it is possible to show that in the \textit{well
  specified case}, see Assumption~\ref{target}, corresponding to the choice
$r=1$,  the above result holds true also for bounded inputs $X$. This
is due to the additional square we get in the projection term thanks
to the quadratic properties of the loss, namely
\[L(\mathcal{P}_mw_* )-L(w_*)=\nor{\Sigma^{1/2}
    (I-\mathcal{P}_m)w_*}^2\]
so that condition $\alpha\gtrsim
n^{-1}$ in Lemma~7 in \citep{rudi2015less} can still be fulfilled for our choice of the parameter $\alpha$.  We state the result without reporting the  proof,
        which is a variant of the proof of Theorem~\ref{thm:square loss} taking into account the above remark. 
\begin{corollary}\label{thm:square loss basic}
Assume that $X$ is bounded almost surely, under
Assumption~\ref{target} and polynomial decay of the spectrum~\eqref{eq:2}, fix $\la>0$, $\alpha\gtrsim 1/n$, and $0<\delta<1$.
Then, with probability at least $1- 2{\delta}$:
	$$
	L(\widehat{\beta}_{\lambda,m}^{cl}) -L(w_*)\lesssim \frac{1}{\la^p n } + \la\nor{w_*}^2 + \alpha\nor{w_*}^2
	$$ 
	provided that $n$ and $m$ are large enough.
	Further, for ALS sampling with the choice
	\begin{equation}
		\lambda \asymp n^{-\frac{1}{1+ p}}, \quad
		\alpha \asymp n^{-\frac{1}{1+p}}, \quad
		m \gtrsim n^{\frac{p}{1+p}}\log n , 
	\end{equation}
	with high probability,
	\begin{align}      
		L(\widehat{\beta}_{\lambda,m}^{cl})-L(w_*)
		\lesssim  
		n^{-\frac{1}{{1+p}}}.
	\end{align}
\end{corollary}

\begin{table}[h]
	\caption{Comparison among the different regimes with square loss}
	\centering
	\begin{tabular}{lllllllll}
		\toprule
		\multicolumn{1}{c}{}   &\multicolumn{1}{c} {Assumptions} & \multicolumn{1}{c} {Eigen-decay} & \multicolumn{1}{c} {Rate} & \multicolumn{1}{c} {$m$} \\
		\cmidrule(r){1-1}	\cmidrule(r){2-2} \cmidrule(r){3-3} \cmidrule(r){4-4} \cmidrule(r){5-5} 
		Corollary \ref{thm:square loss basic} & \ref{ass: sub-gaussian},\ref{target}      &               $\sigma_j\lesssim j^{-\frac{1}{p}}$  & $n^{-\frac{1}{{1+p}}}$   &   $ n^{\frac{p}{1+p}}$     \\ 
		\citep{rudi2015less} & $X$ bounded, \ref{target} & $\sigma_j\lesssim j^{-\frac{1}{p}}$     & $n^{-\frac{1}{{1+p}}}$                                           & $n^{\frac{p}{1+p}}$                                \\ 
		Theorem \ref{thm:square loss} & \ref{ass: sub-gaussian}& $\sigma_j\lesssim j^{-\frac{1}{p}}$    & $n^{-\min\{\frac{2r}{r+1},\frac{r}{r+p}\}}$ & $n^{\min\{\frac{2p}{r+1},\frac{p}{ r+p}\}}$ \\

		\bottomrule
	\end{tabular}
\label{tab: square}
\end{table}


\begin{remark}[Comparison with \citep{rudi2015less}]
The comparison makes sense only when choosing $s=0$ in the source condition $\left\|\Sigma^{-s} w_*\right\|_{\mathcal{H}}<R$ in \citep{rudi2015less}. The reason is that while in \citep{rudi2015less} they study the problem in the well-specified case --improving the result when $w_*$ belongs to subspaces of $\X$ that are the images of the fractional compact operators $\Sigma^s$-- here instead we go in the opposite direction studying the case where $w_*$ does not exists and the approximation error must be introduced. The only intersection is for $s=0$ where it is reasonable to compare their bound with our Theorem \ref{thm:square loss basic}. As detailed in Table~\ref{tab: square} the two works return exactly the same rate and the same requirement for $m$.
\end{remark}

Our analysis can easily be adapted to sketching techniques other than Nystr\"om sampling as shown  in Remark~\ref{remark:proj}. For example, in the so-called Gaussian sketching the random subspace $\mathcal{B}_m$ is defined as 
\[\mathcal{B}_m=\operatorname{span} \{\sum_{j=1}^{n} G_{ij}x_j: \; 1\leq i\leq m\},\] 
where $G \in \mathbb{R}^{m \times n}$ is a random matrix with i.i.d. entries drawn from a Gaussian distribution.
To extend our results to Gaussian sketching, it is sufficient to bound the projection error term~\eqref{proj} by using Lemma 13 and the proof of Corollary 4 in \citep{lin2018optimal}, instead of  Lemma 7 in \citep{rudi2015less}.
However, we note that, when the inputs $X$ are bounded, the results in \citep{lin2018optimal} require the condition $\alpha \gtrsim n^{-1}$ on the projection parameter $\alpha$, which prevents the application of their analysis to extend our bounds in Section~\ref{sec:statistical} to Gaussian sketching.  We leave it to future work to explore whether  this condition can be relaxed by assuming sub-Gaussian inputs, as we did in Lemma~\ref{lem:id_min_proj_lev_subgauss} for Nystr\"om sampling. 

Here, as done for Nystr\"om sampling, we overcome the problem for bounded inputs by considering the square loss and the well-specified setting, namely Assumption~\ref{target} with $r = 1$ (compare with the discussion preceding Corollary~\ref{thm:square loss basic}). Under these assumptions, it is possible to prove the following corollary for Gaussian sketching, which can be compared with the results in \citep{lin2018optimal}, where their parameter $\zeta$ in their source condition with the square loss is related to our parameter $r$ by means of $2\zeta = r$ (with $0 \leq \zeta \leq 1/2$). 
	
	\begin{corollary}
	Let  $X$ be  bounded almost surely. Suppose that Assumption~\ref{target} and the polynomial decay condition~\eqref{eq:2} on of the spectrum hold true. Fix $\la>0$, $\alpha\gtrsim 1/n$, and $0<\delta<1$. Let $\mathcal{B}_m=\mathrm{span} \{\sum_{j=1}^{n} G_{ij}x_j: \; 1\leq i\leq m\}$, where $G\in \R^{m\times n}$ is a randomized matrix with i.i.d. Gaussian entries. 
	Then, with probability at least $1- 2{\delta}$:
	$$
	L(\widehat{\beta}_{\lambda,m}^{cl}) -L(w_*)\lesssim \frac{1}{\la^p n } + \la\nor{w_*}^2 + \alpha\nor{w_*}^2
	$$ 
	provided that $n$ and $m$ are large enough.
	Further, if 
	\begin{equation}
		\lambda \asymp n^{-\frac{1}{1+ p}}, \quad
		\alpha \asymp n^{-\frac{1}{1+p}}, \quad
		m \gtrsim n^{\frac{p}{1+p}}\log n , 
	\end{equation}
	with high probability,
	\begin{align}      
		L(\widehat{\beta}_{\lambda,m}^{cl})-L(w_*)
		\lesssim  
		n^{-\frac{1}{{1+p}}}.
	\end{align}
	\end{corollary}

The above result provides the same rate as in Corollary 4 in \citep{lin2018optimal} and our Corollary~\ref{thm:square loss basic}  for the Nystr\"om setting.  However,  we note that Gaussian sketching involves multiplying the full $n\times n$ Gram matrix by a random Gaussian matrix, which can be impractical when dealing with large datasets. In contrast, Nystr\"om sampling avoids constructing the full Gram matrix, requiring only a random subsampling of its columns.

\subsection{Logistic loss}
\label{sec: logistic}
As already mentioned, logistic loss defined by~\eqref{eq:10}
cannot be clipped according to~\eqref{eq: clipping} \citep{steinwart2008support}. Nevertheless, we can still clip our loss $\ell(y,a)$ at $M=\log n$ so that for all $y\in \Y$, $a\in\R$ it is easy to verify that 
\begin{equation}\label{eq: clipping_logistic}
	\ell(y,a^{cl})\leq \ell(y,a)+\frac{1}{n},
\end{equation}
where $a^{cl}$ denotes the clipped value of $a$ at $\pm \log(n)$, that is
\begin{align*}
	&a^{cl}=-\log(n) \hspace{0.6cm}\text{if} \;\;a\leq -\log(n),\\
	&a^{cl}=y \hspace{1,7cm} \text{if}\;\; a\in[-\log(n)\text{,}\log(n)], \\
	&a^{cl}=\log(n) \hspace{1cm}\text{if}\;\;  a\geq \log(n).
\end{align*}
The key point here is that, even though the loss is not always reduced
by clipping, i.e. $\exists$ $y\in\Y$, $a\in \R$
s.t. $\ell(y,a^{cl})\nleq \ell(y,a)$, it can only increase at most of
$1/n$. This is important since it does not affect the resulting bounds
on the excess risk. In particular, we recover the same excess risk bounds of the square loss in Theorem~\ref{thm:square
  loss} and Corollary~\ref{thm:square loss basic}  for the logistic
loss. The simple adaptation of proofs is given in Appendix \ref{app: logistic}.

\section{From surrogates to classification loss}\label{sec:01loss}
\label{sec: 0-1 loss}
In this section, we consider a classification task, so that $\Y=\{\pm
1\}$ and  the natural way of measuring performances is by using the 0-1 loss, i.e. $\ell_{0-1}(y,a):=\mathbb{1}_{(-\infty,0]}(y\;\text{sign}(a))$. Through out this section, we study how the previous bounds for surrogate losses relate to the 0-1 classification risk. In the following, we will indicate with $L_{0-1}$, $L_{hinge}$, $L_{square}$ and $L_{logistic}$ the risks associated respectively with 0-1, hinge, square and logistic losses. Similarly, we define $L_{0-1}^*:=\inf_f L_{0-1}(f)$, $L_{hinge}^*:=\inf_f L_{hinge}(f)$, $L_{square}^*:=\inf_f L_{square}(f)$, $L_{logistic}^*:=\inf_f L_{logistic}(f)$, where the infimum is taken over all the measurable functions
$f:\X\to\R$.

A key role will be played by the well-known low noise condition~\citep{mammen1999margin,tsybakov2004aggregation,massart2006risk}. The following definition is taken from \citep{tsybakov2004aggregation}:
\begin{definition}
	Distribution $P$ has noise exponent $ 0\leq \gamma <1$ if it satisfies one of the following conditions:
	\begin{itemize}
		\item $N_\gamma$: for some $c>0$ and all measurable $f: \X \rightarrow\{\pm 1\}$,
		\begin{equation}\label{ass:noise cond}
			\operatorname{Pr}[f(X)(2\eta(X)-1)<0] \leq c \left(L_{0-1}(f)-L_{0-1}^{*}\right)^{\gamma};
		\end{equation}
	\item  $M_{\frac{\gamma}{1-\gamma}}$: for some $c>0$ and all $\epsilon>0$,
	\begin{equation}
		\operatorname{Pr}\left[0<\left|2\eta(X)-1\right| \leq \epsilon\right] \leq c \epsilon^\frac{\gamma}{1-\gamma};
	\end{equation}
	\end{itemize}
where $\eta(X)=\operatorname{Pr}(Y=1|X)$ and for $\gamma=1$ we have that $M_\infty$ is equivalent to $N_1$.
\end{definition}

We will assume the following low-noise condition:
\begin{ass}[Low-noise condition] The distribution $P$ has noise exponent $\gamma\in [0,1]$.
	\label{ass: low-noise}
\end{ass}

Using Lemma~\ref{lem: class risk wirh gamma} in Appendix~\ref{app:
  known results}, when dealing with the square loss, there is a standard way of transforming its excess risk bound into the following
bound on the classification risk: 
\begin{lemma}[Square loss] \label{cor: square to 0-1} 
	Under Assumption~\ref{ass: low-noise}, there is a $c>0$ such that for any measurable $f: \mathcal{X} \rightarrow \mathbb{R}$ we have:
\begin{equation}
	L_{0-1}(f)-L_{0-1}^*\lesssim \left(L_{square}(f)-L_{square}^*\right)^\frac{1}{2-\gamma}.
\end{equation}
\end{lemma}
\noindent It is not hard to see that an analogous bound can be obtained for logistic loss.

For the hinge loss, the bound given by Lemma \ref{lem: from0-1_to_surr}
in Appendix~\ref{app: known results} cannot be improved even under
low noise in Assumption~\ref{ass: low-noise}. In fact, the low-noise assumption is directly connected with the variance bound \eqref{variance bound} through Theorem 8.24 in \citep{steinwart2008support} (see Lemma~\ref{lem: lownoise_bernst} in
Appendix~\ref{app: known results}). In particular, if we assume a low
noise condition with parameter $\gamma$, then the variance bound in
Assumption~\ref{ass: low-noise} is always satisfied for the hinge
loss with $\theta=\gamma$.

\subsection{From square and logistic losses to classification loss}
Starting from Theorem \ref{thm:square loss}, we can now
derive an upper bound for the classification risk using the results
obtained for the surrogate square loss. We assume low-noise condition
and exploit Lemma \ref{cor: square to 0-1} to obtain the following
theorem, where $\mathcal A_{\text{square}}(\la)$ is the approximation error,
see~\eqref{eq:24}, with respect the square loss and the Nystr\"om
points are sampled, as always, accordingly to ALS, see Definition~\ref{def:approx_lev_scores}.
\begin{theorem}\label{thm:square loss 01}
	Under Assumptions~\ref{ass: sub-gaussian} and~\ref{ass:
          low-noise} and the polynomial decay
        condition~\eqref{eq:2}, fix $\la>0$, $\alpha\gtrsim n^{-1/p}$ and $0<\delta<1$,
	then with probability at least $1-2\delta$:
	\begin{align*}
		L_{0-1}(\wh \beta^{cl}_{\la,m}) -
		L_{0-1}^*
		&\lesssim 
		\left(\frac{1}{\la^p n}+ \frac{\alpha \mathcal A_{\text{square}} (\la)}{\la}
		+\frac{\log(3/\delta)}{n} 
		\sqrt{\frac{\mathcal A_{\text{square}}(\la)}{\lambda}}+\mathcal A_{\text{square}} (\la)\right)^\frac{1}{2-\gamma}.
	\end{align*}
Furthermore, if there exists $r \in (0,1]$ such that  
	$\mathcal A_{\text{square}} (\la)\lesssim \la^{r}$ and choosing
	\begin{align*}
		\lambda \asymp
		n^{-\min\{\frac{2}{r+1},\frac{1}{r+p}\}}, \qquad
		\alpha \asymp n^{-\min\{\frac{2}{r+1},\frac{1}{r+p}\}},\qquad
		m \gtrsim  n^{\min\{\frac{2p}{r+1},\frac{p}{ r+p}\}}\log n,
	\end{align*}
	then, with high probability
	\begin{align*}
		L_{0-1}(\wh{\beta}_{\lambda,m}^{cl})-L_{0-1}^*
		&\lesssim 
		n^{-\min\{\frac{2r}{(2-\gamma)(r+1)},\frac{r}{(2-\gamma)(r+p)}\}}.
	\end{align*}
\end{theorem}
Once again analogous bounds hold for logistic loss, up to constant or negligible terms.

\subsection{From hinge loss to classification loss}
Starting from Theorem \ref{thm: fast rate
  A(lambda)}, we can derive another upper bound for the classification
risk but using as surrogate the hinge loss. Under the low noise
assumption and exploiting Lemma \ref{lem: lownoise_bernst} we obtain the following theorem, where $\mathcal A_{\text{hinge}}(\la)$ is the approximation error, 
see~\eqref{eq:24}, with respect the hinge loss.
\begin{theorem}\label{thm:hinge loss 0-1}
Under Assumptions~\ref{ass: sub-gaussian},~\ref{ass: low-noise} and under polynomial decay 
	condition~\eqref{eq:2}, fix $\la>0$, $\alpha\gtrsim n^{-1/p}$ and $0<\delta<1$,
	then with probability at least $1-2\delta$:
	\begin{align*}
		L_{0-1}(\wh \beta^{cl}_{\la,m}) -
		L_{0-1}^*
		&\lesssim 
		\left(
		\frac{1}{\la^p n} 
		\right)^{\frac{1}{2-p-\gamma+\gamma p}}+ \sqrt{\frac{\alpha \mathcal A_{\text{hinge}}(\la)}{\la}}
		+\frac{\log(3/\delta)}{n} 
		\sqrt{\frac{\mathcal A_{\text{hinge}}(\la)}{\lambda}}+{\mathcal A_{\text{hinge}}(\la)}. 
	\end{align*}
	Furthermore, if there exists $r \in (0,1]$ such that  
	${\mathcal A_{\text{hinge}}(\la)\lesssim \la^{r}}$ and choosing
	\begin{align*}
		\lambda \asymp
		n^{-\min\{\frac{2}{r+1},\frac{1}{r(2-p-\gamma+\gamma
				p)+p}\}}, \qquad
		\alpha \asymp n^{-\min\{2,\frac{r+1}{r(2-p-\gamma+\gamma p)+p}\}},\qquad
		m \gtrsim  n^{\min\{2p,\frac{p(r+1)}{ r(2-p-\gamma+\gamma
				p)+p}\}}\log n,
	\end{align*}
	then, with high probability
	\begin{align*}
		L_{0-1}(\wh{\beta}_{\lambda,m}^{cl})-L_{0-1}^*
		&\lesssim 
		n^{-\min\{\frac{2r}{r+1},\frac{r}{r(2-p-\gamma+\gamma p)+p}\}}.
	\end{align*}
\end{theorem}	

\begin{table}[h]
	\caption{Comparison between the $0-1$ classification risk derived from square, logistic and hinge loss under low noise condition}
	\centerline{%
	\begin{tabular}{lllllllll}
		\toprule
		\multicolumn{1}{c}{}  &\multicolumn{1}{c} {Assump}  & \multicolumn{1}{c} {Eigen-decay} & \multicolumn{1}{c} {Rate} & \multicolumn{1}{c} {$m$} \\
		\cmidrule(r){1-1}	\cmidrule(r){2-2} \cmidrule(r){3-3} \cmidrule(r){4-4} \cmidrule(r){5-5} 
		\textit{Square:} Theorem \ref{thm:square loss 01} & \ref{ass: sub-gaussian},\ref{ass: low-noise} &  $\sigma_j\lesssim j^{-\frac{1}{p}}$    & $n^{-\min\{\frac{2r}{(2-\gamma)(r+1)},\frac{r}{(2-\gamma)(r+p)}\}}$ & $n^{\min\{\frac{2p}{r+1},\frac{p}{ r+p}\}}$ \\
		\textit{Logistic }  & \ref{ass: sub-gaussian},\ref{ass: low-noise} &  $\sigma_j\lesssim j^{-\frac{1}{p}}$    & $n^{-\min\{\frac{2r}{(2-\gamma)(r+1)},\frac{r}{(2-\gamma)(r+p)}\}}$ & $n^{\min\{\frac{2p}{r+1},\frac{p}{ r+p}\}}$ \\
		\textit{Hinge:} Theorem \ref{thm:hinge loss 0-1} & \ref{ass: sub-gaussian},\ref{ass: low-noise}    & $\sigma_j\lesssim j^{-\frac{1}{p}}$  & $n^{-\min\{\frac{2r}{r+1},\frac{r}{r(2-p-\gamma+\gamma p)+p}\}}$   &   $ n^{\min\{2p,\frac{p(r+1)}{ r(2-p-\gamma+\gamma
				p)+p}\}}$     \\ 	
		\bottomrule
	\end{tabular}
}
\end{table}
Next, we will discuss the results obtained in Table $4$.
\subsection{Discussion of the results}
\label{sec:disc}
We compare now the two upper bounds we obtained in Theorem \ref{thm:square loss 01} and Theorem \ref{thm:hinge loss 0-1}.
Since
\begin{equation*}
  \min \Big\{\frac{2r}{(2-\gamma)(r+1)},\frac{r}{(2-\gamma)(r+p)} \Big\}
  \leq \min \Big\{\frac{2r}{r+1},\frac{r}{r(2-p-\gamma+\gamma p)+p} \Big\}
\end{equation*}
for all the choices of $p$, $\gamma$ and $r$ the bound for the classification error derived using the hinge loss can always achieve a better rate than the one derived from the square loss. On the other hand, since $\min\{\frac{2p}{r+1},\frac{p}{ r+p}\} \leq \min\{2p,\frac{p(r+1)}{ r(2-p-\gamma+\gamma
	p)+p}\}$, the choice of the hinge loss results, according to our upper bounds, to be more expensive in term of $m$ (while achieving a better rate). Therefore, we can try to compare the two rates while fixing the number of number of Nystr\"om points selected, or, viceversa, we can fix the rate and compare the number of Nystr\"om points needed to achieve it. The results here are less obvious and we do not have a clear winner. What appears from the analysis is that the discriminant is the choice of the low noise condition parameter $\gamma$ and the $r$ parameter, which controls the approximation error decay. 

Let us fix an achievable convergence rate $O\left(n^{-R}\right)$ for the classification risk.
To achieve this rate we need at least $m_{\text{s}}=n^{R(2-\gamma)p/r}$ for square loss and $m_{\text{h}}=n^{R(1+r)p/r}$ for hinge loss. Since when $\gamma+r<1$ then $m_h\leq m_s$, we have that using hinge is, according to our upper bounds, computationally \textit{cheaper} than using the square loss (see Figure~\ref{fig: m}). This means that when the problem is hard, hinge loss seems to be also \textit{less expensive} than the square loss (and viceversa) in terms of number of required Nystr\"om points.

Similarly, imagine now to have some budget constraint on $m$ so that we are not allowed to choose its optimal value: which loss will show a faster rate? Again the condition on $\gamma+r$ is the key, with the upper bound for hinge loss achieving a faster rate than the one for square loss, when $\gamma+r<1$.

In summary, when studying the misclassification error using surrogates, the comparison between our two upper bounds obtained from hinge and square loss does not suggest an univocal better choice between the two losses for all regimes. When the problem is \textit{hard}, i.e. slow decay of the approximation error ($\la\ll 1$) and/or strong noise  ($\gamma \ll 1)$, the upper bound for hinge loss behaves better than the one square loss; the opposite when the problem is \textit{easy}.


\begin{figure}[h]
\centering
\includegraphics[scale=0.25]{./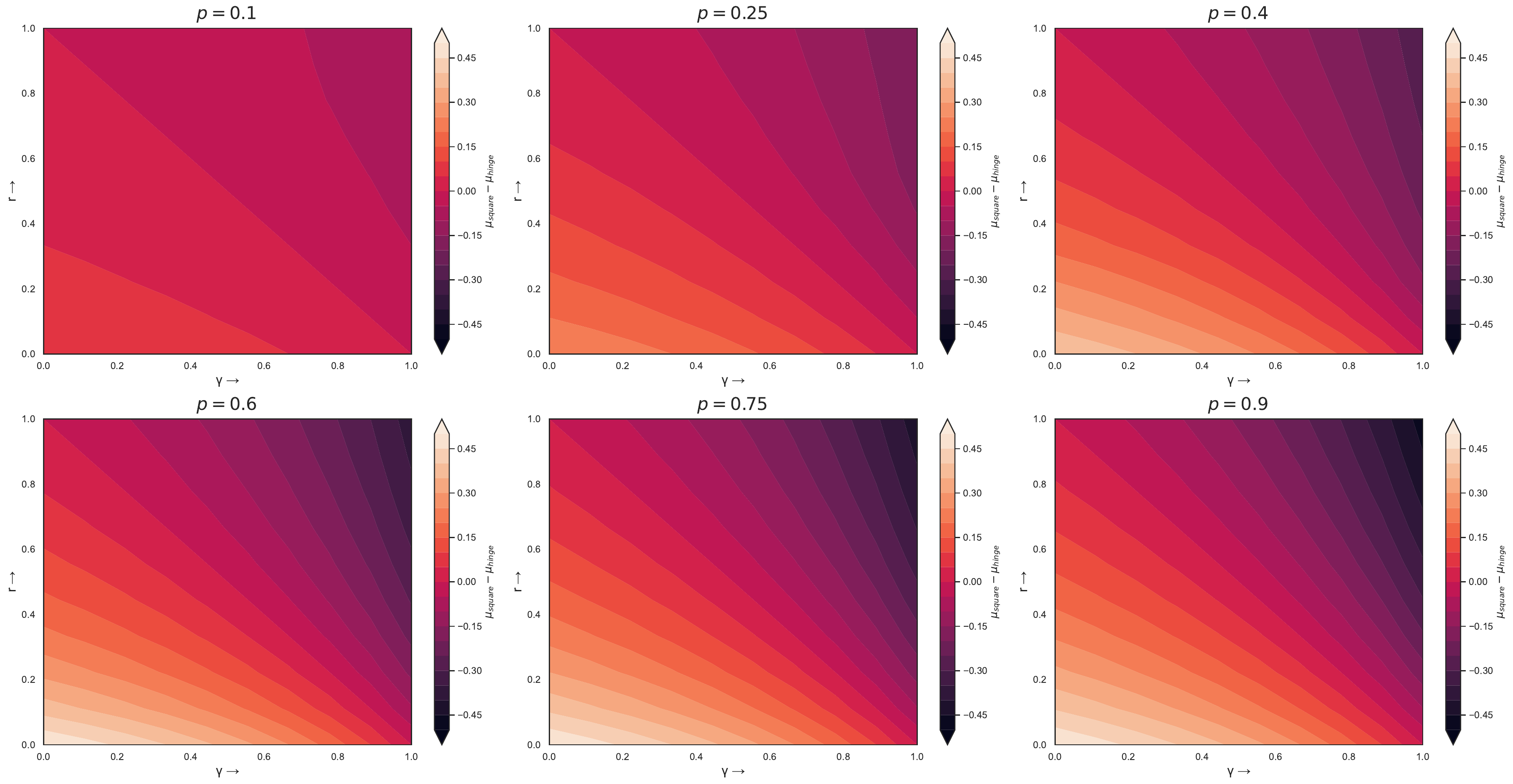}
\caption{Comparison between the number of Nystr\"om points needed by square and hinge loss to get a fixed common rate: the plots above show $\mu_{\text{square}}-\mu_{\text{hinge}}$, where $0\leq\mu\leq1$ is the exponent controlling $m$, i.e. $m\asymp n^{\mu}$. Light colours represent then the regimes where hinge loss is \textit{cheaper} than square loss.}
\label{fig: m}
\end{figure}


\section{Experiments}
\label{sec:experiments}

\begin{figure*}[h!]
	\centering
	\includegraphics[width=7.5cm]{./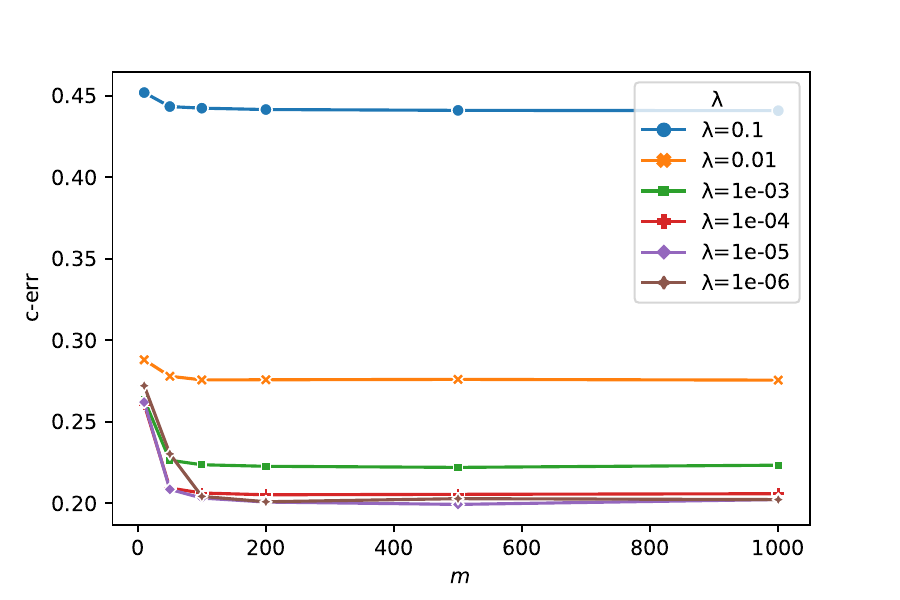}
	\includegraphics[width=7.5cm]{./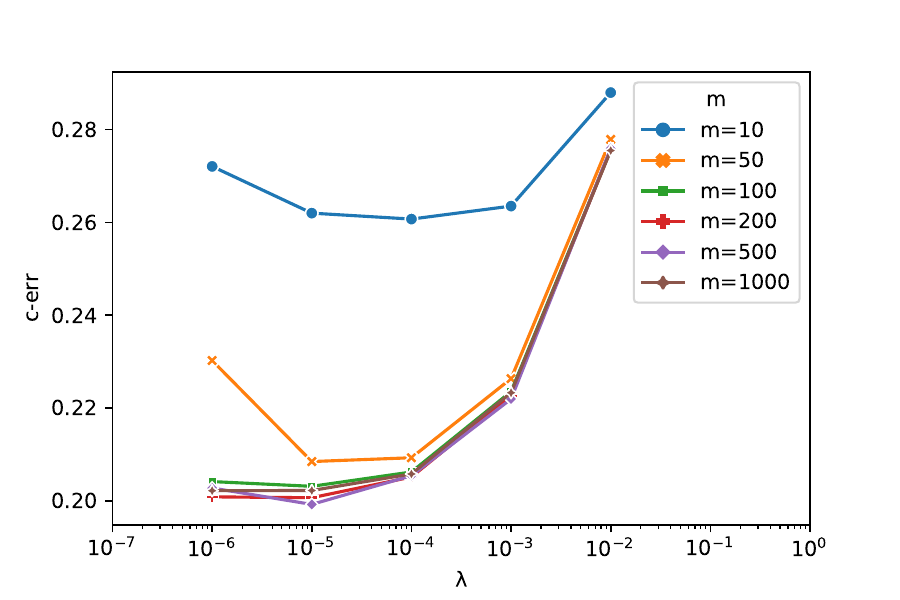}
	\includegraphics[width=7.5cm]{./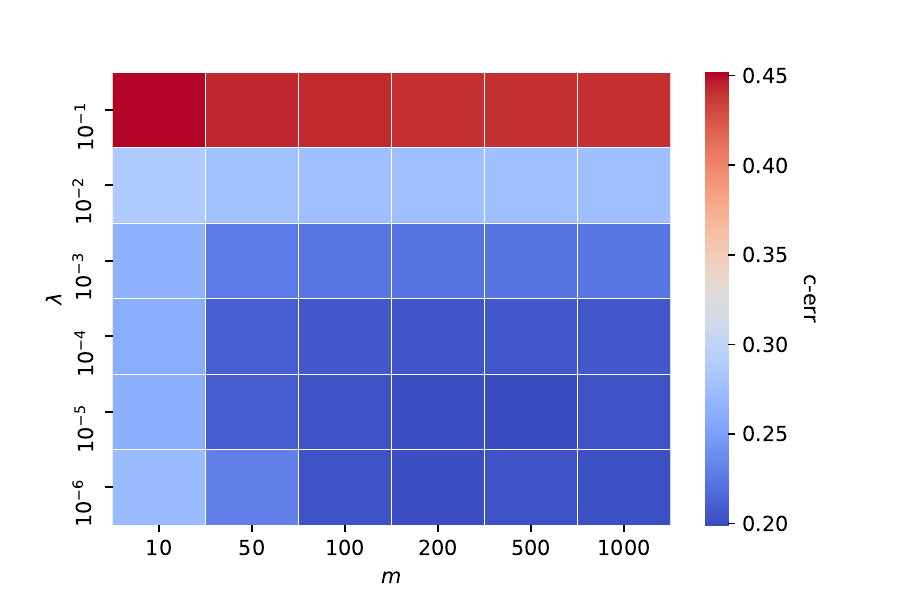}
	\caption{The graphs above are obtained from SUSY dataset: on the top left we show how c-err measure changes for different choices of $\lambda$ parameter; top right figure focuses on the stability of the algorithm varying $\lambda$; on the bottom the combined behavior is presented with a heatmap.}
	\label{fig}
\end{figure*}

As mentioned in the introduction, a main of motivation for our study
is showing that the computational savings can be achieved without
incurring in any loss of accuracy. In this section, we complement our
theoretical results investigating numerically the statistical and
computational trade-offs in a relevant  setting. More precisely, we
report simple experiments in the context of kernel methods,
considering Nystr\"om techniques. In particular, we choose the hinge
loss, hence SVM for classification. Keeping in mind
Theorem~\ref{thm: fast rate A(lambda)} we expect we can match the performances of
kernel-SVM  using a  Nystr\"om approximation with only $m\ll n$
centers.  The exact number depends on assumptions, such as the
eigen-decay of the covariance operator, that might be hard to know in
practice, so here we explore this empirically.

%
%
%

\paragraph{Nystr\"om-Pegasos.}
Classic SVM  implementations with hinge loss are based on considering a dual formulation and a quadratic programming problem \citep{joachims1998making}.
This is the case for example, for the LibSVM library \citep{chang2011libsvm} available on Scikit-learn \citep{pedregosa2011scikit}. We use this implementation for comparison, but find it convenient to combine the Nystr\"om method to a primal solver akin to~\eqref{subgsvm} (see \citep{li2016fast,hsieh2014fast} for the dual formulation).
More precisely, we use Pegasos \citep{shalev2011pegasos}  which is based on  a  simple and easy to use stochastic subgradient iteration\footnote{Python
  implementation from \url{https://github.com/ejlb/pegasos}}. We
consider a procedure in two steps. First, we compute the
embedding discussed in Section~\ref{sec: ERM on random
  subspace}. With kernels it takes the form  
$
\xx_i=(K_m^{\dagger})^{1/2} (K(x_i,\tilde{x}_1),\dots,K(x_i,\tilde{x}_m))^T,
$
where $K_m\in \R^{m\times m}$ with $(K_m)_{ij}=K(\tilde{x}_i,\tilde{x}_j)$. 
Second, we  use  Pegasos on the embedded data.
As discussed in Section~\ref{sec: ERM on random subspace}, the total cost is $O(nm^2C_K+nm\cdot \#iter)$ in time (here $\text{iter}=\text{epoch}$, i.e. one epoch equals  $n$ steps of 
stochastic subgradient) and $O(m^2)$ in memory (needed to compute the pseudo-inverse and  embedding the data in batches of size $m$).
        
        \begin{table*}[h!]
        	\caption[caption table]{Architecture: single machine with AMD EPYC 7301 16-Core Processor and 256GB of RAM. For Nystr\"om-Pegaos, ALS sampling has been used (see \citep{rudi2018fast}) and the results are presented as mean and standard deviation deriving from 5 independent runs of the algorithm. The columns of the table report classification error, training time and prediction time (in seconds).}
        	\label{tab:results}
        	\centerline{%
        		\begin{tabular}{lllllllll}
        			\toprule
        			\multicolumn{1}{c}{}  &  \multicolumn{1}{c} {LinSVM} &\multicolumn{3}{c} {KSVM} &\multicolumn{4}{c} {Nystr\"om-Pegasos (ALS)}             \\ 
        			\cmidrule(r){1-1}	\cmidrule(r){2-2}	\cmidrule(r){3-5}\cmidrule(r){6-9}
        			Datasets       & c-err & c-err & t train & t pred & c-err &t train & t pred& $m$\\
        			\cmidrule(r){1-1}	\cmidrule(r){2-2}	\cmidrule(r){3-5}\cmidrule(r){6-9}
        			SUSY & $28.1\% $ &    - &-&-&$20.0\% \pm 0.1 \%$&$ 608\pm 2$& $134\pm 4$ & $2500$\\
        			Mnist    & $12.4\% $& $2.2\% $& $1601$&$87$ & $2.2\% \pm 0.1 \%$ &$ 1342\pm 5 $& $ 491 \pm 32 $ & $15000$  \\
        			Usps    &$16.5\%$     &$3.1\%$   & $4.4$& $1.0$& $3.0\%\pm 0.1  \%$ & $ 19.8 \pm 0.1 $ &$7.3 \pm 0.3 $& $2500$ \\
        			Webspam    & $8.8\%$  &$1.1\%$ & $6044$& $473$& $1.3\% \pm 0.1\%$    & $2440 \pm 5$& $376 \pm 18$ & $11500$ \\
        			a9a & $16.5\%$ & $15.0\%$& $114$ & $31$&$15.1\%\pm 0.1\%$ & $29.3\pm 0.2$& $1.5\pm 0.1$ &$800$ \\
        			CIFAR  & $31.5\%$&$19.1\%$ &$6339$ &$213$&$19.2\%\pm 0.1\%$ & $2408\pm 14$& $820\pm 47$& $20500$ \\
        			\bottomrule
        		\end{tabular}
        	}
        \end{table*}

\paragraph{Datasets \& setup (see  Appendix~\ref{appexp}).}      
We consider five  datasets\footnote{Datasets available from
  LIBSVM website \url{http://www.csie.ntu.edu.tw/\textasciitilde cjlin/libsvmtools/datasets/} and from \citep{jose2013local} \url{http://manikvarma.org/code/LDKL/download.html\#Jose13}}
of size  $10^4-10^6$, challenging for standard SVMs.
	{We use  a Gaussian kernel,  tuning   width and  regularization parameter as explained in appendix. 
          We report classification error and for data sets with no fixed test set, we set apart $20\%$ of the data.}
        
\paragraph{Procedure.}
Given the accuracy achieved by K-SVM algorithm, which is our target, we increase the number of sampled  Nystr\"om points $m<n$ as long as also Nystr\"om-Pegasos matches that result.

\paragraph{Results.}        
We compare with linear (used only as baseline) and K-SVM see Table~\ref{tab:results}.  For all the datasets, the Nystr\"om-Pegasos approach achieves comparable performances of K-SVM with much better time requirements (except for the small-size Usps).  Moreover, note that K-SVM cannot be run  on millions of points (SUSY), whereas Nystr\"om-Pegasos is still fast and provides much better results than linear SVM. Further comparisons with state-of-art algorithms for SVM are left for a future work. Finally, in Figure~\ref{fig} we illustrate the interplay between $\la$ and $m$ for the Nystr\"om-Pegasos considering SUSY dataset. In Appendix~\ref{appexp} we compare also with results obtained using the simpler uniform sampling of the points.

\begin{table}[h]
	\caption[caption table]{Comparison between Nystr\"om-Pegasos (hinge loss) and Nystr\"om-KRR (square loss) when using ALS sampling. We report the respective classification errors fixing the number of Nystr\"om centers. }
	\centering
	\label{tab:comparisonALS}
	\begin{tabular}{lllllllll}
		\toprule
		\multicolumn{1}{c}{}   &\multicolumn{2}{c} {Nystr\"om-Pegasos (ALS)} &\multicolumn{2}{c} {Nystr\"om-KRR (ALS)}             \\ 
		\cmidrule(r){1-1}	\cmidrule(r){2-3}\cmidrule(r){4-5}
		Datasets        & c-err & $m$  & c-err &$m$ \\
		
		\cmidrule(r){1-1}	\cmidrule(r){2-3}\cmidrule(r){4-5}
		SUSY  &$20.0\% \pm 0.1 \%$&$2500$& $19.9\%\pm 0.1\%$& $2500$\\
		Mnist    & $2.2\% \pm 0.1 \%$ &$15000$ & $ 2.4\%\pm 0.1\%$ & $15000$\\
		Usps         & $3.0\%\pm 0.1  \%$ &  $2500$& $2.9\%\pm 0.1\% $& $2500$ \\
		Webspam      & $1.3\% \pm 0.1\%$    &$11500$ &  $1.3\%\pm 0.1\%$  & $11500$  \\
		a9a  &$15.1\%\pm 0.1\%$ & $800$ & $14.9\%\pm 0.1\%$& $800$\\
		CIFAR  &$19.2\%\pm 0.1\%$ &  $20000$& $19.0\%\pm 0.1\%$ & $20000$\\
		\bottomrule
	\end{tabular}
\end{table}

\paragraph{Comparison between Nystr\"om-Pegasos and Nystr\"om KRR} We finally want to test the theoretical results discussed in Section \ref{sec:disc} with real data. We compare the Nystr\"om-Pegasos algorithm (Nystr\"om SVM), i.e. ERM with Nystr\"om projection when using the hinge loss as surrogate, against Nystr\"om KRR, i.e. ERM with Nystr\"om projection when using the square loss as surrogate. Extensive experimental analysis for Nystr\"om KRR in regression problems can be found in \cite{rudi2015less,rudi2017falkon,meanti2020kernel}. Following the discussion in Section \ref{sec:disc}, here we are instead interested in comparing the two methods in classification problems. We follow the scenario described in the second part of Section~\ref{sec:disc}, where the budget of Nystr\"om centers $m$ is fixed, and we compare the respective classification errors. As theory suggests there is not a clear winner between the two methods for all data distributions, as shown in Table~\ref{tab:comparisonALS}. Results are always similar, with Nystr\"om-Pegasos slightly outperforming Nystr\"om-KRR on Mnist, while the outcome is reversed on a9a and CIFAR. Analogous results can be found in Table~\ref{tab:comparison} in Appendix~\ref{appexp}.

\section*{Acknowledgments}
L. R. acknowledges the financial support of the European Research Council (grant SLING 819789), the European Commission (Horizon Europe grant ELIAS 101120237), the US Air Force Office of Scientific Research (FA8655-22-1-7034), the Ministry of Education, University and Research (FARE grant ML4IP R205T7J2KP; grant BAC FAIR PE00000013 funded by the EU - NGEU) and the Center for Brains, Minds and Machines (CBMM), funded by NSF STC award CCF-1231216. This work represents only the view of the authors. The European Commission and the other organizations are not responsible for any use that may be made of the information it contains. The research by E.D.V. and L. R. has been supported by the MIUR grant PRIN 202244A7YL. The research by E.D.V. has been supported by the MIUR Excellence Department Project awarded to Dipartimento di Matematica, Universita di Genova, CUP D33C23001110001. E.D.V. is a member of the Gruppo Nazionale per l’Analisi Matematica, la Probabilità e le loro Applicazioni (GNAMPA) of the Istituto Nazionale di Alta Matematica (INdAM). This work represents only the view of the authors. The European Commission and the other organizations are not responsible for any use that may be made of the information it contains.
J.M. is supported by a grant of the French National Research Agency (ANR), “Investissements d’Avenir” (LabEx Ecodec/ANR-11-LABX-0047).

\bibliography{nystrom}

\begin{thebibliography}{65}
\providecommand{\natexlab}[1]{#1}
\providecommand{\url}[1]{\texttt{#1}}
\expandafter\ifx\csname urlstyle\endcsname\relax
  \providecommand{\doi}[1]{doi: #1}\else
  \providecommand{\doi}{doi: \begingroup \urlstyle{rm}\Url}\fi

\bibitem[Adamczak(2008)]{adamczak2008tail}
Radoslaw Adamczak.
\newblock A tail inequality for suprema of unbounded empirical processes with
  applications to markov chains.
\newblock \emph{Electronic Journal of Probability}, 13:\penalty0 1000--1034,
  2008.

\bibitem[Alaoui and Mahoney(2015)]{alaoui2015fast}
Ahmed Alaoui and Michael~W Mahoney.
\newblock Fast randomized kernel ridge regression with statistical guarantees.
\newblock In \emph{Advances in Neural Information Processing Systems}, pages
  775--783, 2015.

\bibitem[Alquier et~al.(2019)Alquier, Cottet, and
  Lecu{\'e}]{alquier2019estimation}
Pierre Alquier, Vincent Cottet, and Guillaume Lecu{\'e}.
\newblock Estimation bounds and sharp oracle inequalities of regularized
  procedures with lipschitz loss functions.
\newblock \emph{The Annals of Statistics}, 47\penalty0 (4):\penalty0
  2117--2144, 2019.

\bibitem[Bach(2010)]{bach2010self}
Francis Bach.
\newblock Self-concordant analysis for logistic regression.
\newblock 2010.

\bibitem[Bach(2013)]{bach2013sharp}
Francis Bach.
\newblock Sharp analysis of low-rank kernel matrix approximations.
\newblock In \emph{Conference on Learning Theory}, pages 185--209, 2013.

\bibitem[Bach(2017)]{bach2017equivalence}
Francis Bach.
\newblock On the equivalence between kernel quadrature rules and random feature
  expansions.
\newblock \emph{The Journal of Machine Learning Research}, 18\penalty0
  (1):\penalty0 714--751, 2017.

\bibitem[Bartlett and Mendelson(2002)]{bartlett2002rademacher}
Peter~L. Bartlett and Shahar Mendelson.
\newblock Rademacher and {Gaussian} complexities: Risk bounds and structural
  results.
\newblock \emph{Journal of Machine Learning Research}, 3\penalty0
  (Nov):\penalty0 463--482, 2002.

\bibitem[Bartlett et~al.(2005)Bartlett, Bousquet, Mendelson,
  et~al.]{bartlett2005local}
Peter~L Bartlett, Olivier Bousquet, Shahar Mendelson, et~al.
\newblock Local rademacher complexities.
\newblock \emph{The Annals of Statistics}, 33\penalty0 (4):\penalty0
  1497--1537, 2005.

\bibitem[Bartlett et~al.(2006)Bartlett, Jordan, and
  McAuliffe]{bartlett2006convexity}
Peter~L. Bartlett, Michael~I. Jordan, and Jon~D. McAuliffe.
\newblock Convexity, classification, and risk bounds.
\newblock \emph{Journal of the American Statistical Association}, 101\penalty0
  (473):\penalty0 138--156, 2006.

\bibitem[Bottou and Bousquet(2008)]{bottou2008tradeoffs}
L{\'e}on Bottou and Olivier Bousquet.
\newblock The tradeoffs of large scale learning.
\newblock In \emph{Advances in neural information processing systems}, pages
  161--168, 2008.

\bibitem[Boucheron et~al.(2013)Boucheron, Lugosi, and
  Massart]{boucheron2013concentration}
St{\'e}phane Boucheron, G{\'a}bor Lugosi, and Pascal Massart.
\newblock \emph{Concentration Inequalities: A Nonasymptotic Theory of
  Independence}.
\newblock Oxford University Press, Oxford, 2013.

\bibitem[Bousquet and Elisseeff(2002)]{bousquet2002stability}
Olivier Bousquet and Andr{\'e} Elisseeff.
\newblock Stability and generalization.
\newblock \emph{Journal of machine learning research}, 2\penalty0
  (Mar):\penalty0 499--526, 2002.

\bibitem[Boyd and Vandenberghe(2004)]{boyd2004convex}
Stephen Boyd and Lieven Vandenberghe.
\newblock \emph{Convex Optimization}.
\newblock Cambridge University Press, 2004.

\bibitem[Calandriello and Rosasco(2018)]{calandriello2018statistical}
Daniele Calandriello and Lorenzo Rosasco.
\newblock Statistical and computational trade-offs in kernel k-means.
\newblock In \emph{Advances in Neural Information Processing Systems}, pages
  9357--9367, 2018.

\bibitem[Calandriello et~al.(2017)Calandriello, Lazaric, and
  Valko]{calandriello2017distributed}
Daniele Calandriello, Alessandro Lazaric, and Michal Valko.
\newblock Distributed adaptive sampling for kernel matrix approximation.
\newblock In \emph{Artificial Intelligence and Statistics}, pages 1421--1429.
  PMLR, 2017.

\bibitem[Caponnetto and De~Vito(2007)]{caponnetto2007optimal}
Andrea Caponnetto and Ernesto De~Vito.
\newblock Optimal rates for the regularized least-squares algorithm.
\newblock \emph{Foundations of Computational Mathematics}, 7\penalty0
  (3):\penalty0 331--368, 2007.

\bibitem[Carl and Stephani(1990)]{carl1990entropy}
Bernd Carl and Irmtraud Stephani.
\newblock \emph{Entropy, compactness and the approximation of operators}.
\newblock Number~98. Cambridge University Press, 1990.

\bibitem[Chang and Lin(2011)]{chang2011libsvm}
Chih-Chung Chang and Chih-Jen Lin.
\newblock Libsvm: A library for support vector machines.
\newblock \emph{ACM transactions on intelligent systems and technology (TIST)},
  2\penalty0 (3):\penalty0 1--27, 2011.

\bibitem[Cohen et~al.(2015)Cohen, Lee, Musco, Musco, Peng, and
  Sidford]{cohen2015uniform}
Michael~B Cohen, Yin~Tat Lee, Cameron Musco, Christopher Musco, Richard Peng,
  and Aaron Sidford.
\newblock Uniform sampling for matrix approximation.
\newblock In \emph{Proceedings of the 2015 Conference on Innovations in
  Theoretical Computer Science}, pages 181--190, 2015.

\bibitem[De~Vito et~al.(2005)De~Vito, Rosasco, Caponnetto, Giovannini, and
  Odone]{vito2005learning}
Ernesto De~Vito, Lorenzo Rosasco, Andrea Caponnetto, Umberto~De Giovannini, and
  Francesca Odone.
\newblock Learning from examples as an inverse problem.
\newblock \emph{Journal of Machine Learning Research}, 6\penalty0
  (May):\penalty0 883--904, 2005.

\bibitem[Della~Vecchia et~al.(2021)Della~Vecchia, Mourtada, De~Vito, and
  Rosasco]{della2021regularized}
Andrea Della~Vecchia, Jaouad Mourtada, Ernesto De~Vito, and Lorenzo Rosasco.
\newblock Regularized erm on random subspaces.
\newblock In \emph{International Conference on Artificial Intelligence and
  Statistics}, pages 4006--4014. PMLR, 2021.

\bibitem[Devroye et~al.(2013)Devroye, Gy{\"o}rfi, and
  Lugosi]{devroye2013probabilistic}
Luc Devroye, L{\'a}szl{\'o} Gy{\"o}rfi, and G{\'a}bor Lugosi.
\newblock \emph{A probabilistic theory of pattern recognition}, volume~31.
\newblock Springer Science \& Business Media, 2013.

\bibitem[Drineas and Mahoney(2005)]{drineas2005nystrom}
Petros Drineas and Michael~W Mahoney.
\newblock On the nystr{\"o}m method for approximating a gram matrix for
  improved kernel-based learning.
\newblock \emph{journal of machine learning research}, 6\penalty0
  (Dec):\penalty0 2153--2175, 2005.

\bibitem[Drineas et~al.(2012)Drineas, Magdon-Ismail, Mahoney, and
  Woodruff]{drineas2012fast}
Petros Drineas, Malik Magdon-Ismail, Michael~W Mahoney, and David~P Woodruff.
\newblock Fast approximation of matrix coherence and statistical leverage.
\newblock \emph{Journal of Machine Learning Research}, 13\penalty0
  (Dec):\penalty0 3475--3506, 2012.

\bibitem[Hsieh et~al.(2014)Hsieh, Si, and Dhillon]{hsieh2014fast}
Cho-Jui Hsieh, Si~Si, and Inderjit~S Dhillon.
\newblock Fast prediction for large-scale kernel machines.
\newblock In \emph{Advances in Neural Information Processing Systems}, pages
  3689--3697, 2014.

\bibitem[Joachims(1998)]{joachims1998making}
Thorsten Joachims.
\newblock Making large-scale svm learning practical.
\newblock Technical report, Technical Report, 1998.

\bibitem[Johnson and Zhang(2013)]{johnson2013accelerating}
Rie Johnson and Tong Zhang.
\newblock Accelerating stochastic gradient descent using predictive variance
  reduction.
\newblock In \emph{Advances in neural information processing systems}, pages
  315--323, 2013.

\bibitem[Jose et~al.(2013)Jose, Goyal, Aggrwal, and Varma]{jose2013local}
Cijo Jose, Prasoon Goyal, Parv Aggrwal, and Manik Varma.
\newblock Local deep kernel learning for efficient non-linear svm prediction.
\newblock In \emph{International conference on machine learning}, pages
  486--494, 2013.

\bibitem[Kakade et~al.(2009)Kakade, Sridharan, and
  Tewari]{kakade2009complexity}
Sham~M. Kakade, Karthik Sridharan, and Ambuj Tewari.
\newblock On the complexity of linear prediction: Risk bounds, margin bounds,
  and regularization.
\newblock In \emph{Advances in Neural Information Processing Systems 21}, pages
  793--800, 2009.

\bibitem[Koltchinskii and Lounici(2014)]{koltchinskii2014concentration}
Vladimir Koltchinskii and Karim Lounici.
\newblock Concentration inequalities and moment bounds for sample covariance
  operators.
\newblock \emph{arXiv preprint arXiv:1405.2468}, 2014.

\bibitem[Koltchinskii et~al.(2006)]{koltchinskii2006local}
Vladimir Koltchinskii et~al.
\newblock Local rademacher complexities and oracle inequalities in risk
  minimization.
\newblock \emph{The Annals of Statistics}, 34\penalty0 (6):\penalty0
  2593--2656, 2006.

\bibitem[Kpotufe and Sriperumbudur(2019)]{kpotufe2019kernel}
Samory Kpotufe and Bharath~K Sriperumbudur.
\newblock Kernel sketching yields kernel jl.
\newblock \emph{arXiv preprint arXiv:1908.05818}, 2019.

\bibitem[Li et~al.(2016)Li, Yang, Zhang, and Jin]{li2016fast}
Zhe Li, Tianbao Yang, Lijun Zhang, and Rong Jin.
\newblock Fast and accurate refined nystr{\"o}m-based kernel svm.
\newblock In \emph{Thirtieth AAAI Conference on Artificial Intelligence}, 2016.

\bibitem[Li et~al.(2019)Li, Ton, Oglic, and Sejdinovic]{li2019towards}
Zhu Li, Jean-Francois Ton, Dino Oglic, and Dino Sejdinovic.
\newblock Towards a unified analysis of random fourier features.
\newblock In \emph{International Conference on Machine Learning}, pages
  3905--3914. PMLR, 2019.

\bibitem[Lin and Cevher(2018)]{lin2018optimal}
Junhong Lin and Volkan Cevher.
\newblock Optimal rates of sketched-regularized algorithms for least-squares
  regression over hilbert spaces.
\newblock In \emph{International Conference on Machine Learning}, pages
  3102--3111. PMLR, 2018.

\bibitem[Mahoney(2011)]{mahoney2011randomized}
Michael~W Mahoney.
\newblock Randomized algorithms for matrices and data.
\newblock \emph{Foundations and Trends{\textregistered} in Machine Learning},
  3\penalty0 (2):\penalty0 123--224, 2011.

\bibitem[Mammen and Tsybakov(1999)]{mammen1999margin}
Enno Mammen and Alexandre~B. Tsybakov.
\newblock Smooth discrimination analysis.
\newblock \emph{The Annals of Statistics}, 27\penalty0 (6):\penalty0
  1808--1829, 1999.
\newblock ISSN 00905364.
\newblock URL \url{http://www.jstor.org/stable/2673938}.

\bibitem[Marteau-Ferey et~al.(2019)Marteau-Ferey, Ostrovskii, Bach, and
  Rudi]{marteau2019beyond}
Ulysse Marteau-Ferey, Dmitrii Ostrovskii, Francis Bach, and Alessandro Rudi.
\newblock Beyond least-squares: Fast rates for regularized empirical risk
  minimization through self-concordance.
\newblock In \emph{Conference on Learning Theory}, pages 2294--2340. PMLR,
  2019.

\bibitem[Massart et~al.(2006)Massart, N{\'e}d{\'e}lec, et~al.]{massart2006risk}
Pascal Massart, {\'E}lodie N{\'e}d{\'e}lec, et~al.
\newblock Risk bounds for statistical learning.
\newblock \emph{The Annals of Statistics}, 34\penalty0 (5):\penalty0
  2326--2366, 2006.

\bibitem[Meanti et~al.(2020)Meanti, Carratino, Rosasco, and
  Rudi]{meanti2020kernel}
Giacomo Meanti, Luigi Carratino, Lorenzo Rosasco, and Alessandro Rudi.
\newblock Kernel methods through the roof: handling billions of points
  efficiently.
\newblock \emph{Advances in Neural Information Processing Systems},
  33:\penalty0 14410--14422, 2020.

\bibitem[Meir and Zhang(2003)]{meir2003generalization}
Ron Meir and Tong Zhang.
\newblock Generalization error bounds for {Bayesian} mixture algorithms.
\newblock \emph{Journal of Machine Learning Research}, 4\penalty0
  (Oct):\penalty0 839--860, 2003.

\bibitem[M{\"u}cke et~al.(2019)M{\"u}cke, Neu, and Rosasco]{mucke2019beating}
Nicole M{\"u}cke, Gergely Neu, and Lorenzo Rosasco.
\newblock Beating sgd saturation with tail-averaging and minibatching.
\newblock In \emph{Advances in Neural Information Processing Systems}, pages
  12568--12577, 2019.

\bibitem[Pedregosa et~al.(2011)Pedregosa, Varoquaux, Gramfort, Michel, Thirion,
  Grisel, Blondel, Prettenhofer, Weiss, Dubourg, et~al.]{pedregosa2011scikit}
Fabian Pedregosa, Ga{\"e}l Varoquaux, Alexandre Gramfort, Vincent Michel,
  Bertrand Thirion, Olivier Grisel, Mathieu Blondel, Peter Prettenhofer, Ron
  Weiss, Vincent Dubourg, et~al.
\newblock Scikit-learn: Machine learning in python.
\newblock \emph{the Journal of machine Learning research}, 12:\penalty0
  2825--2830, 2011.

\bibitem[Rahimi and Recht(2008)]{rahimi2008random}
Ali Rahimi and Benjamin Recht.
\newblock Random features for large-scale kernel machines.
\newblock In \emph{Advances in neural information processing systems}, pages
  1177--1184, 2008.

\bibitem[Rockafellar(1970)]{rockafellar1970convex}
R~Tyrrell Rockafellar.
\newblock \emph{Convex analysis}.
\newblock Number~28. Princeton university press, 1970.

\bibitem[Rudi and Rosasco(2017)]{rudi2017generalization}
Alessandro Rudi and Lorenzo Rosasco.
\newblock Generalization properties of learning with random features.
\newblock In \emph{Advances in Neural Information Processing Systems 30}, pages
  3215--3225, 2017.

\bibitem[Rudi et~al.(2015)Rudi, Camoriano, and Rosasco]{rudi2015less}
Alessandro Rudi, Raffaello Camoriano, and Lorenzo Rosasco.
\newblock Less is more: Nystr{\"o}m computational regularization.
\newblock In \emph{Advances in Neural Information Processing Systems}, pages
  1657--1665, 2015.

\bibitem[Rudi et~al.(2017)Rudi, Carratino, and Rosasco]{rudi2017falkon}
Alessandro Rudi, Luigi Carratino, and Lorenzo Rosasco.
\newblock Falkon: An optimal large scale kernel method.
\newblock In \emph{Advances in Neural Information Processing Systems}, pages
  3888--3898, 2017.

\bibitem[Rudi et~al.(2018)Rudi, Calandriello, Carratino, and
  Rosasco]{rudi2018fast}
Alessandro Rudi, Daniele Calandriello, Luigi Carratino, and Lorenzo Rosasco.
\newblock On fast leverage score sampling and optimal learning.
\newblock In \emph{Advances in Neural Information Processing Systems}, pages
  5672--5682, 2018.

\bibitem[Schmidt et~al.(2017)Schmidt, Le~Roux, and Bach]{schmidt2017minimizing}
Mark Schmidt, Nicolas Le~Roux, and Francis Bach.
\newblock Minimizing finite sums with the stochastic average gradient.
\newblock \emph{Mathematical Programming}, 162\penalty0 (1-2):\penalty0
  83--112, 2017.

\bibitem[Sch{\"o}lkopf et~al.(2001)Sch{\"o}lkopf, Herbrich, and
  Smola]{scholkopf2001generalized}
Bernhard Sch{\"o}lkopf, Ralf Herbrich, and Alex~J Smola.
\newblock A generalized representer theorem.
\newblock In \emph{International conference on computational learning theory},
  pages 416--426. Springer, 2001.

\bibitem[Shalev-Shwartz et~al.(2010)Shalev-Shwartz, Shamir, Srebro, and
  Sridharan]{shalev2010learnability}
Shai Shalev-Shwartz, Ohad Shamir, Nathan Srebro, and Karthik Sridharan.
\newblock Learnability, stability and uniform convergence.
\newblock \emph{Journal of Machine Learning Research}, 11\penalty0
  (Oct):\penalty0 2635--2670, 2010.

\bibitem[Shalev-Shwartz et~al.(2011)Shalev-Shwartz, Singer, Srebro, and
  Cotter]{shalev2011pegasos}
Shai Shalev-Shwartz, Yoram Singer, Nathan Srebro, and Andrew Cotter.
\newblock Pegasos: Primal estimated sub-gradient solver for svm.
\newblock \emph{Mathematical programming}, 127\penalty0 (1):\penalty0 3--30,
  2011.

\bibitem[Smola and Sch{\"o}lkopf(2000)]{smola2000sparse}
Alex~J Smola and Bernhard Sch{\"o}lkopf.
\newblock Sparse greedy matrix approximation for machine learning.
\newblock 2000.

\bibitem[Steinwart and Christmann(2008)]{steinwart2008support}
Ingo Steinwart and Andreas Christmann.
\newblock \emph{Support vector machines}.
\newblock Springer Science \& Business Media, 2008.

\bibitem[Steinwart et~al.(2009)Steinwart, Hush, and
  Scovel]{steinwart2009optimal}
Ingo Steinwart, Don Hush, and Clint Scovel.
\newblock Optimal rates for regularized least squares regression.
\newblock In \emph{Proceedings of the 22nd Annual Conference on Learning Theory
  {(COLT)}}, pages 79--93, 2009.

\bibitem[Sun et~al.(2018)Sun, Gilbert, and Tewari]{sun2018but}
Yitong Sun, Anna Gilbert, and Ambuj Tewari.
\newblock But how does it work in theory? linear svm with random features.
\newblock In \emph{Advances in Neural Information Processing Systems}, pages
  3379--3388, 2018.

\bibitem[Tanji et~al.(2023)Tanji, Della~Vecchia, Glineur, and
  Villa]{tanji2023snacks}
Sofiane Tanji, Andrea Della~Vecchia, Fran{\c{c}}ois Glineur, and Silvia Villa.
\newblock Snacks: a fast large-scale kernel svm solver.
\newblock In \emph{2023 European Control Conference (ECC)}, pages 1--6. IEEE,
  2023.

\bibitem[Tropp(2012)]{tropp2012user}
Joel~A Tropp.
\newblock User-friendly tail bounds for sums of random matrices.
\newblock \emph{Foundations of computational mathematics}, 12\penalty0
  (4):\penalty0 389--434, 2012.

\bibitem[Tsybakov(2004)]{tsybakov2004aggregation}
Alexandre~B. Tsybakov.
\newblock Optimal aggregation of classifiers in statistical learning.
\newblock \emph{The Annals of Statistics}, 32\penalty0 (1):\penalty0 135--166,
  2004.

\bibitem[Vershynin(2018)]{vershynin2018high}
Roman Vershynin.
\newblock \emph{High-dimensional probability: An introduction with applications
  in data science}, volume~47.
\newblock Cambridge university press, 2018.

\bibitem[Wahba(1990)]{wahba1990spline}
Grace Wahba.
\newblock \emph{Spline models for observational data}, volume~59.
\newblock Siam, 1990.

\bibitem[Williams and Seeger(2001)]{williams2001using}
Christopher~KI Williams and Matthias Seeger.
\newblock Using the nystr{\"o}m method to speed up kernel machines.
\newblock In \emph{Advances in neural information processing systems}, pages
  682--688, 2001.

\bibitem[Woodruff(2014)]{woodruff2014sketching}
David~P Woodruff.
\newblock Sketching as a tool for numerical linear algebra.
\newblock \emph{arXiv preprint arXiv:1411.4357}, 2014.

\bibitem[Zhang(2005)]{zhang2005learning}
Tong Zhang.
\newblock Learning bounds for kernel regression using effective data
  dimensionality.
\newblock \emph{Neural Computation}, 17\penalty0 (9):\penalty0 2077--2098,
  2005.

\end{thebibliography}

\onecolumn
\appendix

 \section{Proof of Section~\ref{sec:ERM}}  \label{sec:radem-compl}

This section is devoted to the proof of Theorems~\ref{thm:excessrisk-erm-standard} and~\ref{thm:regularized-full-space}.
With slight abuse of notation we set
  \[
\ell(w,z)= \ell(y,\scal{w}{x}), \qquad z=(x,y)\in \X\times\Y,\ w\in\X.
\]
With this notation  $L(w)=
  \int_{\X\times\Y}\ell(w,z)dP(z)$.

The following result is known,
\citep[Lemma 8.1]{alquier2019estimation}. We provide an
alternative proof tailored to the Hilbert setting.  
\begin{lemma}
	\label{lem:gen-gap-ball-gauss}
	Under Assumptions~\ref{ass:
		sub-gaussian} and~\ref{ass:loss},  fix 
	$R > 0$ and $\tau>0$, with probability at least $1 -\delta $, 
	\begin{equation}
		\label{eq:gen-gap-ball-bound-gauss}
		\sup_{ \| w \| \leq R} \big| \wh L (w)-L(w) \big|
		<  \frac{D}{\sqrt{n}}\Big( GR C  \|\Sigma\|^{\frac{1}{2}}\big(  \sqrt{r_\Sigma}+\sqrt{\log (4/\delta)}
		\big)+
		\ell_0 \sqrt{\log (4/\delta)}\Big) ,
	\end{equation}
	where $D>0$ is an absolute
	numerical constant and $r_\Sigma=\tr{\Sigma}/\|\Sigma\|$ is the effective rank of $\Sigma$.  Furthermore, 
	for each $w\in\X$, $\wh L (w)-L(w)$
	is a sub-gaussian centered real random
	variable  and
	\begin{equation}
		\label{eq:23}
		\| \wh L (w)-L(w)\|_{\psi_2} \leq
		\frac{2}{\sqrt{n}} ( \ell_0 +   C
		G \|\scal{X}{w}\|_2) .
	\end{equation}
\end{lemma}
\begin{proof}
	In the proof $D$ denotes an absolute numerical constant, whose value
	can change from line to line.   Fix $w\in \X$ and define the centered real random variable
	\[
	Z_w = \ell(Y,\scal{X}{w}) - \E[ \ell(Y,\scal{X}{w})].
	\]
	We claim that, for any pair $w,w'\in\X$
	\begin{equation}
		\label{eq:14}
		\| Z_w - Z_{w'} \|_{\psi_2} \leq 2C G\|\scal{X}{w-w'}\|_2,
	\end{equation}
	where $\| Z_w - Z_{w'} \|_{\psi_2}$ is defined
	by~\eqref{eq:2}. Indeed,  for all $p\geq 1$,  recalling that  $\|  \xi\|_p = \E[
	|\xi|^p]^{\frac{1}{p}}$, then triangular
	inequality and continuity of expectation give
	\begin{align*}
		\| Z_w - Z_{w'} \|_p & \leq  \| \ell(Y,\scal{X}{w}) - \ell(Y,\scal{X}{w'}) \|_p +
		\| \ell(Y,\scal{X}{w}) - \ell(Y,\scal{X}{w'}) \|_1\\
		& \leq 2  \| \ell(Y,\scal{X}{w}) -
		\ell(Y,\scal{X}{w'}) \|_p\\
		& \leq 2 G \| \scal{X}{w-w'}) \|_p \leq 2G  C\sqrt{p} \| \scal{X}{w-w'}) \|_2
	\end{align*}
	where the last two inequalities are consequence of~\eqref{eq:5} and
	~\eqref{def: subgauss},
	respectively. Hence
	\[ \sup_{p\geq 2} \frac{\| Z_w -
		Z_{w'} \|_p}{ \sqrt{p}}\leq 2G  C\|\langle X, w-w'\rangle\|_2,\]
	so that ~\eqref{eq:14} is clear.
	Furthermore, since
	\begin{align*}
		\big(\wh{L}(w)- L(w))-(\wh{L}(w')-
		L(w')\big) & =\frac{1}{n}\sum_{i=1}^n\left(
		(\ell(Y_i,\scal{X_i}{w}) - \E[
		\ell(Y_i,\scal{X_i}{w})])
		\right.\\
		& \qquad -\left. (\ell(Y_i,\scal{X_i}{w'}) - \E[
		\ell(Y_i,\scal{X_i}{w'})])\right)
	\end{align*}
	is a sum of independent sub-gaussian
	random variables distributed as $(Z_w - Z_w')/n$,
	then by rotational invariance theorem \citep[Proposition
	2.6.1]{vershynin2018high}
	\begin{equation}
		\|(\wh{L}(w)- L(w))-(\wh{L}(w')- L(w'))\|_{\psi_2}\leq  \frac{D}{\sqrt{n}} \| Z_w - Z_{w'}
		\|_{\psi_2} \leq \frac{D}{\sqrt{n}} C G \| \scal{X}{w-w'})
		\|_2,\label{eq:18}   
	\end{equation}
	where the last inequality is a consequence of~\eqref{eq:14} and $D$ is
	an absolute constant. Consider $\X$ as a metric space  with respect to
	the metric
	\[
	d(w,w')= \|\scal{X}{w-w'}\|_2
	\]
	where without loss  of generality we assume that $\Sigma$ is
	injective, then~\eqref{eq:18} states that the centered random process
	$\big( \wh{L}(w)-
	L(w)\big)_{w\in\X}$  has
	sub-gaussian increments and the generic chaining tail bound
        \citep[Theorem 8.5.5]{vershynin2018high}
	implies that, with probability at least $1-2 e^{-\tau}$,
	\begin{equation}
		\sup_{w,w'\in B_R}\big|(\wh{L}(w)- L(w))-(\wh{L}(w')- L(w'))\big|\leq 
		\frac{D}{\sqrt{n}} C G \big( \sqrt{\tau} \diam(B_R) +
		\gamma_2(B_R)\big),\label{eq:20}
	\end{equation}
	where $B_R=\{w\in\X: \|w\|\leq R\}$, $\diam(B_R)$ and
	$\gamma_2(B_R)$ are the diamater with respect to the metric $d$ and
	the Talagrand's $\gamma_2$
	functional of $B_R$,  \citep[Definition 8.5.1]{vershynin2018high}.
	
	Let $G$ be the Gaussian random vector in $\X$ with covariance
	$\Sigma$, which always exists since $\Sigma$ is a trace class
	operator. Talagrand's majorizing measure theorem~\citep[Theorem
	8.6.1]{vershynin2018high} implies that
	\begin{align*}
		\gamma_2(B_R) \leq D \E[\sup_{w\in B_R} \scal{G}{w} ]= \E[ \sup_{w\in B_R}
		|\scal{G}{w}| ]= R\,\E[ \|G\| ]\leq R\, \E[ \|G\|^2]^{\frac{1}{2}} =R
		\, \tr(\Sigma) ^{\frac{1}{2}} ,
	\end{align*}
	where  the first equality is due to the fact that $B_R$ is symmetric,
	the second inequality is a consequence of Jansen inequality and the
	last equality by definition of $G$. Furthermore, the definition of $d$
	gives that
	\[
	\diam(B_R) \leq 2 R \|\Sigma\|^{\frac{1}{2}}.
	\]
	Plugin these last  two bounds in~\eqref{eq:20}, it holds that
	\begin{equation}
		\sup_{w,w'\in B_R}\big|(\wh{L}(w)- L(w))-(\wh{L}(w')- L(w'))\big|\leq 
		\frac{D}{\sqrt{n}} C G R \big( \sqrt{\tau}  \|\Sigma\|^{\frac{1}{2}}+ \tr(\Sigma) ^{\frac{1}{2}}
		\big).\label{eq:21}
	\end{equation}
	with high probability.  Finally, observe that 
	\[
	|\ell(Y,0) - \E[\ell(Y,0)])| \leq 2 \sup_{y\in Y} \ell(y,0) =2\ell_0,
	\]
	by~\eqref{eq:5}, and
	\[
	\wh{L}(0)- L(0) = \frac{1}{n} \sum_{i=1} ( \ell(Y_i,0) - \E[\ell(Y_i,0)]) 
	\]
	so that Hoeffding's
	inequality~\citep{boucheron2013concentration}
	implies that, with   probability $1- 2 e^{-\tau}$, 
	\begin{equation}\label{eq:22}
		|\wh{L}(0)- L(0) |\leq  2\ell_0\sqrt{\frac{2 \tau}{n}}.
	\end{equation}
	Finally, since
	\[
	\sup_{w\in B_R} |\wh{L}(w)- L(w)| \leq \sup_{w\in B_R}|\wh{L}(w)- L(w)
	-(\wh{L}(0)- L(0)) |  + |\wh{L}(0)- L(0) |
	\]
	bounds~\eqref{eq:21} and~\eqref{eq:22}
	give~\eqref{eq:gen-gap-ball-bound-gauss} with $4\exp(-\tau)=\delta$.
	Bound~\eqref{eq:18} with $w'=0$ implies~\eqref{eq:23}. 
\end{proof}

This result cannot be readily applied to $\wh
	w_\lambda$, since its norm $\| \wh w_\lambda \|$ is itself random.
	Observe that, by definition and by Assumption~\ref{ass:loss},
	\[
	\lambda \| \wh w_\lambda \|^2 \leq \wh L_\lambda (\wh w_\lambda) \leq
	\wh L_\lambda (0) = \wh L (0) \leq \sup_{y\in\Y} \ell(y,0) =\ell_0 ,
	\]
	so that $\| \wh w_\lambda \| \leq
	\sqrt{\ell_0/\lambda}$.
	One could in principle apply this bound on $\wh{w}_\la$, but this would yield a suboptimal dependence on $\lambda$ and thus a suboptimal rate.	
	
	The next step in the proof is to make the bound of Lemma~\ref{lem:gen-gap-ball-gauss} valid for all norms $R$, so that it can be applied to the random quantity $R = \| \wh w_\lambda \|$.
	This is done in Lemma~\ref{lem:gen-gap-union} below though a union bound.
	
	\begin{lemma}
		\label{lem:gen-gap-union}
		Under Assumptions~\ref{ass: sub-gaussian} and~\ref{ass:loss}, $\forall w \in \H$, with probability $1 - \delta$:
		\begin{align*}
			L (w) - \wh L (w)
			\leq& \frac{DGC  \|\Sigma\|^{\frac{1}{2}}(1+\nor{w}) \sqrt{r_\Sigma}}{\sqrt{n}} +\\
			& \frac{D}{\sqrt{n}}\Big( GC  \|\Sigma\|^{\frac{1}{2}}(1+\nor{w}) +\ell_0\Big)\sqrt{\log(2+\log_2(1+\nor{w}))+\log(1/\delta)}.
		\end{align*}
	\end{lemma}
	
	\begin{proof}
		Fix $\delta \in (0, 1)$.
		For $p \geq 1$, let $R_p :=  2^p$ and $\delta_p = \delta / ( p (p+1) )$.
		By Lemma~\ref{lem:gen-gap-ball-gauss}, one has for every $p \geq 1$,
		\begin{equation*}
			\P \Bigg(\sup_{\Vert w \Vert\leq R_p}  \Big[ L (w) - \wh L (w) \Big]
			\geq \frac{D}{\sqrt{n}}\Big( GR_p C  \|\Sigma\|^{\frac{1}{2}}\big(  \sqrt{r_\Sigma}+\sqrt{\log (1/\delta_p)}
			\big)+
			\ell_0 \sqrt{\log (1/\delta_p)}\Big)\Bigg)\leq\delta_p.
		\end{equation*}
		Collecting the terms containing $\delta_p$ and taking a union bound over $p\geq 1$ while using that $\sum_{p\geq 1}\delta_p=\delta$ and $\delta_p \geq \delta^2 / (p+1)^2$, we get:
		\begin{equation*}
			\P \Bigg(\exists p\geq 1, \; \sup_{\Vert w\Vert\leq R_p} \Big[ L (w) - \wh L (w) \Big]
			\geq\frac{D}{\sqrt{n}}\Big( GR_p C  \|\Sigma\|^{\frac{1}{2}}\big(  \sqrt{r_\Sigma}+\sqrt{\log \frac{p+1}{\delta}}
			\big)+
			\ell_0 \sqrt{\log \frac{p+1}{\delta}}\Big)\Bigg)\leq\delta.          
		\end{equation*}
		Now, for $w \in \H$, let $p=\lceil \log_2(1+\Vert w\Vert) \rceil$; then, $1+\Vert w \Vert \leq R_p = 2^p \leq 2 (1+\Vert w \Vert)$, so $\| w \| \leq R_p$.
		Hence, $\forall w \in \H$, with probability $1 - \delta$:
		\begin{align*}
			L (w) - \wh L (w) &
			\leq \frac{DGC  \|\Sigma\|^{\frac{1}{2}}(1+\nor{w}) \sqrt{r_\Sigma}}{\sqrt{n}} + \frac{D}{\sqrt{n}}\sqrt{\log \frac{p+1}{\delta}}\Big( GC  \|\Sigma\|^{\frac{1}{2}}(1+\nor{w}) +\ell_0\Big)\\
			&\leq \frac{DGC  \|\Sigma\|^{\frac{1}{2}}(1+\nor{w}) \sqrt{r_\Sigma}}{\sqrt{n}} +\\
			&\quad \;+ \frac{D}{\sqrt{n}}\Big( GC  \|\Sigma\|^{\frac{1}{2}}(1+\nor{w}) +\ell_0\Big)\sqrt{\log(2+\log_2(1+\nor{w}))+\log(1/\delta)}\nonumber\\
			&\leq\delta.
		\end{align*}
		This is precisely the desired bound.    
	\end{proof}
	
	We are now able to prove the two theorems.
	\begin{proof}[Proof of Theorem~\ref{thm:excessrisk-erm-standard}]
		Since the bound of Lemma~\ref{lem:gen-gap-union} holds
		simultaneously for all $w \in \H$, one can apply it to
		$\wh w_\lambda$; using the inequality
		$ \| \wh w_\lambda \| \leq  \sqrt{\ell_0 / \lambda} \leq
		(1 + \ell_0/\lambda)/2$ to bound the $\log \log $ term,
		this gives with probability $1 - \delta$,
		\begin{align}
			\label{eq:gen-gap-wlambda}
			L (\wh w_\lambda) - \wh L (\wh w_\lambda)
			\leq & \frac{DGC  \|\Sigma\|^{\frac{1}{2}}(1+\nor{\wh w_\lambda}) \sqrt{r_\Sigma}}{\sqrt{n}} + \nonumber \\
			& \frac{D}{\sqrt{n}}\Big( GC  \|\Sigma\|^{\frac{1}{2}}(1+\nor{\wh w_\lambda}) +\ell_0\Big)\sqrt{\log(1+\log_2(3+\ell_0/\la))+\log(1/\delta)}
			.
		\end{align}
		Now, let
		$K = K_{\lambda, \delta} =  \sqrt{\log (1 + \log_2 ( 3 +
			{\ell_0/\lambda} ) ) + \log (1/\delta)}
		$. 
Eq~\eqref{eq:gen-gap-wlambda} writes
		\begin{equation}
		\label{eq:lemma3 simple}
		L (\wh w_\lambda) - \wh L (\wh w_\lambda) \leq  \frac{DGC\|\Sigma\|^{\frac{1}{2}}(1+\nor{\wh w_\lambda})\sqrt{r_\Sigma}}{\sqrt n}+\frac{DK}{\sqrt{n}}\Big( GC  \|\Sigma\|^{\frac{1}{2}}(1+\nor{\wh w_\lambda}) +\ell_0\Big)
		\end{equation}
		Using that   $a b \leq \lambda a^2 + b^2/(4 \lambda)$ for $a, b \geq 0$, one can
		then write
		\begin{align}
			L (\wh w_\lambda)
			&\leq \wh L (\wh w_\lambda) + \sqrt{r_\Sigma}\frac{DGC \|\Sigma\|^{\frac{1}{2}}(1+\nor{\wh w_\lambda})}{\sqrt{n}} +K\frac{DGC \|\Sigma\|^{\frac{1}{2}}(1+\nor{\wh w_\lambda})}{\sqrt n} +\frac{DK\ell_0 }{\sqrt{n}}\nonumber \\
			&\leq \wh L (\wh w_\lambda) + (\sqrt{r_\Sigma}+K)\frac{DGC \|\Sigma\|^{\frac{1}{2}}\nor{\wh w_\lambda}}{\sqrt{n}} +(\sqrt{r_\Sigma}+K)\frac{DGC \|\Sigma\|^{\frac{1}{2}}}{\sqrt n} +\frac{DK\ell_0 }{\sqrt{n}}\nonumber \\		
			&\leq \wh L (\wh w_\lambda) + \lambda \| \wh w_\lambda \|^2 + \frac{D^2G^2C^2(\sqrt{r_\Sigma}+K)^2 \|\Sigma\|}{4\la n} +(\sqrt{r_\Sigma}+K)\frac{DGC \|\Sigma\|^{\frac{1}{2}}}{\sqrt n} +\frac{DK\ell_0 }{\sqrt{n}} \nonumber\\
			&\leq \wh L (w_\lambda) + \lambda \| w_\lambda \|^2 + \frac{D^2G^2C^2(\sqrt{r_\Sigma}+K)^2 \|\Sigma\|}{4\la n} +(\sqrt{r_\Sigma}+K)\frac{DGC \|\Sigma\|^{\frac{1}{2}}}{\sqrt n} +\frac{DK\ell_0 }{\sqrt{n}}
			\label{eq:proof-wlambda-def-wlambda}
		\end{align}
		where~\eqref{eq:proof-wlambda-def-wlambda} holds by definition of
		$\wh w_\lambda$.  
Now, using again Lemma~\ref{lem:gen-gap-ball-gauss} for $\nor{w_\la}$ we have that, with
		probability $1 - \delta$:
		\begin{equation*}
	\wh L (w_\la)-L(w_\la) 
	<  \frac{D}{\sqrt{n}}\Big( G C  \|\Sigma\|^{\frac{1}{2}} \nor{w_\la}\big(  \sqrt{r_\Sigma}+\sqrt{\log (4/\delta)}
	\big)+
	\ell_0 \sqrt{\log (4/\delta)}\Big).
		\end{equation*}
		Combining this inequality with~\eqref{eq:proof-wlambda-def-wlambda}
		with a union bound, with probability $1 - 2 \delta$:
		\begin{align}
			\label{eq:proof-wlambda-general}
			L (\wh w_\lambda)
			< &L (w_\lambda) + \lambda \| w_\lambda \|^2 + \frac{D^2G^2C^2(\sqrt{r_\Sigma}+K)^2 \|\Sigma\|}{4\la n} +(\sqrt{r_\Sigma}+K)\frac{DGC \|\Sigma\|^{\frac{1}{2}}}{\sqrt n} +\frac{DK\ell_0 }{\sqrt{n}} +\nonumber\\
			&+ \frac{DG C  \|\Sigma\|^{\frac{1}{2}} \nor{w_\la}\big(  \sqrt{r_\Sigma}+\sqrt{\log (4/\delta)}
			\big)}{\sqrt{n}}+\frac{D\ell_0 \sqrt{\log (4/\delta)}}{\sqrt{n}}.
		\end{align}
		Since again $a b \leq \lambda a^2 + b^2 / (4 \lambda)$, then
		\begin{align*}
			\frac{DG C  \|\Sigma\|^{\frac{1}{2}} \nor{w_\la}\big(  \sqrt{r_\Sigma}+\sqrt{\log (1/\delta)}
				\big)}{\sqrt{n}}
			&\leq \lambda \| w_\lambda \|^2 + \frac{D^2G^2 C^2  \|\Sigma\| \big(  \sqrt{r_\Sigma}+\sqrt{\log (4/\delta)}\big)^2}{4\lambda n}\nonumber\\
			&\leq \mathcal{A} (\lambda) +\frac{D^2G^2 C^2  \|\Sigma\| \big(  \sqrt{r_\Sigma}+\sqrt{\log (4/\delta)}\big)^2}{4\lambda n}
		\end{align*}
		so that~\eqref{eq:proof-wlambda-general} implies, with probability
		$1 - 2\delta$:
		\begin{align*}
			L (\wh w_\lambda) - \inf_{w\in\H} L(w)
			< &2 \mathcal{A} (\lambda) + \frac{D^2G^2C^2 \|\Sigma\|((\sqrt{r_\Sigma}+K)^2+(  \sqrt{r_\Sigma}+\sqrt{\log (4/\delta)})^2)}{4\lambda n}
			+  \nonumber \\ 
			&+ \frac{DGC(\sqrt{r_\Sigma}+K) \|\Sigma\|^{\frac{1}{2}} +
			}{\sqrt{n}} + D\ell_0 (K+\sqrt{\log (4/\delta)})
		\end{align*}
		After replacing $\delta$ by $\delta/2$, we get
		bound~\eqref{eq:excessrisk-standard-general}.  
	\end{proof}
	
	\begin{proof}[Proof of Theorem~\ref{thm:regularized-full-space}]
		Assume that $w_* = \argmin_{w \in \H} L (w)$ exists.
		Then, by definition of $w_\lambda$,
		\[L (w_\lambda) + \lambda \|w_\lambda\|^2 \leq L (w_*) + \lambda \| w_* \|^2.\]
		In addition, $\| w_\lambda \| \leq \| w_* \|$, since otherwise having $\|w_*\| < \| w_\lambda\|$ and $L (w_*) \leq L (w_\lambda)$ would imply $L (w_*) + \lambda \|w_*\|^2 < L (w_\lambda) + \lambda \|w_\lambda\|^2$, contradicting the above inequality.
		Since $L (w_*) = \inf_\H L$, it follows from \eqref{eq:proof-wlambda-general} that, with probability $1 - 2\delta$,
		\begin{align}
			\label{eq:proof-mainbound-best}
			L (\wh w_\lambda)
			<& L(w_*) + \lambda \| w_* \|^2 + 		\frac{D^2G^2C^2(\sqrt{r_\Sigma}+K)^2 \|\Sigma\|}{4\la n} +\frac{DGC(\sqrt{r_\Sigma}+K) \|\Sigma\|^{\frac{1}{2}} +
				DK\ell_0 }{\sqrt{n}} +\nonumber\\
			&+\frac{DG C  \|\Sigma\|^{\frac{1}{2}} \nor{w_*}\big(  \sqrt{r_\Sigma}+\sqrt{\log (4/\delta)}
				\big)}{\sqrt{n}}+ \frac{D\ell_0 \sqrt{\log (4/\delta)}}{\sqrt{n}}
		\end{align}
		The bound~\eqref{eq:proof-mainbound-best} precisely corresponds to the desired bound~\eqref{eq:excessrisk-standard-best} after replacing $\delta$ by $\delta/2$.
		In particular, tuning $\lambda \asymp (DGCK\nor{\Sigma}^{1/2} / \| w_* \|) \sqrt{\log (1/\delta) / n}$ yields
		\begin{equation*}
			L (\wh w_\lambda) - L(w_*)
			\lesssim 
			\frac{\{DGC\nor{\Sigma}^{1/2} \| w_* \| \} \{ \log \log n + \sqrt{\log (1/\delta)} \}}{\sqrt{n}}
			.
		\end{equation*}
		Omitting the $\log \log n$ term, this bound essentially scales as $\wt O ( DGC\nor{\Sigma}^{1/2} \| w_* \| \sqrt{\log (1/\delta)/n} )$.
\end{proof}

\section{Proof of Section~\ref{sec: ERM on random subspace}}\label{proofthmbasic}
In order to prove Theorem~\ref{thm:1}, we need to previously extend 
 Lemma 7 in \citep{rudi2015less} to sub-gaussian random variables. 
\begin{lemma} \label{lem:id_min_proj_lev_subgauss}
	Fix $\delta>0$ and a $(T,\alpha_0)$-approximate leverage scores
	$(\hat{l}_i(\alpha))_{i=1}^n$ with confidence $\delta>0$. 
	Given $\alpha>\alpha_0$, let $\{\wt  x_1, \dots, \wt
	x_m\}$ be the Nystr\"om points selected according to
	Definition~\ref{def:approx_lev_scores} and set $\mathcal
	B_m=\mathrm{span}\{ \wt  x_1, \dots, \wt x_m\}$. Under
	Assumption~\ref{ass: sub-gaussian}, with probability at least $1-\delta$:
	\begin{equation}
		\Big\Vert (I-{\mathcal P}_{\BB_m})\Sigma^{1/2}\Big\Vert^2 \leq 	\Big\Vert (I-{\mathcal P}_{\BB_m}) (\Sigma+\alpha\operatorname{I})^{1/2}\Big\Vert^2\leq 3\alpha,
	\end{equation}
	provided that
	\begin{align}
		& n\gtrsim d_\alpha\vee \log(5/\delta)   \label{eq: subgauss n cond} \\
		& m\gtrsim d_\alpha\log(\frac{10n}{\delta})   \label{eq: subgauss m cond}.
	\end{align}
	Furthermore, if the spectrum of $\Sigma$ satisfies~the 
	decay conditions~\eqref{eq:2} (polynomial decay) or \eqref{eq:19} (exponential decay),  it is enough to assume  that
	\begin{align} 
		\label{eq:31a}
		&  n\gtrsim
		\log(5/\delta) && 	\alpha \gtrsim  n^{-1/p} && m\gtrsim \alpha^{-p}\log(\frac{10n}{\delta})  && \text{ polynomial decay}
		\\
		& n\gtrsim
		\log(5/\delta) && \alpha \gtrsim e^{-n}   && m\gtrsim \log(1/\alpha)\log(\frac{10n}{\delta})
		&&\text{ exponential decay} \label{eq:32a} 
	\end{align}
	
\end{lemma}

\begin{proof}
  Exploiting sub-gaussianity, the various terms are bounded differently.
  In particular, to bound $\beta_1$ we refer to Theorem 9 in \citep{koltchinskii2014concentration}, obtaining with probability at least $1-\delta$ 
  \begin{equation}
    \beta_1(\alpha)\lesssim \max \left\{ \sqrt{\frac{d_\alpha}{ n}}, \sqrt{\frac{\log(1/\delta)}{n}}\right\}.
  \end{equation}
  As regards $\beta_3$ term we apply Proposition \ref{prop: bound_emp_deff} below to get 	with probability greater than $1-3\delta$
  $$
  \beta_{3}(\alpha) \leq \frac{2 \log \frac{2 n}{\delta}}{3 m}+\sqrt{\frac{32 T^{2} d_\alpha \log \frac{2 n}{\delta}}{m}}
  $$
  for $n\geq 2 C^2 \log(1/\delta) $.
  
  Finally, taking a union bound we have with probability at least $1-5\delta$
  \begin{align*}
    \beta(\alpha) \lesssim &\max \left\{ \sqrt{\frac{d_\alpha}{ n}}, \sqrt{\frac{\log(\frac{1}{\delta})}{n}}\right\}+
    \\
                           &\;\;\;\;\;\qquad+\left(1+\max \left\{ \sqrt{\frac{d_\alpha}{ n}}, \sqrt{\frac{\log(\frac{1}{\delta})}{n}}\right\}\right)\left(\frac{2 \log \frac{2 n}{\delta}}{3 m}+\sqrt{\frac{32 T^{2} d_\alpha \log \frac{2 n}{\delta}}{m}}\right)\lesssim 1
  \end{align*}
  when $n\gtrsim d_\alpha\vee \log(1/\delta)$ and $m\gtrsim d_\alpha\log \frac{2 n}{\delta}$. See \citep{rudi2015less} to conclude the proof  of  the first claim.
  Assume now~\eqref{eq:2}  or \eqref{eq:19} . The second claim is consequence of Proposition~\ref{prop: eig polynom decay} or Proposition~\ref{prop:Exponential eigenvalues decay}. 
\end{proof}

We can proceed now with the proof of Theorem~\ref{thm:1}:
\begin{proof}[Proof of Theorem~\ref{thm:1}]
  We recall the notation.
  \begin{align*}
    \BB_m=\mathrm{span} \{\tilde{x}_1,\dots, \tilde{x}_m\}, \qquad \wh{\beta}_{\lambda} =\argmin_{w\in\BB_m} \wh{L}(w),\qquad w_* =   \argmin_{w\in\X} L(w)
  \end{align*}
  and ${\mathcal P}_m=\mathcal{P}_{\BB_m}$ the orthogonal projector
  operator onto $\BB_m$.
  
  In order to bound the excess risk of $\wh{\beta}_{\lambda}$, we
  decompose the error as follows:
  \begin{align}\label{1st_split_decomp}
    L(\wh{\beta}_{\lambda})-L(w_*) &\leq 
    	\left|L(\wh{\beta}_{\lambda})- \wh{L}(\wh{\beta}_{\lambda})-\lambda\Vert \wh{\beta}_{\lambda} \Vert^2_\H\right|
    +
    \left|\wh{L}(\wh{\beta}_{\lambda})+\lambda\Vert \wh{\beta}_{\lambda} \Vert^2_\H-\wh{L}(\mathcal{P}_m w_*)-\lambda\Vert \mathcal{P}_m w_* \Vert^2_\H\right|
+\nonumber\\
                                   &\hspace{0.5cm}+
                                   \left|\wh{L}(\mathcal{P}_m w_*)-L(\mathcal{P}_m w_*)\right|
                                     +
                                     \left|L(\mathcal{P}_m w_*)-L(w_*)\right|
                                 +\lambda\Vert \mathcal{P}_m w_*\Vert^2_\H
  \end{align}
\vspace{0.5cm} 
To bound the first term $\left|L(\wh{\beta}_{\lambda})-
  \wh{L}(\wh{\beta}_{\lambda})-\lambda\Vert \wh{\beta}_{\lambda}
  \Vert^2_\H\right|$ we apply Lemma~\ref{lem:gen-gap-union} for 
  $\wh{\beta}_{\lambda}$ and we get
\begin{equation*}
L (\wh{\beta}_{\lambda}) - \wh L (\wh{\beta}_{\lambda}) \leq  \frac{DGC(\sqrt{r_\Sigma}+K) \|\Sigma\|^{\frac{1}{2}}(1+\Vert \wh{\beta}_{\lambda}\Vert)}{\sqrt n}+\frac{DK\ell_0}{\sqrt{n}}
\end{equation*}
with $K = K_{\lambda, \delta} =  \sqrt{\log (1 + \log_2 ( 3 +{\ell_0/\lambda} ) ) + \log (1/\delta)}$ as in \eqref{eq:lemma3 simple}.

 Now since $xy\leq \lambda x^2+y^2/(4\lambda)$, we can write
  \begin{equation}
	\frac{DGC(\sqrt{r_\Sigma}+K) \Vert \wh{\beta}_{\lambda}\Vert  \|\Sigma\|^{\frac{1}{2}}}{\sqrt{n}}\leq \lambda\Vert \wh{\beta}_{\lambda}\Vert^2 + \frac{D^2G^2C^2(\sqrt{r_\Sigma}+K)^2 \|\Sigma\|}{\lambda n}
\end{equation}
  hence, 
  \begin{align}
	\label{bound_A_union_bound}
	\left|L(\wh{\beta}_{\lambda})-\wh{L}(\wh{\beta}_{\lambda})-\lambda\Vert \wh{\beta}_{\lambda}\Vert^2\right| \leq \frac{D^2G^2C^2(\sqrt{r_\Sigma}+K)^2 \|\Sigma\|}{\lambda n}+ \frac{DGC(\sqrt{r_\Sigma}+K) \|\Sigma\|^{\frac{1}{2}}}{\sqrt n}+\frac{DK\ell_0 }{\sqrt{n}},
\end{align}
Term $ \left|\wh{L}(\wh{\beta}_{\lambda})+\lambda\Vert \wh{\beta}_{\lambda} \Vert^2_\H-\wh{L}(\mathcal{P}_m w_*)-\lambda\Vert \mathcal{P}_m w_* \Vert^2_\H\right|$ is less or equal than 0.

As regards term $ \left|\wh{L}(\mathcal{P}_m w_*)-L(\mathcal{P}_m w_*)\right|$, since $\mathcal{P}_m$ is a
projection  $\|\mathcal{P}_m w_*\| \leq \|w_*\|$, so that with
probability at least $1 - \delta$:  

  \begin{align}
	\left|\wh{L}(\mathcal{P}_m w_*)-L(\mathcal{P}_m w_*)\right|&\leq
	\sup_{\|w\|\leq \|w_*\| } \left(\left|\wh{L}(w)-L(w)\right| \right)\nonumber \\
	&<
	\frac{D}{\sqrt{n}}\Big( GC \|w_*\|  \|\Sigma\|^{\frac{1}{2}}\big(  \sqrt{r_\Sigma}+\sqrt{\log (4/\delta)}
	\big)+\ell_0 \sqrt{\log (4/\delta)}\Big)
	.\label{eq:6}
\end{align}
where in the $\sup$ in  the left hand side is taken over all possible
Nystr\"om points and the second inequality is the content of
Lemma~\ref{lem:gen-gap-ball-gauss} where the role of $L$ and $\wh{L}$ is
interchanged. 

Finally, term $\left|L(\mathcal{P}_m w_*)-L(w_*)\right|$ can be rewritten as
 \begin{align}
    \label{reg C term}
    \left|L(\mathcal{P}_m w_*)-L(w_*)\right|
                   & \leq G  \int  | \scal{w}{\mathcal{P}_m w_*} -
                     \scal{w}{w_*}| dP_X(w) \nonumber\\
                   & \leq G  \left(\int  | \scal{w}{(I-\mathcal{P}_m
                     )w_*}|^2 dP_X(w)\right)^{\frac{1}{2}} \nonumber\\
               & = G\scal{\Sigma
                 (I-\mathcal{P}_m)w_*}{(I-\mathcal{P}_m)w_*}^{\frac{1}{2}} \\
              &= G\Vert \Sigma^{1/2}(I-\mathcal{P}_m)w_*\Vert_{\H}\nonumber\\
              &\leq G\Vert \Sigma^{1/2}(I-\mathcal{P}_m)\Vert \Vert
                w_*\Vert_{\H}\nonumber\\
         & = G\Vert (I-\mathcal{P}_m) \Sigma^{1/2}\Vert \Vert
                w_*\Vert_{\H} \leq G\sqrt{3\alpha} \|w_*\|,
  \end{align}
where the last bound is a consequence of Lemma~\ref{lem:id_min_proj_lev_subgauss}
and it holds true with probability at least $1-\delta$.

Putting the pieces together 
  we finally get the result in Theorem \ref{thm:1} by replacing $\delta$ with $\delta/3$. 
\end{proof}

\begin{proof}[Proof of Theorem.~\ref{prop: constr pb}]
 Under polynomial decay assumption~\eqref{eq:2}, the claim is a
 consequence of Theorem~\ref{thm:1} with Proposition~\ref{prop: eig
   polynom decay}  with $\beta=1/p$ so that
 \begin{equation}
	m\gtrsim  d_\alpha \log n,\hspace{1cm} d_\alpha
	\lesssim\alpha^{-p},\hspace{1cm} m\asymp n^p (\log n)^{1-p}
\end{equation}
Under exponential  decay assumption~\eqref{eq:19}, the claim is a consequence of Theorem~\ref{thm:1} with Proposition~\ref{prop: eig exp decay}  so that
\begin{equation}
	m\gtrsim  d_\alpha \log n,\hspace{1cm} d_\alpha\lesssim\log(1/\alpha),\hspace{1cm} m\asymp \log^2n
\end{equation}
\end{proof}

\begin{proof}[Proof of Theorem~\ref{firstmain}]
  The proof  is given by decomposing the
excess risk as in~\eqref{1st_split_decomp} where $\mathcal{P}_m$ is
replaced by $\mathcal P_{\BB}$,~\eqref{bound_A_union_bound} bounds
term~A,~\eqref{eq:6} bounds term~B and~\eqref{reg C term} and~\ref{proj} bound term~C.
\end{proof}

\section{Proofs of Section~\ref{sec:statistical}}\label{app:theorem 4} 

The following proposition provides a bound on the empirical effective
dimension $d_\alpha(\wh\Sigma)=\tr (
                \wh{\Sigma}_\alpha^{-1} \wh \Sigma )$
in terms of  the correspondent population quantity
$d_\alpha= \tr ( (\Sigma_\alpha+\alpha \operatorname{I})^{-1} \Sigma )$. 
	\begin{proposition}
		\label{prop: bound_emp_deff}
Let $X,X_1,\dots,X_n$ be iid $C$-sub-gaussian random
                variables in $\H$. For any $\delta>0$ and $n\geq 2 C^2 \ln(1/\delta) $, then the following hold with probability $1-\delta$
	\begin{equation}
	d_\alpha(\wh\Sigma)\leq 16 d_\alpha
\end{equation}
	\end{proposition}
	\begin{proof}
          Let $V_\alpha$ be the space spanned by eigenvectors of $\Sigma$ with corresponding eigenvalues $\alpha_j \geq \alpha$, and call $D_\alpha$ its dimension. Notice that  ${D_\alpha\leq 2d_\alpha}$ since $d_\alpha=\tr ( (\Sigma_\alpha+\alpha \operatorname{I})^{-1} \Sigma )=\sum \frac{\alpha_i}{\alpha_i+\alpha}$, where in the sum we have $D_\alpha$ terms greater or equal than $1/2$.
          
		Let $X = X_1 + X_2$, where $X_1$ is the orthogonal projection of $X$ on the space $V_\alpha$, we have
		\begin{equation}
			\wh \Sigma = \wh \Sigma_1 + \wh \Sigma_2 + \frac 1 n \sum_{i=1}^n (X_{1,i} X_{2,i}^\top + X_{2,i} X_{1,i}^\top ) \mleq 2 (\wh \Sigma_1 + \wh \Sigma_2)
		\end{equation}
		Now, since the function $g: t \mapsto \frac{t}{t + \alpha}$ is sub-additive (meaning that $g (t + t') \leq g(t) + g(t')$), denoting $d_\alpha (\Sigma) = \tr \, g (\Sigma) = \tr ((\Sigma_\alpha+\alpha \operatorname{I})^{-1} \Sigma)$,
		\begin{equation}
			d_\alpha (\wh \Sigma)
			\leq 2 ( d_\alpha (\wh \Sigma_1) + d_\alpha (\wh \Sigma_2) )
		\end{equation}
		and, since $(\wh \Sigma_1 + \alpha)^{-1} \wh \Sigma_1 \mleq I_{V_\alpha}$,
		\begin{align}
			\label{upper_bound_emp_d}
			\tr ( (\wh \Sigma_\alpha+\alpha \operatorname{I})^{-1} \wh \Sigma )
			&\leq 2 
			D_\alpha + \frac{2\tr (\wh \Sigma_2)}{\alpha}= 4 d_\alpha + \frac{2\tr (\wh \Sigma_2)}{\alpha}
		\end{align}
		Now,
		\begin{equation*}
			\tr (\wh \Sigma_2)
			= \frac{1}{n} \sum_{i=1}^n \| X_{2,i} \|^2
		\end{equation*}
		It thus suffices establish concentration for averages of the random variable $\| X_{2} \|^2$.
                
		Since $X$ is sub-gaussian then $\|X_2\|^2$ is sub-exponential. In fact, since $X$ is $C$-sub-gaussian then
		\begin{equation}
			\| \langle v, X\rangle\|_{\psi_2}\leq C \| \langle v, X\rangle\|_{L_2} \quad\quad \forall v\in\H
		\end{equation}
		and given that  $\langle v, \mathcal{P} X\rangle=\langle \mathcal{P} v, X\rangle$ with $\mathcal{P}$ an orthogonal projection, then also $X_2$ is $C$-sub-gaussian. Now take $e_i$ the orthonormal basis of $V$ composed by the eigenvectors of $\Sigma_2=\E[X_2X_2^T]$, then
		\begin{align}
			\left\| \|X_2\|^2\right\|_{\psi_1}&=\Big\| \sum_i \langle X_2, e_i\rangle^2\Big\|_{\psi_1} \leq \sum_i \left\| \langle X_2, e_i\rangle^2\right\|_{\psi_1}\\
			&= \sum_i \left\| \langle X_2, e_i\rangle\right\|_{\psi_2}^2\leq C^2 \left\| \langle X_2, e_i\rangle\right\|_{L_2}^2\\
			&=C^2\sum_i \alpha_i=C^2\tr\left[\Sigma_2\right]=C^2 \E\left[\|X_2\|^2\right]
		\end{align}
		so $\|X_2\|^2$ is $C^2 \E\left[\|X_2\|^2\right]$-sub-exponential. Note that $\E \| X_2 \|^2 = \E [ \tr (X_2 X_2^\top) ]
		= \tr ( \Sigma_2 ) \leq 2\alpha d_\alpha (\Sigma)$, in fact 
		\begin{align}
			d_\alpha&=\sum_{i=1}^\infty \frac{\alpha_i}{\alpha_i+\alpha}
			\geq \sum_{i: \alpha_i< \alpha} \frac{\alpha_i}{\alpha_i+\alpha}
			\geq \sum_{i: \alpha_i< \alpha}\frac{\alpha_i}{2\alpha}
			=\frac{\tr(\Sigma_2)}{2\alpha}
		\end{align}                
		Hence, we can apply then Bernstein inequality for sub-exponential scalar variables (see Theorem 2.10 in \citep{boucheron2013concentration}), with parameters $\nu$ and $c$ given by
		\begin{align}
			&n\E\left[\|X_2\|^4\right]\leq \underbrace{4nC^2\alpha^2 d_\alpha^2(\Sigma)}_\nu\\
			&c=C\alpha d_\alpha
		\end{align}
		where we used the bound on the moments of a sub-exponential variable (see \citep{vershynin2018high}).
		With high probability (\ref{upper_bound_emp_d}) becomes
		\begin{equation}
			d_\alpha(\wh\Sigma)
			\leq 
			8d_\alpha+ \frac{4C d_\alpha\sqrt{2\ln(1/\delta)}}{\sqrt{n}}+\frac{2C d_\alpha \ln(1/\delta)}{n }\leq 16 d_\alpha
		\end{equation}
		for $n\geq 2 C^2 \log(1/\delta) $.
              \end{proof}
          
          From \citep{adamczak2008tail} Theorem 4 we write a concentration inequality we will use in the following, corresponding to the simplified Talagrand's inequality in Theorem 7.5 of \citep{steinwart2008support} but for sub-exponential random variables:
          \begin{theorem}[Theorem 4 in \citep{adamczak2008tail}]
          	\label{thm: Talagrand subexp}
          	Let $X, X_{1}, \ldots, X_{n}$ be i.i.d. random variables with values in a measurable space $(\mathcal{S}, \mathcal{B})$ and let $\mathcal{F}$ be a countable class of measurable functions $f: \mathcal{S} \rightarrow \mathbb{R}$. Assume that $\mathbb{E} f\left(X\right)=0$ and $\left\|\sup _{f}\left|f\left(X\right)\right|\right\|_{\psi_1}<\infty$ for every $f \in \mathcal{F}$. Let
          	$$
          	Z=\sup _{f \in \mathcal{F}}\left|\frac 1 n \sum_{i=1}^{n} f\left(X_{i}\right)\right|
          	$$
          	and define
          	$$
          	\sigma^{2}=\sup _{f \in \mathcal{F}} \mathbb{E} f\left(X\right)^{2}.
          	$$
          	Then, for all $\tau>0$ and $\eta>0$, we have
          	\begin{align}
          		\mathbb{P}\left(Z\geq(1+\eta) \mathbb{E} Z+\frac{K_1 \left\| \sup _{f \in \mathcal{F}}\left|f\left(X\right)\right|\right\|_{\psi_1} (2+\tau)}{n}+ \sqrt {\frac{3(1+\tau)\sigma^2}{n}}\right)\leq e^{-\tau}
          	\end{align}
          where $K_1=K_1(\delta, \eta)$.
          \end{theorem}
          
          Similarly to \citep{steinwart2008support}, we define the quantity 
          \begin{equation}
          	\label{def: g_w,r}
          	g_{w, r}:=\frac{h_{w}-\E  h_{w}}{\la\nor{w}^2+\E  h_{w}+r}, \quad w \in \X, \quad r>0
          \end{equation}
          (Note that in \citep{steinwart2008support} they define $-g_{w,r}$).
          Our plan is to apply Theorem \ref{thm: Talagrand subexp} to $g_{\wh w_0,r}$, with $\wh w_{0}\in\BB_m\subseteq \X$ and $\nor{\wh w_0}\leq \nor{w_*}$.
          \begin{corollary}
          	\label{cor: Talagrand}
          	Under the hypothesis of Theorem \ref{thm: Talagrand subexp}, for all $\tau>0$ we have
          	\begin{align}
          		\sup _{w \in \X, \nor{w}\leq\nor{w_*}} \frac{\wh \E   h_{w}-\E  h_{w}}{
          			\la\nor{w}^2+\E  h_{w}+r}<&2 \mathbb{E}_{D \sim \mathrm{P}^{n}} \sup _{w \in \X, \nor{w}\leq\nor{w_*}} \frac{\wh \E   h_{w}-\E  h_{w}}{\la\nor{w}^2+\E  h_{w}+r}\nonumber\\
          		&+\sqrt{\frac{3V (1+\tau)}{n r^{2-\theta}}}+2GK_1 \nor{w_*}\frac{(C\sqrt{2\tr \Sigma}+\E \nor{X}) (2+\tau)}{nr}
          	\end{align}
          	\begin{proof}
          		In Theorem \ref{thm: Talagrand subexp}, we take
          		\begin{equation}
          			Z=\sup_{w\in\H,\nor{w}\leq R}\left|\frac 1 n \sum_{i=1}^{n} g_{w,r}\left(X_{i}\right)\right|.
          		\end{equation}
          		We have also that, using the second inequality of Lemma $7.1$ in \citep{steinwart2008support} and taking $\theta>0$, $q:=\frac{2}{2-\theta}$, $q^{\prime}:=\frac{2}{\theta}$, $a:=r$, and $b:=\E  h_{w} \neq 0$:
          		
          		$$
          		\E  g_{w, r}^{2} \leq \frac{\E  h_{w}^{2}}{\left(\la\nor{w}^2+\E  h_{w}+r\right)^{2}} \leq \frac{(2-\theta)^{2-\theta} \theta^{\theta} \E  h_{w}^{2}}{4 r^{2-\theta}\left(\E  h_{w}\right)^{\theta}} \leq V r^{\theta-2}=\sigma^2
          		$$
          		Moreover, 
          		\begin{align*}
          			&\left\|\sup_{w \in \X, \nor{w}\leq \nor{w_*}}\left|g_{w,r}\left(X\right)\right|\right\|_{\psi_1}=\left\| \sup _{w \in \X, \nor{w}\leq \nor{w_*}}\left|\frac{h_{w}\left(X\right)-\E  h_{w}}{\la\nor{w}^2+\E  h_{w}+r}\right|\right\|_{\psi_1} \\
          			&\leq \frac{1}{r}\left\|\sup _{w \in \X, \nor{w}\leq \nor{w_*}}\left|h_{w}-\E  h_{w}\left(X\right)\right|\right\|_{\psi_1}\\
          			&=\frac{1}{r}\left\|\sup _{w \in \X, \nor{w}\leq \nor{w_*}}\left|\ell(\scal{w}{X},Y)-\ell(\scal{w_*}{X},Y)-\E [ \ell(\scal{w}{X},Y)-\ell(\scal{w_*}{X},Y)]\right|\right\|_{\psi_1}\\
          			&\leq\frac{1}{r}\left\|\sup _{w \in \X, \nor{w}\leq \nor{w_*}}\left|\ell(\scal{w}{X},Y)-\ell(\scal{w_*}{X},Y)\right|+\sup _{w \in \X, \nor{w}\leq \nor{w_*}}\left|\E [ \ell(\scal{w}{X},Y)-\ell(\scal{w_*}{X},Y)]\right|\right\|_{\psi_1}\\
          			&\leq\frac{1}{r}\left\| G\sup _{w \in \X, \nor{w}\leq \nor{w_*}}\left|\scal{w-w_*}{X}\right|+G\sup _{w \in \X, \nor{w}\leq \nor{w_*}}\E \left| \scal{w-w_*}{X}\right|\right\|_{\psi_1}\\
          			&\leq\frac{1}{r}\Big\| 2G\nor{w_*}\nor{X}+2G\nor{w_*}\E \nor{X}\Big\|_{\psi_1}=\frac{2G\nor{w_*}}{r}\Big\|\nor{X}+\E \nor{X}\Big\|_{\psi_1}\\
          			&\leq\frac{2G\nor{w_*}}{r}\Big\|\nor{X}+\E \nor{X}\Big\|_{\psi_2}\leq \frac{2G\nor{w_*}(C\sqrt{2\tr \Sigma}+\E \nor{X})}{r}
          		\end{align*}
          	where last inequality derives from the fact that $\|X\|$ is sub-gaussian since, given an orthonormal basis $e_i$, 
          	$$
          	\begin{aligned}
          		\big\| \left\|X\right\| \big\|^2_{\psi_2}&\leq
          		\big\|\left\| X\right\|^{2}\big\|_{\psi_{1}} =\Big\|\sum_{i}\left\langle X, e_{i}\right\rangle^{2}\Big\|_{\psi_{1}} \leqslant \sum_{i}\left\|\left\langle X, e_{i}\right\rangle^{2}\right\|_{\psi_{1}} \\
          		&\leq 2\sum_{i}\left\|\left\langle X, e_{i}\right\rangle\right\|_{\psi_{2}}^{2} \leqslant 2C^{2}\left\|\left\langle X, e_{i}\right\rangle\right\|_{L_{2}}^{2}=2C^{2} \operatorname{Tr}\left[\Sigma\right]
          	\end{aligned}
          	$$
          		Applying Theorem \ref{thm: Talagrand subexp} with $\eta=1$ we get the result.
          	\end{proof}
          \end{corollary}

We now adapt Theorem 7.23 in \citep{steinwart2008support} to our setting: 
\begin{theorem}\label{thm: adaptation thm 7.23}
	Under assumptions~\ref{ass: sub-gaussian},~\ref{ass:loss},~\ref{ass:clipped} and~\ref{ass:best}, the covariance matrix  satisfies the polynomial decay condition \eqref{eq:2},  and the Bernstein conditions~\eqref{supremum bound}--\eqref{variance bound} hold true.  Fix a closed subspace $\widehat{\mathcal{F}}$ of $\X$ and set
	\begin{equation}
		\label{eq:34}
		w_{\widehat{\mathcal{F}},\la} =\operatornamewithlimits{argmin}_{w\in \widehat{\mathcal{F}}}\left( \wh L(w) +\la \|w\|^2 \right) \qquad \la>0.
	\end{equation}
	Choose $\wh w_0\in\widehat{\mathcal{F}}$, fix $\delta >0$, then with probability at least $1-\delta$
	\begin{align}\label{fast rate thm 7.23 application}
		\lambda\|\wh w_{\mathcal F,\la}\|^{2}+&L (\wh w_{\mathcal F,\la}^{cl})-L(f_*) \leq  7\left(\lambda\left\|\wh w_{0}\right\|^{2}+L(\wh w_{0})-L(f_*)\right)+K_3\left(\frac{a^{2 p}}{\lambda^{p} n}\right)^{\frac{1}{2-p-\vartheta+\vartheta p}}+\nonumber\\ 
		&\quad+2\left(\frac{72 V \log(3/\delta)}{n}\right)^{\frac{1}{2-\vartheta}}+16GK_1\nor{w_*}\frac{(C\sqrt{2\tr \Sigma}+\E \nor{X}) (2+\log(3/\delta))}{n} 
	\end{align}
	where the constant $a$ only depends on~\eqref{eq:2} and $K_3\geq 1$  only depends on $p, M, B, \vartheta,$ and $V$.
\end{theorem} 
\begin{proof}
  The proof mimics the one of Theorem 7.23 \citep{steinwart2008support}, with some major differences.
  
  We start recalling that Theorem 15 in \citep{steinwart2009optimal} shows that that the decay condition~\eqref{eq:2}  is equivalent to condition~(7.48) of  Theorem 7.23, which is given in terms of entropy numbers $e_j$, see Lemma~\ref{entropy}. Note that the constant $a$ is defined by the bound~(7.48). Using this remark, the above assumptions let us upper bound the empirical Rademacher complexity of $\X_r$ in term of a function $\varphi_n(r)$ defined as in \citep{steinwart2008support} (see pag. 267). Thus, the result comes from the application of Steinwart's Theorem 7.20,  with the key difference that our $X$ is not bounded but sub-gaussian and that $\wh w_0$ here is not deterministic but depends on the data.

  As a consequence, in order to control the quantity $\wh\E h_{\wh w_0}-\E h_{\wh w_0}$ we cannot simply apply a Bernstein's inequality for sub-gaussian but we need to use the more refined Corollary \ref{cor: Talagrand}. In particular, we mimic the reasoning to derive \citep[eq. (7.44)]{steinwart2008support}, but where Talagrand's inequality for bounded random variables is replaced by our Theorem \ref{thm: Talagrand subexp} for sub-exponential ones and in the specific case of Corollary \ref{cor: Talagrand}.
  
  We split the error as in \citep[eq. (7.39)]{steinwart2008support},
	\begin{equation}
		\label{eq: split 7.20}
		\la\nor{\wh w_\la}^2+\E h_{\wh w_\la^{cl}}\leq (\la\nor{\wh w_0}^2+\E h_{\wh w_0})+(\wh \E   h_{\wh w_0}-\E  h_{\wh w_0})+(\E h_{\wh w_\la^{cl}}-\wh \E  h_{\wh w_\la^{cl}})
	\end{equation}
	and we start with controlling the term $\wh \E   h_{\wh w_0}-\E  h_{\wh w_0}$.
        
	Exploiting the definition of $g_{w,r}$ in \eqref{def: g_w,r}, we know that for all the $w\in \X$ with $\nor{w}\leq \nor{w_*}$ and $r>0$ we can apply Corollary \ref{cor: Talagrand}. In particular, since $\wh w_0\in \BB_m\subseteq\X$, the bound in the Corollary is valid also for $\wh w_0$, i.e 
	\begin{align}
		\frac{\wh \E   h_{\wh w_0}-\E  h_{\wh w_0}}{
			\la\nor{\wh w_0}^2+\E  h_{\wh w_0}+r}<&2 \mathbb{E}_{D \sim \mathrm{P}^{n}} \frac{\wh \E   h_{\wh w_0}-\E  h_{\wh w_0}}{\la\nor{\wh w_0}^2+\E  h_{\wh w_0}+r}\nonumber\\
		&+\sqrt{\frac{3V (1+\tau)}{n r^{2-\theta}}}+2GK_1 \nor{w_*}\frac{(C\sqrt{2\tr \Sigma}+\E \nor{X}) (2+\tau)}{nr}.
	\end{align}
	Using {symmetrization} (see Prop. 7.10 in \citep{steinwart2008support}) we have 
	\begin{align}
		\mathbb{E}_{D \sim \mathrm{P}^{n}} \sup_{w\in \BB_{m,r}, \nor{w}\leq\nor{w_*}} \left|\wh \E   h_{w}-\E  h_{w}\right|&\leq \mathbb{E}_{D \sim \mathrm{P}^{n}} \sup_{w\in \X_r, \nor{w}\leq\nor{w_*}} \left|\wh \E   h_{w}-\E  h_{w}\right|\nonumber\\
		&\leq 2 \mathbb{E}_{D \sim \mathrm{P}^{n}} \wh \rad(\X_r,n)\leq 2\varphi_{n}(r).
	\end{align}
	Peeling by Steinwart's Theorem 7.7 together with $\X_r=\{w\in \X: \la\nor{w}^2+\E h_w\leq r\}$ hence gives 
	\begin{equation}
		\mathbb{E}_{D \sim \mathrm{P}^{n}} \sup_{w\in \BB_{m}, \nor{w}\leq\nor{w_*}} \left|\wh \E   g_{w,r}\right|
		\leq\mathbb{E}_{D \sim \mathrm{P}^{n}} \sup_{w\in \X, \nor{w}\leq\nor{w_*}} \left|\wh \E   g_{w,r}\right|\leq \frac{8\varphi_n(r)}{r}
	\end{equation}
	Putting all together we get w.h.p. 
	\begin{align}
		\wh \E   h_{\wh w_0}-\E  h_{\wh w_0}&<(\la\nor{\wh w_0}^2+\E  h_{\wh w_0})\left(\frac{10 \varphi_{n}(r)}{r}+\sqrt{\frac{3V (1+\tau)}{n r^{2-\theta}}}+2GK_1\nor{w_*}\frac{(C\sqrt{2\tr \Sigma}+\E \nor{X}) (2+\tau)}{nr}\right)\nonumber \\
		& \qquad +10 \varphi_{n}(r)+\sqrt{\frac{3V (1+\tau)r^\theta}{n}}+2GK_1\nor{w_*}\frac{(C\sqrt{2\tr \Sigma}+\E \nor{X}) (2+\tau)}{n}
	\end{align}
	As regards the term $\E h_{w_\la^{cl}}-\wh \E  h_{w_\la^{cl}}$ we proceed as in \citep{steinwart2008support}.
	We finally obtain, for $\wh w_0\in \BB_m$ with $\nor{\wh w_0}\leq \nor{w_*}$ and with $r\geq r^*_{\BB_m}\geq r^*_\X$, w.h.p.
	\begin{align}
		\la &\nor{\wh w_\la}^2+\E  h_{\wh w_\la^{cl}}< \left(\la\nor{\wh w_0}^2+\E  h_{\wh w_0}\right)+\nonumber\\
		&+(\la\nor{\wh w_0}^2+\E  h_{\wh w_0})\left(\frac{10 \varphi_{n}(r)}{r}+\sqrt{\frac{3V (1+\tau)}{n r^{2-\theta}}}+2GK_1\nor{w_*}\frac{(C\sqrt{2\tr \Sigma}+\E \nor{X}) (2+\tau)}{nr}\right)+\nonumber\\
		& +10 \varphi_{n}(r)+\sqrt{\frac{3V (1+\tau)r^\theta}{n}}+2GK_1\nor{w_*}\frac{(C\sqrt{2\tr \Sigma}+\E \nor{X}) (2+\tau)}{n}+\nonumber\\
		&+\left(\la\nor{\wh w_\la}^2+\E  h_{\wh w_\la^{cl}}\right)\left(\frac{10 \varphi_{n}(r)}{r}+\sqrt{\frac{2 V \tau}{n r^{2-\theta}}}+\frac{28 B \tau}{3 n r}\right)\nonumber \\
		&+10 \varphi_{n}(r)+\sqrt{\frac{2 V \tau r^{\theta}}{n}}+\frac{28 B \tau}{3 n}
	\end{align}
	which replaces (7.44) in \citep{steinwart2008support}.
        
	Observe now that $r\geq 30\varphi_{n}(r)$ implies $10\varphi_{n}(r)r^{-1}\leq 1/3$ and $10\varphi_{n}(r)\leq r/3$. Moreover, $r \geq\left(\frac{72 V (1+\tau)}{n}\right)^{1 /(2-\theta)}$ yields
	$$
	\left(\frac{2 V \tau}{n r^{2-\theta}}\right)^{1 / 2} \leq \frac{1}{6} \quad \text { and } \quad\left(\frac{2 V \tau r^{\theta}}{n}\right)^{1 / 2} \leq \frac{r}{6}
	$$
	and
	$$
	\left(\frac{3 V (1+\tau)}{n r^{2-\theta}}\right)^{1 / 2} \leq \frac{1}{4} \quad \text { and } \quad\left(\frac{2 V (1+\tau) r^{\theta}}{n}\right)^{1 / 2} \leq \frac{r}{4}
	$$
	In addition $n\geq 72(1+\tau)$, $V\geq B^{2-\theta}$, and $r \geq\left(\frac{72 V (1+\tau)}{n}\right)^{1 /(2-\theta)}$ imply
	$$
	\frac{28 B \tau}{3 n r}=\frac{7}{54} \cdot \frac{72 \tau}{n} \cdot \frac{B}{r} \leq \frac{7}{54} \cdot\left(\frac{72 \tau}{n}\right)^{\frac{1}{2-\theta}} \cdot \frac{V^{\frac{1}{2-\theta}}}{r} \leq \frac{7}{54}
	$$
	and $\frac{28 B \tau}{3 n} \leq \frac{7 r}{54}$. Finally $r\geq 8GK_1\nor{w_*}\frac{(C\sqrt{2\tr \Sigma}+\E \nor{X}) (2+\tau)}{n}$ gives
	$$
	2GK_1\nor{w_*}\frac{(C\sqrt{2\tr \Sigma}+\E \nor{X}) (2+\tau)}{nr}\leq \frac{1}{4} $$and $$2GK_1\nor{w_*}\frac{(C\sqrt{2\tr \Sigma}+\E \nor{X}) (2+\tau)}{n}\leq \frac{r}{4}
	$$
	We finally obtain
	\begin{align}
		\la\nor{\wh w_\la}^2+\E  h_{\wh w_\la^{cl}}&< \frac{11}{6}\left(\la\nor{\wh w_0}^2+\E  h_{\wh w_0}\right)+\frac{79}{54}r+\epsilon+\frac{17}{27}\left(\la\nor{\wh w_\la}^2+\E  h_{\wh w_\la^{cl}}\right)\nonumber\\
		&\leq 5\left(\la\nor{\wh w_0}^2+\E  h_{\wh w_0}\right)+2r 
	\end{align}
	with 
	$$
	r>\max \left\{30 \varphi_{n}(r),\left(\frac{72 V \tau}{n}\right)^{\frac{1}{2-\vartheta}},8GK_1\nor{w_*}\frac{(C\sqrt{2\tr \Sigma}+\E \nor{X}) (2+\tau)}{n} , r_\X^{*}\right\}
	$$ 
\end{proof}

%

\begin{remark}
	\label{remark: w_la}
	Notice that the same reasoning can be applied in Section~\ref{sec:statistical} in the more general framework where $w_*$ does not exist. In that case $w_*$ will be replaced by $w_\la:=\argmin_{w\in\X} L(w)+\lambda\Vert
	w\Vert^2$, with $\nor{w_\la}\leq \sqrt{\frac{\A(\la)}{\la}}$.
\end{remark}\hspace{0.2cm}
\newline

We are now ready to prove our main result:
\begin{proof}[Proof of Theorem~\ref{thm: fast rate A(lambda)}, polynomial decay]
	Applying Theorem~\ref{thm: adaptation thm 7.23} in the general case of Remark~\ref{remark: w_la}, with the choice $\wh \F=\mathcal B_m$ and $\wh w_0=\mathcal{P}_{\BB_m}w_\lambda$,
	we rewrite (\ref{fast rate thm 7.23 application}) as: 
	\begin{align}
		\lambda\Vert \wh{\beta}_{\la,m} \Vert^2+&L(\wh{\beta}_{\la,m}^{cl})-L(f_*) \leq 7(\lambda\Vert \mathcal{P}_{\BB_m} w_\la\Vert^2 +L(\mathcal{P}_{\BB_m} w_\la)-L(f_*) )+K_3\Big(\frac{a^{2p}}{\lambda^p n}\Big)^{\frac{1}{2-p-\theta+\theta p}}+\nonumber\\
		&\;\;\;\;+2\Big(\frac{72V\log(3/\delta)}{n}\Big)^{\frac{1}{2-\theta}}
		+16GK_1\nor{w_\la}\frac{(C\sqrt{2\tr \Sigma}+\E \nor{X}) (2+\log(3/\delta))}{n} 
		\nonumber\\
		&\leq  7(\lambda\Vert \mathcal{P}_{\BB_m} w_\la\Vert^2 +L(\mathcal{P}_{\BB_m} w_\la)-L(w_\la)+L(w_\la)-L(f_*) )+K_3\Big(\frac{a^{2p}}{\lambda^pn}\Big)^{\frac{1}{2-p-\theta+\theta p}}+ \nonumber\\
		&\;\;\;\;+2\Big(\frac{72V\log(3/\delta)}{n}\Big)^{\frac{1}{2-\theta}}+16GK_1\frac{(C\sqrt{2\tr \Sigma}+\E \nor{X}) (2+\log(3/\delta))}{n} 
		\sqrt{\frac{\mathcal A(\la)}{\lambda}}\nonumber\\
		&\leq 7(L(\mathcal{P}_{\BB_m} w_\la)-L(w_\la)+\lambda\Vert w_\la\Vert^2+L(w_\la)-L(f_*) )+K_3\Big(\frac{a^{2p}}{\lambda^p n}\Big)^{\frac{1}{2-p-\theta+\theta p}}+ \nonumber\\
		&\;\;\;\;+2\Big(\frac{72V\log(3/\delta)}{n}\Big)^{\frac{1}{2-\theta}}+16GK_1\frac{(C\sqrt{2\tr \Sigma}+\E \nor{X}) (2+\log(3/\delta))}{n} 
		\sqrt{\frac{\mathcal A(\la)}{\lambda}}\nonumber\\
		&=7\mathcal{A}(\lambda) +7(L(\mathcal{P}_{\BB_m} w_\la)-L(w_\la))+K_3\Big(\frac{a^{2p}}{\lambda^p n}\Big)^{\frac{1}{2-p-\theta+\theta p}}+2\Big(\frac{72V\log(3/\delta)}{n}\Big)^{\frac{1}{2-\theta}}+\nonumber\\
		&\;\;\;\;+16GK_1\frac{(C\sqrt{2\tr \Sigma}+\E \nor{X}) (2+\log(3/\delta))}{n} \sqrt{\frac{\mathcal A(\la)}{\lambda}}
		\label{eq:39}
	\end{align}
	where we used the fact that $\left\|w_{\lambda}\right\|\leq \sqrt{\mathcal A(\la)/\lambda}$.\\
	We can deal with the term $L(\mathcal{P}_{\BB_m} w_\la)-L(w_\la)$ as
	in~\eqref{reg C term} (but where we use
	Lemma~\ref{lem:id_min_proj_lev_subgauss} instead of Lemma~7 in
	\citep{rudi2015less} to exploit sub-gaussianity), so that for $\alpha\gtrsim n^{-1/p}$ with probability greater than $1-\delta$
	\begin{equation}
		L(\mathcal{P}_{\BB_m} w_\la)-L(w_\la)\leq K_2 G\sqrt{\alpha} \nor{w_\la}
		\leq K_2 G\sqrt{\alpha}\sqrt{\frac{\mathcal{A}(\la)}{\la}}\label{eq:40}
	\end{equation}
	for some universal constant $K_2>0$.
	We finally obtain with probability greater than $1-2\delta$:
	%
	\begin{align}
		\la \Vert\wh{\beta}_{\lambda,m}
		\Vert^2_\H&+L(\wh{\beta}_{\lambda,m}^{cl})-L(f_*)
		\leq 7\mathcal A(\la) + 7K_2G \sqrt{\frac{\alpha
				\mathcal A(\la)}{\lambda}}+K_3\Big(\frac{a^{2p}}{\lambda^pn}\Big)^{\frac{1}{2-p-\theta+\theta p}}+\nonumber\\
		&\quad +2\Big(\frac{72V\log(3/\delta)}{n}\Big)^{\frac{1}{2-\theta}}+16GK_1\frac{(C\sqrt{2\tr \Sigma}+\E \nor{X}) (2+\log(3/\delta))}{n} 
		\sqrt{\frac{\mathcal A(\la)}{\lambda}} 
	\end{align}
	which proves the first claim. 
\end{proof}

\noindent The following corollary provides the optimal rates. 
\begin{corollary}\label{optimal_rates}
	Fix $\delta>0$. Under the Theorem~\ref{thm: fast rate A(lambda)}  and
	the source condition
	\[
	\mathcal A(\la) \leq A_0 \la^r 
	\]
	for some $r\in (0,1]$, set  
	\begin{align}
		\lambda & \asymp
		n^{-\min\{\frac{2}{r+1},\frac{1}{r(2-p-\theta+\theta
				p)+p}\}} \label{eq:3a}\\
		\alpha  & \asymp n^{-\min\{2,\frac{r+1}{r(2-p-\theta+\theta p)+p}\}}
		\label{eq:3b}\\
		m & \gtrsim  n^{\min\{2p,\frac{p(r+1)}{ r(2-p-\theta+\theta
				p)+p}\}} \label{eq:3c}
	\end{align}
	with probability at least $1-2\delta$:
	\begin{align}
		\lambda\Vert\wh{\beta}_{\lambda,m} \Vert^2+L(\wh{\beta}_{\lambda,m}^{cl})-L(f_*) &\lesssim n^{-\min\{\frac{2r}{r+1},\frac{r}{r(2-p-\theta+\theta p)+p}\}}
	\end{align}
\end{corollary}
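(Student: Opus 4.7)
The plan is to substitute the source condition $\mathcal A(\lambda)\le A_0\lambda^r$ directly into the high-probability bound~\eqref{fast rate A(lambda)}, drop the nonnegative $\lambda\|\wh\beta_{\lambda,m}\|^2$ on the left (which only weakens the inequality), and then choose $(\lambda,\alpha,m)$ so as to balance the dominant terms. After absorbing all fixed constants (depending only on $A_0,G,C_0,K,V,a,B,D,C,\operatorname{Tr}\Sigma,\delta$) into the $\lesssim$ notation, the bound becomes
\begin{equation*}
L(\wh\beta_{\lambda,m}^{cl})-L(f_*)\;\lesssim\;\lambda^{r}\;+\;\sqrt{\alpha}\,\lambda^{(r-1)/2}\;+\;\bigl(\lambda^{p}n\bigr)^{-\tfrac{1}{2-p-\theta+\theta p}}\;+\;\frac{\lambda^{(r-1)/2}}{n}\;+\;n^{-\tfrac{1}{2-\theta}}\;+\;\frac{1}{n}.
\end{equation*}
A quick exponent comparison shows that the last two (constant-source) terms are dominated by the estimation term $(\lambda^p n)^{-1/(2-p-\theta+\theta p)}$ for all admissible $p,\theta,r$, so only the first four matter.

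Next I would carry out the three natural pairwise balances against the bias $\lambda^{r}$. Equating $\lambda^{r}$ with $\sqrt{\alpha}\,\lambda^{(r-1)/2}$ forces the subspace relation $\alpha\asymp\lambda^{r+1}$. Equating $\lambda^{r}$ with the estimation term gives $\lambda\asymp\lambda_{\mathrm{est}}:=n^{-1/(r(2-p-\theta+\theta p)+p)}$, and equating $\lambda^{r}$ with the Hoeffding-type tail $n^{-1}\lambda^{(r-1)/2}$ gives $\lambda\asymp\lambda_{\mathrm{Hf}}:=n^{-2/(r+1)}$. Because both remainders must simultaneously be $\lesssim\lambda^{r}$, the optimal admissible $\lambda$ is the larger of the two, i.e.\ $\lambda\asymp n^{-\min\{2/(r+1),\,1/(r(2-p-\theta+\theta p)+p)\}}$; plugging this into $\alpha\asymp\lambda^{r+1}$ then yields $\alpha\asymp n^{-\min\{2,\,(r+1)/(r(2-p-\theta+\theta p)+p)\}}$, and the excess-risk rate is $\lambda^{r}=n^{-\min\{2r/(r+1),\,r/(r(2-p-\theta+\theta p)+p)\}}$, matching the stated bound.

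Finally, the condition on $m$ comes from Corollary~\ref{cor: subgauss}: the polynomial eigendecay gives $d_{\alpha,2}(\Sigma)\lesssim\alpha^{-p}$, so $m\gtrsim\alpha^{-p}\log(n/\delta)$ suffices for the projection bound invoked inside Theorem~\ref{thm: fast rate A(lambda)}, which yields exactly $m\gtrsim n^{\min\{2p,\,p(r+1)/(r(2-p-\theta+\theta p)+p)\}}\log n$. The only nontrivial verification — and the main place to be careful — is the admissibility constraint $\alpha\gtrsim n^{-1/p}$ required by Lemma~\ref{lem:id_min_proj_lev_subgauss}. Since the chosen $\alpha$ is the larger of $n^{-2}$ and $n^{-(r+1)/(r(2-p-\theta+\theta p)+p)}$, it suffices that one of the two corresponding exponents be at most $1/p$. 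For the second branch this reads $p(r+1)\le r(2-p-\theta+\theta p)+p$, which simplifies to $p(2-\theta)\le 2-\theta$, i.e.\ $p\le 1$, and is automatic; hence $\alpha\gtrsim n^{-1/p}$ throughout, completing the parameter-balancing step.
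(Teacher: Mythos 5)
Your proposal is correct and follows essentially the same route as the paper: both plug the source condition into the bound of Theorem~\ref{thm: fast rate A(lambda)} and optimize over $(\lambda,\alpha,m)$ — the paper outsources the balancing to Lemma A.1.7 of Steinwart and Christmann while you carry it out explicitly, arriving at the same exponents, and your admissibility check $\alpha\gtrsim n^{-1/p}$ (reducing to $p\leq 1$) is precisely the remark the paper appends after the corollary. The only cosmetic point is that there is no need to drop $\lambda\Vert\wh{\beta}_{\lambda,m}\Vert^2$ from the left-hand side, since the right-hand side is bounded independently of it and the corollary asserts the bound for the full sum.
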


\begin{proof}
	Lemma~\ref{lem:id_min_proj_lev_subgauss} with  Proposition~\ref{prop: eig polynom decay} gives
	\begin{equation}
		m\gtrsim d_\alpha \log (n/\delta),\hspace{1cm} d_\alpha
		\lesssim\alpha^{-p} \hspace{1cm} \alpha\asymp\frac{\log^{1/p} (n/\delta) }{m^{1/p}} 
	\end{equation}
	Lemma A.1.7 in \citep{steinwart2008support}  with   $r=2$,
	$1/\gamma=(2-p-\theta+\theta p)$, $\alpha=p$, $\beta=r$ shows that the
	choice of $\la$, $\alpha$ and $m$ given
	by~\eqref{eq:3a}--\eqref{eq:3c} provides the optimal rate. 
\end{proof}
\noindent Notice that $\alpha  \asymp n^{-\min\{2,\frac{r+1}{r(2-p-\theta+\theta p)+p}\}}$ is compatible with condition $\alpha\gtrsim d_\alpha\asymp n^{-1/p}$ in Lemma~\ref{lem:id_min_proj_lev_subgauss}.


When we are in the well-specified case, i.e. $w_*$ exists, we have the following results (see Section~\ref{subsec:poly}).
\begin{corollary}
  \label{thm: fast rate f_H}
Fix $\la>0$, $\alpha\gtrsim n^{-1/p}$ and $0<\delta<1$. 
 Under Assumptions~\ref{ass: sub-gaussian},~\ref{ass:loss},~\ref{target},~\ref{ass:berstein} (with $\theta=1$)  and polynomial decay condition \eqref{eq:2}, then, with probability at least $1-2\delta$:
  \begin{equation}
L(\widehat{\beta}_{\lambda,m}^{cl}) -L(w_*)\lesssim \frac{1}{\la^p n }
+ \la\nor{w_*}^2 +\sqrt{\alpha} \nor{w_*} 
\end{equation}
provided that $n$ and $m$ are large enough.

\begin{proof}
	The proof mimics the proof of Theorem~\ref{thm: fast rate A(lambda)} \textit{(a)} where in~\eqref{fast rate thm 7.23 application} we
	choose
	$\wh w_0=\mathcal{P}_{\BB_m} w_*$
	Hence~\eqref{fast rate thm 7.23 application}  with $\theta=1$ reads 
	\begin{align}
		\lambda\Vert
		\widehat{\beta}_{\lambda,m}\Vert^2+L(\widehat{\beta}_{\lambda,m}^{cl})-&L(w_*)
		\leq 7(\lambda\Vert \mathcal{P}_{\BB_m} w_*\Vert^2 +L(\mathcal{P}_{\BB_m}
		w_*)-L(w_*))+ K_3\frac{a^{2p}}{\lambda^p n}  +\nonumber\\
		&\;\;\;\;
		+144V \frac{\log(3/\delta)}{n}+16GK_1\nor{w_*}\frac{(C\sqrt{2\tr \Sigma}+\E \nor{X}) (2+\log(3/\delta))}{n} \nonumber\\
		&\leq 7\lambda\Vert w_*\Vert^2 +7(L(\mathcal{P}_{\BB_m} w_* )-L(w_*))+
		K_3\frac{a^{2p}}{\lambda^p n}+144 V \frac{\log(3/\delta)}{n}+ \nonumber\\
		&\;\;\;\;
		+16GK_1\nor{w_*}\frac{(C\sqrt{2\tr \Sigma}+\E \nor{X}) (2+\log(3/\delta))}{n} 
	\end{align}
	We can deal wit
	h the term $L(\mathcal{P}_{\BB_m} w_*)-L(w_*)$ as
	in~\eqref{reg C term}, so that for $\alpha\gtrsim n^{-1/p}$ with probability greater than $1-\delta$
	\[
	L(\mathcal{P}_{\BB_m} w_*)-L(w_*)\leq K_2G\sqrt{\alpha} \nor{w_*}
	\]
	for some $K_2>0$.
	Hence, with probability at least $1-2\delta$:
	\begin{align}
		\lambda\Vert \widehat{\beta}_{\lambda,m}\Vert^2+L(\widehat{\beta}_{\lambda,m}^{cl})-L(w_*)&\leq 7\lambda\Vert w_*\Vert^2 +7K_2G\sqrt{\alpha}\Vert w_*\Vert+K_3\frac{a^{2p}}{\lambda^p n}+144 V \frac{\log(3/\delta)}{n}+ \nonumber\\
		&\;\;\;\;
		16GK_1\nor{w_*}\frac{(C\sqrt{2\tr \Sigma}+\E \nor{X}) (2+\log(3/\delta))}{n} 
	\end{align}
	which proves the claim.  
\end{proof}
\end{corollary}

\noindent And, similarly to Corollary~\ref{optimal_rates}, we obtain the optimal rate presented in Eq.~\ref{rate2}.
\begin{corollary}
	\label{cor:optimal}
	Fix $\delta>0$. Under the assumptions of Theorem~\ref{thm: fast rate A(lambda)} \textit{(a),} when the variance
	bound~\eqref{variance bound} holds true  with the optimal paratemer
	$\theta=1$ and  the model is well-specified, i.e. $r=1$, set  
	\begin{align}
		\lambda & \asymp n^{-\frac{1}{1+ p}}\label{eq:4a}\\
		\alpha  & \asymp n^{-\frac{2}{1+p}}
		\label{eq:4b}\\
		m & \gtrsim  n^{\frac{2p}{1+p}}\log n \label{eq:4c}
	\end{align}
	then, for ALS sampling, with probability at least $1-2\delta$:
	\begin{align}      
		\lambda \Vert\widehat{\beta}_{\lambda,m}
		\Vert^2_\H+L(\widehat{\beta}_{\lambda,m}^{cl})-L(w_*)
		&\lesssim \Vert w_* \Vert\Big(\frac{1}{n}\Big)^{\frac{1}{1+p}}.
	\end{align}
\end{corollary}
\noindent Notice that $\alpha  \asymp n^{-\frac{2}{1+p}}$ is compatible with condition $\alpha\gtrsim d_\alpha\asymp n^{-1/p}$ in Lemma~\ref{lem:id_min_proj_lev_subgauss}.

\subsection{Excess risk under exponential decay}
As regards exponential decay, given the discussion in Appendix \ref{app: entropy}, we have a different bound on the empirical Rademacher complexity of $\X_r$. In particular, we obtain $\varphi_{n}(r):=C_{1} \sqrt{\frac{V}{n}}\log _{2}\left(\frac{1}{ \lambda}\right) \sqrt{r}+C_{2} \frac{\log^2_{2}(1 / \lambda)}{n}$ and we modify Theorem \ref{thm: adaptation thm 7.23} in the case of exponential decay using the following Lemma:
\begin{lemma} \label{lem: exp decay} When
	$$r=C_3  \frac{\ln_{2}^2(1 / \lambda)}{n}+\left(\frac{72 V \tau}{n}\right)^{\frac{1}{2-\vartheta}}+8GK_1\nor{w_*}\frac{(C\sqrt{2\tr \Sigma}+\E \nor{X}) (2+\tau)}{n}$$
	we have 
	$$r\geq \max\left\{30 \varphi_n(r),\left(\frac{72 V \tau}{n}\right)^{\frac{1}{2-\vartheta}}, 8GK_1\nor{w_*}\frac{(C\sqrt{2\tr \Sigma}+\E \nor{X}) (2+\tau)}{n} \right\}$$
\end{lemma}

We can finally prove the second part of Theorem~\ref{thm: fast rate A(lambda)} under exponential decay:

\begin{proof}[Proof of Theorem~\ref{thm: fast rate A(lambda)}, exponential decay]
		We follow exactly the proof of Theorem \ref{thm: adaptation thm 7.23} for polynomial decay presented above in the previous subsection, but using the estimate in Lemma~\ref{lem: exp decay} for $r$:
		\begin{align*}
			L(\wh \beta^{cl}_{\la,m}) -
			L(f_*)
			&\lesssim 
			\frac{\ln^2(1 / \lambda)}{n}+\sqrt{\frac{\alpha\mathcal{A}(\la)}{\lambda}}
			+\Big(\frac{\log(3/\delta)}{n}\Big)^{\frac{1}{2-\theta}}+\frac{\log(3/\delta)}{n} 
			\sqrt{\frac{\mathcal A(\la)}{\lambda}}+{\cal A}(\la). 
		\end{align*}
\end{proof}

\section{Proofs of Section~\ref{sec:other}}
\subsection{Square loss}\label{app: square}
We report in this section the proofs of Theorem~\ref{thm:square loss}.
As mentioned above, in the case where $w_*$ does not exists, the assumption of sub-gaussianity is necessary to get fast rates:

	\begin{proof}[Proof of Theorem~\ref{thm:square loss}]
		The proof follows the one of Theorem \ref{thm: fast rate A(lambda)} in Appendix \ref{app:theorem 4} with some differences coming from the fact that we are working now with the square loss. Since Theorem~\ref{thm: adaptation thm 7.23} works also with locally Lipschitz loss functions we have:
		\begin{align}
			\lambda\Vert \wh{\beta}_{\la,m} \Vert^2+L(\wh{\beta}_{\la,m}^{cl})-&L(f_*) \leq 7(\lambda\Vert \mathcal{P}_{\BB_m} w_\la\Vert^2 +L(\mathcal{P}_{\BB_m} w_\la)-L(f_*) )+K_3\frac{a^{2p}}{\lambda^p n}+2\frac{72V\log(3/\delta)}{n}+\nonumber\\
			&\;\;\;\;
			+16GK_1\nor{w_\la}\frac{(C\sqrt{2\tr \Sigma}+\E \nor{X}) (2+\log(3/\delta))}{n} 
			\nonumber\\
			&=7(L_\la(\mathcal{P}_{\BB_m} w_\la)-L_\la(w_\la)+L_\la(w_\la)-L(f_*) )+K_3\frac{a^{2p}}{\lambda^pn}+ \nonumber\\
			&\;\;\;\;+2\frac{72V\log(3/\delta)}{n}+16GK_1\frac{(C\sqrt{2\tr \Sigma}+\E \nor{X}) (2+\log(3/\delta))}{n} 
			\sqrt{\frac{\mathcal A(\la)}{\lambda}}\nonumber\\
			&=7\mathcal{A}(\lambda) +7(L_\la(\mathcal{P}_{\BB_m} w_\la)-L_\la(w_\la))+K_3\frac{a^{2p}}{\lambda^p n}+2\frac{72V\log(3/\delta)}{n}+\nonumber\\
			&\;\;\;\;+16GK_1\frac{(C\sqrt{2\tr \Sigma}+\E \nor{X}) (2+\log(3/\delta))}{n} \sqrt{\frac{\mathcal A(\la)}{\lambda}}
		\end{align}
		Using the fact that $L_\la$ is quadratic and expanding around the the minimum $w_\la$ we have
		\begin{equation}
			L_\la(\mathcal{P}_m w_\la)-L_\la(w_\la)=\|(\Sigma+\alpha)^{1/2}(I-\mathcal{P}_m)w_\la\|^2
		\end{equation}
		Using Lemma~\ref{lem:id_min_proj_lev_subgauss} we get the result
		\begin{align}
			\lambda\Vert \wh{\beta}_{\la,m} \Vert^2+L(\wh{\beta}_{\la,m}^{cl})-&L(f_*) \leq 7\mathcal{A}(\lambda) +7\|(\Sigma+\alpha)^{1/2}(I-\mathcal{P}_m)w_\la\|^2+K_3\frac{a^{2p}}{\lambda^p n}+2\frac{72V\log(3/\delta)}{n}+\nonumber\\
			&\;\;\;\;+16GK_1\frac{(C\sqrt{2\tr \Sigma}+\E \nor{X}) (2+\log(3/\delta))}{n} \sqrt{\frac{\mathcal A(\la)}{\lambda}}\nonumber\\
			&\lesssim 7\mathcal{A}(\lambda) +7\alpha\frac{\mathcal{A}(\la)}{\la}+K_3\frac{a^{2p}}{\lambda^p n}+2\frac{72V\log(3/\delta)}{n}+\nonumber\\
			&\;\;\;\;+16GK_1\frac{(C\sqrt{2\tr \Sigma}+\E \nor{X}) (2+\log(3/\delta))}{n} \sqrt{\frac{\mathcal A(\la)}{\lambda}}
		\end{align}
		Furthermore, if there exists $r \in (0,1]$ such that  
		${{\cal  A}(\la)\lesssim \la^{r}}$, then with the choice for ALS sampling
		\begin{align*}
			& \lambda \asymp
			n^{-\min\{\frac{2}{r+1},\frac{1}{r+p}\}} \\
			&\alpha \asymp n^{-\min\{\frac{2}{r+1},\frac{1}{r+p}\}}\\
			&m \gtrsim  n^{\min\{\frac{2p}{r+1},\frac{p}{ r+p}\}}\log n
		\end{align*}
		with high probability
		\begin{align*}
			L(\wh{\beta}_{\lambda,m}^{cl})-L(f_*)
			&\lesssim 
			n^{-\min\{\frac{2r}{r+1},\frac{r}{r+p}\}}.
		\end{align*}
	\end{proof}
 \subsection{Logistic Loss}\label{app: logistic}
Since logistic loss is not clippable,  we prove how the modification of the
definition of the clipping in \eqref{eq: clipping_logistic} and the
similar treatment of the projection term, up to constants, between
square and logistic losses asymptotically lead to the same excess risk
bounds. We start adjusting the proof of
Theorem~\ref{thm: adaptation thm 7.23}.

As explained in subsection \ref{sec: logistic}, one has  $h_f(X)-h_f^{cl}(X)+\frac{1}{n}\geq 0$. Therefore we can simply rewrite the splitting of the error~\eqref{eq: split 7.20} as 
 \begin{equation}
 	\la\nor{\wh w_\la}^2+\E h_{\wh w_\la^{cl}}\leq (\la\nor{\wh w_0}^2+\E h_{\wh w_0})+(\wh \E   h_{\wh w_0}-\E  h_{\wh w_0})+(\E h_{\wh w_\la^{cl}}-\wh \E  h_{\wh w_\la^{cl}})+\frac{1}{n}.
 \end{equation}
Clearly last term $1/n$ does not spoil the rate and we can proceed as
for square loss:
 \begin{align}
 	\lambda\Vert \wh{\beta}_{\la,m} \Vert^2+L(\wh{\beta}_{\la,m}^{cl})-&L(f_*) \leq 7(\lambda\Vert \mathcal{P}_{\BB_m} w_\la\Vert^2 +L(\mathcal{P}_{\BB_m} w_\la)-L(f_*) )+K_3\frac{a^{2p}}{\lambda^p n}+\frac{144V\log(3/\delta)}{n} +\nonumber\\
 	&\;\;\;\;
 +16GK_1\nor{w_\la}\frac{(C\sqrt{2\tr \Sigma}+\E \nor{X}) (2+\log(3/\delta))}{n} +\frac 1 n
 	\nonumber\\
 	&=7(L_\la(\mathcal{P}_{\BB_m} w_\la)-L_\la(w_\la)+L_\la(w_\la)-L(f_*) )+K_3\frac{a^{2p}}{\lambda^pn}+\frac{144V\log(3/\delta)}{n}+ \nonumber\\
 	&\;\;\;\;+16GK_1\frac{(C\sqrt{2\tr \Sigma}+\E \nor{X}) (2+\log(3/\delta))}{n}
 	\sqrt{\frac{\mathcal A(\la)}{\lambda}}+ \frac 1 n\nonumber\\
 	&=7\mathcal{A}(\lambda) +7(L_\la(\mathcal{P}_{\BB_m} w_\la)-L_\la(w_\la))+K_3\frac{a^{2p}}{\lambda^p n}+\frac{144V\log(3/\delta)}{n}+\nonumber\\
 	&\;\;\;\;+16GK_1\frac{(C\sqrt{2\tr \Sigma}+\E \nor{X}) (2+\log(3/\delta))}{n} \sqrt{\frac{\mathcal A(\la)}{\lambda}}+\frac 1 n
 \end{align}
 To deal with the projection term $L_\la(\mathcal{P}_{\BB_m} w_\la)-L_\la(w_\la)$ we do a Taylor expansion
 \begin{equation}
 	\label{eq: taylor logistic}
 	L_\la(\mathcal{P}_{\BB_m} w_\la)-L_\la(w_\la)=\frac{1}{2}\langle (HL)(w')(\mathcal{P}_{\BB_m}w_\la-w_\la),(\mathcal{P}_{\BB_m}w_\la-w_\la)\rangle
 \end{equation}
 where $w'=w_\la +t(\mathcal{P}_{\BB_m}w_\la-w_\la)$ with $t\in [0,1]$ and using the fact that $\nabla L_\la(w_\la)=0$. We can find the expression of the Hessian $H$ of $L$ in $w\in \X$ exploiting its definition
 \begin{align}
 	\label{eq: hessian logistic}
 	\langle (HL)(w) v, v\rangle&=\frac{d^{2}}{d t^{2}} L(w+t v)\arrowvert_{t=0}=\frac{d}{d t}\E\left[\ell'(\langle w+tv, X\rangle, Y) \langle v, X\rangle\right]\arrowvert_{t=0} \nonumber\\
 	&=\E\left[\ell''(\langle w+tv, X\rangle, Y) (\langle v, X\rangle)^2\right]\arrowvert_{t=0}\leq M\E \left[ \langle v, X\rangle^2\right]
 \end{align}
 where $M=\sup_{\tau\in\R, y\in \Y} \ell''(\tau, y)$ and $v\in\X$. For the logistic loss we have
 $$
 \ell''(\tau, y)=\sigma (y\tau)(1-\sigma(y\tau))\leq \frac{1}{4},\qquad \forall \tau\in \R, y\in\Y
 $$
 where $\sigma(\cdot)$ is the sigmoid which is upper bounded by $1$. So combining this result with \eqref{eq: hessian logistic} and considering $L_\la(\cdot)=L(\cdot)+\la\nor{\cdot}^2$ we get
 $$(HL_\la)(w)\leq \Sigma_\la.$$
 Finally we can rewrite \eqref{eq: taylor logistic} as
 \begin{equation}
 	L_\la(\mathcal{P}_{\BB_m} w_\la)-L_\la(w_\la)\leq \frac{1}{2} \nor{\Sigma_\la^{1/2}(\mathcal{P}_{\BB_m}w_\la-w_\la)}^2
 \end{equation}
 and proceed exactly as in the case of the square loss (see appendix \ref{app: square}).
 
 \section{Entropy Numbers and Exponential Decay} \label{app: entropy}
 
We analyze here the main steps needed to obtain the results for exponential decay in Theorem~\ref{thm:1} and Theorem~\ref{thm: fast rate A(lambda)}. 

 \subsection{Entropy numbers in Hilbert spaces}
 Let $\mathcal{H}$ and $\mathcal{K}$ be real Hilbert spaces. For all $n \in \mathbb{N}, n \geq 1$
 \begin{equation}
 	\label{eq: entropy1}
 \sup _{1 \leq k<\infty}\left(n^{-1 / k}\left(\Pi_{\ell=1}^{k} a_{\ell}(T)\right)^{1 / k}\right) \leq \varepsilon_{n}(T) \leq 14 \sup _{1 \leq k<\infty}\left(n^{-1 / k}\left(\Pi_{\ell=1}^{k} a_{\ell}(T)\right)^{1 / k}\right)
 \end{equation}
where $\eps_n(T)$ are the entropy numbers,
see (3.4.15) of \citep{carl1990entropy}.

Let $X$ be a random variable on a probability space $(\Omega, \mathcal{F}, \mathbb{P})$ taking value in a real Hilbert space $\mathcal{H}$ such that $\mathbb{E}\left[|\langle X, v\rangle|^{2}\right]$ is finite for all $v \in \mathcal{H}$. Define $$T: \mathcal{H} \rightarrow L_{2}(\Omega, \mathbb{P}) \quad T(v)(\omega)=\langle X(\omega), v\rangle$$
 so that $\Sigma=T^{*} T$ is (non-centered) covariance matrix. We assume that $\Sigma$ is a trace-class operator and the corresponding eigenvalues have an exponential decay
 $$
 \Sigma=\sum_{n=1}^{+\infty} \lambda_{n}(\Sigma) v_{n} \otimes v_{n} \quad \lambda_{n}(\Sigma) \simeq 2^{-2 a n}
 $$
 where $\left(v_{n}\right)_{n}$ is a base of $\mathcal{H}$.
 Since $\Sigma$ is trace-class, $S$ is compact, so that by \eqref{eq: entropy1}
 $$
 e_{n}(T) \simeq \sup _{1 \leq k<\infty} 2^{-(n-1) / k}\left(\Pi_{\ell=1}^{k} a_{n}(T)\right)^{1 / k}
 $$
 with $e_n(T)=\eps_{2^{n-1}}(T)$ the (dyadic) entropy numbers and where by \citep{carl1990entropy}
 $$
 a_{n}(T)=a_{n}(|T|)=\lambda_{n}(|T|)=\lambda_{n}(\Sigma)^{1 / 2} \simeq 2^{-a n}.
 $$
We have
 $$
 2^{-(n-1) / k}\left(\Pi_{\ell=1}^{k} 2^{-a \ell}\right)^{1 / k}=2^{-\left(\frac{n-1}{k}+\frac{a(k+1)}{2}\right)}.
 $$
 Observe that the minimum on $(0,+\infty)$ of the function
 $$
 f(x)=\left(\frac{n-1}{x}+\frac{a x}{2}\right)
 $$
 is $f(\sqrt{2(n-1) / a})=\sqrt{2 a(n-1)},$ then
 $$
 e_{n}(T) \simeq 2^{-\sqrt{a n}}.
 $$
 
 \subsection{Entropy numbers of $\F_r$}
 Given the above calculation we want to upper bound the entropy number of $\mathcal{\F}_{r}$, we recall here some definitions:
 $$\mathcal{\X}_{r}:=\left\{f \in \mathcal{\X}: \Upsilon(f)+L(f^{cl})-L(f_{*}) \leq r\right\} \qquad r>r^*$$
 $$\mathcal{F}_r:=\left\{\ell \circ f^{cl}-\ell \circ f_*: f \in \mathcal{\X}_{r}\right\} \qquad r>r^*$$
 Using the above discussion we obtain
 $$e_i(\F_r)
 \leq G e_i(\X_r)\leq G\sqrt{\frac{r}{\la}}e_i(\mathcal{B}_\X)=G\sqrt{\frac{r}{\la}} 2^{-c \sqrt i}
 $$
 
 \subsection{Bound the Rademacher Complexity of $\F_r$} 
 Now we are ready to upper bound the empirical Rademacher Complexity $\wh{\mathfrak{R}}$ of $\F_r$:
 \begin{lemma}
 	\begin{equation}
 		\wh{\mathfrak{R}}\left(\F_r\right) \leq  \sqrt{\frac{\ln 16}{n}}\log\left(\frac{1}{\la}\right)(3\rho+2c_3\sqrt{r})
 	\end{equation}
 	where $\rho=\sup _{f \in \mathcal{F}_{r}}\|f\|_{L_{2}(D)}$ and $\|f\|_{L_{2}(D)}:=\left(\frac{1}{m} \sum_{i} f^{2}\left(x_{i}\right)\right)^{1 / 2}$.
 	\begin{proof}
 		Using Theorem 7.13 in \citep{steinwart2008support}, we have
 		$$
 		\wh{\mathfrak{R}}\left(\F_r\right) \leq \sqrt{\frac{\ln 16}{n}}\left(\sum_{i=1}^{\infty} 2^{i / 2} e_{2^{i}}\left(\F_r \cup\{0\},\|\cdot\|_{L_{2}(D)}\right)+\sup _{f \in \F_r}\|f\|_{L_{2}(D)}\right)
 		$$
 		It is easy to see that $e_{i}\left(\F_r \cup\{0\}\right) \leq e_{i-1}\left(\F_r\right)$ and $e_{0}\left(\F_r\right) \leq \sup _{f \in \F_r}\|f\|_{L_{2}(D)} .$ Since $e_{i}\left(\F_r\right)$ is
 		a decreasing sequence with respect to $i,$ together with the lemma above, we know that
 		$$
 		e_{i}\left(\F_r\right) \leq \min \left\{\sup _{f \in \F_r}\|f\|_{L_{2}(D)}, \sqrt{\frac{2 r}{\lambda}} 2^{-c\sqrt{i}}\right\}
 		$$
 		Even though the second one decays exponentially, it may be much greater than the first term when $2 r / \lambda$ is huge for small $i$ s. To achieve the balance between these two bounds, we use the first one for first $T$ terms in the sum and the second one for the tail. So
 		$$
 		\wh{\mathfrak{R}}\left(\F_r\right) \leq \sqrt{\frac{\ln 16}{n}}\left(\sup _{f \in \F_r}\|f\|_{L_{2}(D)} \sum_{i=0}^{T-1} 2^{i / 2}+\sqrt{\frac{2 r}{\lambda}} \sum_{i=T}^{\infty} 2^{i / 2} 2^{-c\sqrt{2^i-1}}\right)
 		$$
 		The first sum is $\frac{\sqrt{2}^{T}-1}{\sqrt{2}-1} .$ When $T$ is large enough, the second sum is upper bounded by the integral
 		\begin{align}
 			\int_{T}^{\infty} 2^{x / 2} 2^{-c\sqrt{2^i-1}} \mathrm{~d} x &\leq \int_{T}^{\infty} 2^{x / 2} 2^{-c_2\sqrt{2^i}} \mathrm{~d} x \leq \frac{2^{-c_2\sqrt{2^{T}}+1}}{c_2 \log ^{2}(2)}\\
 			&\leq c_3 2^{-c_2\sqrt{2^{T}}}
 		\end{align}
 		To make the form simpler, we bound $\frac{\sqrt{2}^{T}-1}{\sqrt{2}-1}$ by $3 \cdot 2^{T / 2}$, and denote $\sup _{h \in \F_r}\|h\|_{L_{2}(D)}$ by $\rho$. Taking $T$ to be
 		$$
 		\log _{2}\left(c_4^2\log _{2}^2\left(\frac{1}{\lambda}\right)\right),
 		$$
 		with $c_4$ such that $c_2c_4>1/2$, 
 		we get the upper bound of the form
 		$$
 		\wh{\mathfrak{R}}\left(\F_r\right) \leq \sqrt{\frac{\ln 16}{n}}\left(3\rho\log\left(\frac{1}{\la}\right)+c_3\sqrt{\frac{2 r}{\lambda}} \la^{c_2c_4} \right)\leq \sqrt{\frac{\ln 16}{n}}\log\left(\frac{1}{\la}\right)(3\rho+2c_3\sqrt{r})
 		$$
 	\end{proof}
 \end{lemma}
Now we can directly compute the upper bound for the population Rademacher Complexity $\mathfrak{R}\left(\F_r\right)$ by taking expectation over $D \sim P^{m}$:
 \begin{lemma}
 	\begin{equation}
 		\mathfrak{R}\left(\F_r\right) \leq C_{1} \sqrt{\frac{V}{n}}\log _{2}\left(\frac{1}{ \lambda}\right) \sqrt{r}+C_{2} \frac{\log^2_{2}(1 / \lambda)}{n}
 	\end{equation}
 	where $C_{1}$ and $C_{2}$ are two absolute constants.
 	\begin{proof}
 		\begin{equation}
 			\mathfrak{R}\left(\F_r\right) =\E [\wh{\mathfrak{R}}\left(\F_r\right)] \leq \sqrt{\frac{(\ln 16)}{n}} \log _{2} \left(\frac{1}{\lambda}\right) \left(3 \mathbb{E} \sup _{f \in \F_r}\|f\|_{L_{2}(D)}+2c_3 \sqrt{r}\right)
 		\end{equation}
 		By Jensen's inequality and Corollary A.8.5 in \citep{steinwart2008support},
 		%
 		we have
 		$$
 		\begin{aligned}
 			\mathbb{E} \sup _{f \in \F_r}\|f\|_{L_{2}(D)} & \leq\left(\mathbb{E} \sup _{f \in \F_r}\|f\|_{L_{2}(D)}^{2}\right)^{1 / 2} 
 			\leq\left(\mathbb{E} \sup _{f \in \F_r} \frac{1}{m} \sum_{i=1}^{m} f^{2}\left(x_{i}, y_{i}\right)\right)^{1 / 2} \\
 			& \leq\left(\sigma^{2}+8 \mathfrak{R}\left(\F_r\right)\right)^{1 / 2}
 		\end{aligned}
 		$$
 		where $\sigma^{2}:=\mathbb{E} f^{2}$. When $\sigma^{2}>\mathfrak{R}\left(\F_r\right),$ we have
 		\begin{align}
 			\mathfrak{R}\left(\F_r\right)& \leq \sqrt{\frac{\ln 16}{n}} \log _{2} \left(\frac{1}{\lambda}\right)(9 \sigma+2c_3 \sqrt{r})\\
 			&	\leq \sqrt{\frac{\ln 16}{n}} \log _{2} \left(\frac{1}{\lambda}\right)(9\sqrt{V r^\theta}+2c_3 \sqrt{r}) \\
 			&\leq c_5 \sqrt{\frac{V}{n}} \log _{2}\left(\frac{1}{\lambda}\right)\sqrt{r}
 		\end{align}
 		The second inequality is because $\mathbb{E} f^{2} \leq V (\mathbb{E} f)^\theta$ and $\mathbb{E} f \leq r$ for $f \in \F_r$.
 		When $\sigma^{2} \leq \mathfrak{R}\left(\F_r\right),$ we have
 		$$
 		\begin{aligned}
 			\mathfrak{R}\left(\F_r\right) & \leq  \sqrt{\frac{\ln 16}{n}} \log _{2} \left(\frac{1}{\lambda}\right)\left(9 \sqrt{\mathfrak{R}\left(\F_r\right)}+2c_3 \sqrt{r}\right) \\
 			& \leq (9+2c_3) c_3  \sqrt{\frac{\ln 16}{n}} \log _{2} \left(\frac{1}{\lambda}\right) \sqrt{r}+(9+2c_3)^{2} \frac{(\ln 16) \log^2_{2}(1 / \lambda)}{n}
 		\end{aligned}
 		$$
 		The last inequality can be obtained by dividing the formula into two cases, either $\mathfrak{R}\left(\F_r\right)<r$ or $\mathfrak{R}\left(\F_r\right) \geq r$ and then take the sum of the upper bounds of two cases.
 		Combining all these inequalities, we finally obtain an upper bound
 		$$
 		\mathfrak{R}\left(\F_r\right) \leq C_{1} \sqrt{\frac{V}{n}}\log _{2}\left(\frac{1}{ \lambda}\right) \sqrt{r}+C_{2} \frac{\log^2_{2}(1 / \lambda)}{n}
 		$$
 		where $C_{1}$ and $C_{2}$ are two absolute constants.
 	\end{proof}
 \end{lemma}

\section{Known results}
\label{app: known results}
For sake of completeness we recall the following known results, we
freely use in the paper. 

The following two results provide a tight bound on the effecticbe
dimension under the assumption of  a
polynomial decay or an exponential decay of the eigenvalues $\sigma_j$
of $\Sigma$ from \citep{caponnetto2007optimal}. We report the
proofs for sake of completeness.
\begin{proposition}[Proposition 3 in \citep{caponnetto2007optimal}]
            \label{prop: eig polynom decay}
If for some $\gamma\in\R^+$ and  $1<\beta<+\infty$
\[
\sigma_i \leq \gamma i^{-\beta}
\]
then 
				\begin{equation}
				d_\alpha\leq \gamma\frac{\beta}{\beta-1}\alpha^{-1/\beta}
                              \end{equation}
		\begin{proof}
			Since the function $\sigma/(\sigma+\alpha)$ is increasing in $\sigma$ and using the spectral theorem $\Sigma=UDU^*$ combined with the fact that $\tr (UDU^*)=\tr (U(U^* D))=\tr D$
			\begin{equation}
			d_\alpha=\tr (\Sigma(\Sigma+\alpha I)^{-1})=\sum_{i=1}^\infty \frac{\sigma_i}{\sigma_i+\alpha}\leq \sum_{i=1}^\infty \frac{\gamma}{\gamma+i^\beta\alpha}
			\end{equation}
			The function $\gamma/(\gamma+x^\beta\alpha)$ is positive and decreasing, so 
			\begin{align}
			d_\alpha&\leq \int_0^\infty\frac{\gamma}{\gamma+x^\beta\alpha}dx\nonumber\\
			&=\alpha^{-1/\beta}\int_0^\infty\frac{\gamma}{\gamma+\tau^\beta}d\tau\nonumber\\
			&\leq \gamma\frac{\beta}{\beta-1}\alpha^{-1/\beta}
			\end{align}
			since $\int_0^\infty(\gamma+\tau^\beta)^{-1}\leq \beta/(\beta-1)$.
		\end{proof}
	\end{proposition}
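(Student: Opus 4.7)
The plan is to turn the trace defining $d_{\alpha,2}$ into a scalar series via the spectral theorem, substitute in the polynomial bound $\sigma_i \leq \gamma i^{-\beta}$, and dominate the resulting sum by an integral whose $\alpha$-dependence factors out cleanly as $\alpha^{-1/\beta}$.

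First, I would diagonalize $\Sigma$ in its own eigenbasis so that $d_{\alpha,2} = \tr(\Sigma(\Sigma+\alpha I)^{-1}) = \sum_{i\geq 1}\sigma_i/(\sigma_i+\alpha)$. Since the scalar map $s\mapsto s/(s+\alpha)$ is increasing on $[0,\infty)$, the hypothesis $\sigma_i \leq \gamma i^{-\beta}$ yields $\sigma_i/(\sigma_i+\alpha) \leq \gamma i^{-\beta}/(\gamma i^{-\beta}+\alpha)$, which simplifies (multiplying numerator and denominator by $i^\beta$) to $\gamma/(\gamma+\alpha i^\beta)$. Hence $d_{\alpha,2} \leq \sum_{i\geq 1}\gamma/(\gamma+\alpha i^\beta)$.

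Next, since $x\mapsto \gamma/(\gamma+\alpha x^\beta)$ is positive and monotonically decreasing on $(0,\infty)$, the standard integral test bounds the series by $\int_0^\infty \gamma/(\gamma+\alpha x^\beta)\,dx$. The change of variables $\tau = \alpha^{1/\beta}x$ converts $\alpha x^\beta$ into $\tau^\beta$ and introduces a Jacobian $\alpha^{-1/\beta}$, yielding $\alpha^{-1/\beta}\int_0^\infty \gamma/(\gamma+\tau^\beta)\,d\tau$ and thus isolating the desired $\alpha^{-1/\beta}$ prefactor.

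The only analytical step requiring care is bounding the remaining dimensionless integral by a constant depending on $\gamma,\beta$ alone. Splitting at $\tau=1$, on $[0,1]$ the integrand is $\leq 1$, and on $[1,\infty)$ it is $\leq \gamma/\tau^\beta$, whose integral equals $\gamma/(\beta-1)$; the tail is finite precisely because $\beta>1$, which is exactly the role of the hypothesis. Summing these contributions produces a bound of the form $C(\gamma,\beta)\,\alpha^{-1/\beta}$ with $C(\gamma,\beta)\lesssim \gamma\beta/(\beta-1)$, giving the claimed proposition. I expect no obstacle beyond this tail estimate: the argument is essentially a one-line sum-to-integral comparison, and the failure of the bound as $\beta\downarrow 1$ reflects the genuine logarithmic (rather than polynomial) dependence of $d_{\alpha,2}$ on $\alpha$ in the borderline case $\sigma_i \asymp 1/i$.
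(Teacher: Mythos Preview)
Your proposal is correct and follows essentially the same route as the paper: diagonalize to get $d_{\alpha,2}=\sum_i \sigma_i/(\sigma_i+\alpha)$, use monotonicity of $s\mapsto s/(s+\alpha)$ to insert the bound $\sigma_i\le\gamma i^{-\beta}$, dominate the decreasing sum by $\int_0^\infty \gamma/(\gamma+\alpha x^\beta)\,dx$, and extract the $\alpha^{-1/\beta}$ factor via the substitution $\tau=\alpha^{1/\beta}x$. The only cosmetic difference is that you justify the residual integral by splitting at $\tau=1$, whereas the paper simply asserts the bound $\int_0^\infty(\gamma+\tau^\beta)^{-1}d\tau\le \beta/(\beta-1)$.
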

	
	\begin{proposition}[Exponential eigenvalues decay]\label{prop:Exponential eigenvalues decay}
		\label{prop: eig exp decay}
                  If for some $\gamma,\beta \in\R^+
                  \sigma_i\leq \gamma e^{-\beta i}$ then
		\begin{equation}
		d_\alpha\leq \frac{\ln(1+\gamma/\alpha)}{\beta}
		\end{equation}
		\begin{proof}
			\begin{align}
			\label{exp decay}
			d_\alpha&=\sum_{i=1}^\infty \frac{\sigma_i}{\sigma_i+\alpha}=\sum_{i=1}^\infty \frac{1}{1+\alpha/\sigma_i}\leq\sum_{i=1}^\infty \frac{1}{1+\alpha' e^{\beta i}}\leq\int_0^{+\infty} \frac{1}{1+\alpha' e^{\beta x}}dx
			\end{align}
			where $\alpha'=\alpha/\gamma$. Using the change of variables $t=e^{\beta x}$ we get
			\begin{align}
			(\ref{exp decay})&=\frac{1}{\beta}\int_1^{+\infty} \frac{1}{1+\alpha' t}\;\frac{1}{t}dt=\frac{1}{\beta}\int_1^{+\infty}\Big[\frac{1}{t}- \frac{\alpha'}{1+\alpha' t}\Big]dt=\frac{1}{\beta}\Big[ \ln t -\ln(1+\alpha't)\Big]_1^{+\infty}\nonumber\\
			&=\frac{1}{\beta}\Big[ \ln \Big(\frac{t}{1+\alpha't}\Big)\Big]_1^{+\infty}=\frac{1}{\beta}\Big[\ln(1/\alpha')+\ln(1+\alpha')\Big]
			\end{align}
			So we finally obtain
			\begin{equation}
			d_\alpha\leq \frac{1}{\beta}\Big[\ln(\gamma/\alpha)+\ln(1+\alpha/\gamma)\Big]=\frac{\ln(1+\gamma/\alpha)}{\beta}
			\end{equation}
		\end{proof}
	\end{proposition}
The following result provides a bound on the entropy number and it is  the content of Theorem 15 in
\citep{steinwart2009optimal}.  We recall that, given a bounded operator $A$ between two
Hilbert spaces $\H_1$ and $H_2$, we denote by $e_j(A)$ the (dyadic) entropy
numbers of $A$ and by $\wh{P}_\X=\frac{1}{n}\sum_{i=1}^n\delta_{x_i}$ the empirical (marginal)
measure associated with the input data $x_i,\ldots,x_n$. Regard the
data matrix $\wh{X}$ as the inclusion operator $\operatorname{id}:\X\to L_2(\wh{P})$
\[(\operatorname{id} w)(x_i)=\scal{w}{x_i} \qquad i=1,\ldots,n\]
\begin{lemma}\label{entropy}  
Let $p\in (0,1)$.  Then
	\begin{equation}
	\E_{\wh{P}} [e_j(\operatorname{id}:\X\to L_2(\wh{P}) )]\sim j^{-\frac{1}{2p}} 
	\end{equation}
	if and only if
	\begin{equation}
	\label{decad lemma equiv}
	\sigma_j \sim j^{-\frac{1}{p}}
	\end{equation}
      \end{lemma}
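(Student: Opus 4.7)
The plan is to reduce the equivalence to a population-level statement about a compact operator between Hilbert spaces, and then to transfer it to the empirical measure through a covariance-operator comparison. First, I introduce the population embedding $\iota\colon\X\to L_2(P_X)$, $(\iota w)(x)=\scal{w}{x}$, and observe that $\iota^*\iota=\Sigma$ on $\X$, so the singular values of $\iota$ are exactly $\sqrt{\sigma_j(\Sigma)}$. Classical two-sided entropy--singular-value inequalities for compact operators between Hilbert spaces (of Carl--Triebel--K\"onig type) then yield the population-level equivalence
\[
e_j(\iota)\sim j^{-\frac{1}{2p}} \iff \sqrt{\sigma_j(\Sigma)}\sim j^{-\frac{1}{2p}} \iff \sigma_j(\Sigma)\sim j^{-\frac{1}{p}}.
\]

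Next, I would transfer this to the empirical embedding $\iota_n\colon\X\to L_2(\wh P)$, whose associated positive operator is $\iota_n^*\iota_n=\wh\Sigma=n^{-1}\sum_i x_i\otimes x_i$. The key identity is that for every fixed $w\in\X$ one has $\E[\|\iota_n w\|^2_{L_2(\wh P)}]=\|\iota w\|^2_{L_2(P_X)}$, so the two seminorms coincide in expectation on any fixed finite subset of $\X$. To lift this identity to entropy numbers, I would run a covering/packing transfer in both directions: any $\varepsilon$-net of the unit ball of $\X$ in $L_2(P_X)$ of cardinality $\exp(cj)$ remains, in expectation, an $O(\varepsilon)$-net in $L_2(\wh P)$ by a concentration bound over the finite net, and conversely a packing in $L_2(\wh P)$ yields in expectation a packing in $L_2(P_X)$ of comparable size. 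Combining the two directions gives $\E\, e_j(\iota_n)\sim e_j(\iota)$, so the population-level equivalence propagates to the empirical one.

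The main obstacle is precisely the control of entropy numbers with respect to a random seminorm: the $L_2(\wh P)$-unit ball is data-dependent, so pointwise concentration on a fixed net does not by itself bound the supremum of $|\|w\|_{L_2(\wh P)}-\|w\|_{L_2(P_X)}|$ over the unit ball of $\X$. The cleanest route is to bypass this via an operator-level comparison: matrix concentration of $\wh\Sigma-\Sigma$ forces the singular values of $\iota_n$ to interleave those of $\iota$ up to a lower-order perturbation, after which the Carl--Triebel equivalence applied on both sides yields the desired two-sided asymptotic equivalence in expectation. A careful accounting of constants and of the perturbation error in the regime $j\to\infty$ (with $n$ either fixed large or also going to infinity at the appropriate rate) is what makes the argument of Steinwart--Scovel technically delicate.
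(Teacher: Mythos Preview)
The paper does not prove this lemma at all: it is simply quoted as ``the content of Theorem~15 in \cite{steinwart2009optimal}''. So there is no proof in the paper for me to compare your sketch against, only the reference; the substantive question is whether your sketch would recover that result.

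Your population-level step is fine: with $\iota:\X\to L_2(P_X)$ one has $\iota^*\iota=\Sigma$, so $a_j(\iota)=\sqrt{\sigma_j(\Sigma)}$, and the Carl--Triebel/K\"onig equivalence between polynomial decay of entropy numbers and of approximation numbers for operators between Hilbert spaces gives $e_j(\iota)\sim j^{-1/(2p)}\iff \sigma_j(\Sigma)\sim j^{-1/p}$. That part is correct and standard.

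The gap is in the transfer to the empirical embedding. Your proposed route---matrix concentration of $\wh\Sigma-\Sigma$ forcing the singular values of $\iota_n$ to ``interleave'' those of $\iota$---cannot work as stated. Operator-norm concentration gives only $|\sigma_j(\wh\Sigma)-\sigma_j(\Sigma)|\le \|\wh\Sigma-\Sigma\|_{\mathrm{op}}$, which is useless once $\sigma_j(\Sigma)\sim j^{-1/p}$ drops below that fixed perturbation level; in particular it says nothing about the decay rate of the tail eigenvalues. Worse, $\wh\Sigma$ has rank at most $n$, so $\sigma_j(\wh\Sigma)=0$ and $e_j(\iota_n)=0$ for $j$ beyond order $n$: there is no interleaving, and a two-sided asymptotic $e_j(\iota_n)\sim e_j(\iota)$ for all $j$ is simply false for any fixed sample. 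The covering/packing transfer you mention has the same defect: concentration on a fixed finite net does not yield the uniform $L_2(\wh P)$/$L_2(P_X)$ comparison you need over the whole unit ball.

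The argument in \cite{steinwart2009optimal} (and in Chapter~7 of \cite{steinwart2008support}) does not go through matrix concentration. For the direction ``eigenvalue decay $\Rightarrow$ expected entropy decay'', one exploits that for any \emph{fixed} finite-rank projector $P$ (say onto the top $k$ eigenvectors of $\Sigma$) one has the exact identity $\E\,\|\iota_n(I-P)\|_{\mathrm{HS}}^2=\tr((I-P)\Sigma)=\sum_{j>k}\sigma_j(\Sigma)$, which controls the expected tail approximation numbers of $\iota_n$ directly, without any sample-wise eigenvalue comparison; Carl's inequality then converts approximation-number decay into entropy-number decay with constants independent of $n$. The converse direction again uses Carl-type inequalities together with the fact that $\E[a_j^2(\iota_n)]$ dominates (up to averaging) the population eigenvalues. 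The point is that expectations of trace/HS quantities, not operator-norm perturbation, carry the argument; this is exactly what your sketch is missing.
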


As regard results in Section~\ref{sec: 0-1 loss}, from \citep{bartlett2006convexity} we report the following lemma:
\begin{lemma}
	\label{lem: from0-1_to_surr}
	For any nonnegative loss function $\phi$, any measurable $f: \X \rightarrow \mathbb{R},$ and any probability distribution on $\X \times\{\pm 1\}$
	$$
	\psi\left(L_{0-1}(f)-L_{0-1}^{*}\right) \leq L_{\phi}(f)-L_{\phi}^{*}.
	$$
	In particular, for square,  hinge and logistic losses we can write 
	\begin{itemize}
		\item square loss: $L_{0-1}(f)-L_{0-1}^*\leq \sqrt{L_{square}(f)-L_{square}^*}$,
		\item hinge loss: $L_{0-1}(f)-L_{0-1}^*\leq L_{hinge}(f)-L_{hinge}^*$,
		\item logistic loss: $L_{0-1}(f)-L_{0-1}^*\leq 2\sqrt{L_{logistic}(f)-L_{logistic}^*}$.
	\end{itemize}
\end{lemma}
Under the assumption of low noise we can improve the above bounds in Lemma~\ref{lem: from0-1_to_surr}:
\begin{lemma}[Theorem 3 in \citep{bartlett2006convexity}]
	\label{lem: class risk wirh gamma}
	Suppose that $P$ has noise exponent $0\leq\gamma \leq 1$, and that $\phi$ is classification-calibrated (which is the case for square, hinge and logistic losses). Then there is a $c>0$ such that for any $f: \mathcal{X} \rightarrow \mathbb{R}$
	$$
	c\left(L_{0-1}(f)-L_{0-1}^*\right)^{\gamma} \psi\left(\frac{\left(L_{0-1}(f)-L_{0-1}^*\right)^{1-\gamma}}{2 c}\right) \leq L_\phi(f)-L_\phi^{*}
	$$
	where $\psi(x)=x^2$ when $\phi$ is the square loss, $\psi(x)=x$ when $\phi$ is the hinge loss and $\psi(x)\geq \frac{x}{2}$ when $\phi$ is the logistic loss.
\end{lemma} 
We copy also this results from \citep{steinwart2008support}, linking the variance bound in Assumption~\ref{ass:berstein} with low noise condition in Assumption~\ref{ass: low-noise} for hinge loss:
\begin{lemma}\label{lem: lownoise_bernst}
	[Theorem 8.24 \citep{steinwart2008support}] (Variance bound for the hinge loss). Let $\mathrm{P}$ be a distribution on $X \times Y$ that has noise exponent $\gamma \in[0, 1] .$ Moreover, let $f_{*}: X \rightarrow[-1,1]$ be a fixed Bayes decision function for the hinge loss $\ell$. Then, for all measurable $f: X \rightarrow \mathbb{R},$ we have
	$$
	\mathbb{E}\left(\ell \circ f^{cl}-\ell \circ f_{*}\right)^{2} \leq 6 c\left(\mathbb{E}\left(\ell \circ f^{cl}-\ell \circ f_{*}\right)\right)^\gamma
	$$
	where $c$ is the constant appearing in \eqref{ass:noise cond}.
\end{lemma}

\section{Experiments: datasets and tuning}\label{appexp}
Here we report  further information on the used datasets and the set up used for parameter tuning, plus some additional tables of results.
\begin{table}[h]
	\caption[caption table]{Comparison between ALS and uniform sampling. To achieve similar accuracy, uniform sampling usually requires larger $m$ than ALS sampling. Therefore, even if it does not need leverage scores computations, Nystr\"om-Pegasos with uniform sampling can be more expensive both in terms of memory and time (in seconds). }
	\centering
	\label{tab:comparison}
	\begin{tabular}{lllllllll}
		\toprule
		\multicolumn{1}{c}{}   &\multicolumn{3}{c} {Nystr\"om-Pegasos (ALS)} &\multicolumn{3}{c} {Nystr\"om-Pegasos (Uniform)}             \\ 
		\cmidrule(r){1-1}	\cmidrule(r){2-4}\cmidrule(r){5-7}
		Datasets        & c-err & t train  & t pred & c-err &t train & t pred\\
		
		\cmidrule(r){1-1}	\cmidrule(r){2-4}\cmidrule(r){5-7}
		SUSY  &$20.0\% \pm 0.2 \%$&$ 608\pm 2$& $134\pm 4$ &$20.1\% \pm 0.2 \%$&$ 592\pm 2$& $129\pm 1$\\
		Mnist bin     & $2.2\% \pm 0.1 \%$ &$ 1342\pm 5 $& $ 491 \pm 32 $ &  $2.3\% \pm 0.1 \%$&$ 1814\pm 8 $& $ 954 \pm 21 $ \\
		Usps         & $3.0\%\pm 0.1  \%$ & $ 19.8 \pm 0.1 $ &$7.3 \pm 0.3 $ & $3.0\%\pm 0.2  \%$ & $ 66.1 \pm 0.1 $ &$48 \pm 8 $ \\
		Webspam      & $1.3\% \pm 0.1\%$    & $2440 \pm 5$& $376 \pm 18$& $1.3\% \pm 0.1\%$    & $4198 \pm 40$& $1455 \pm 180$ \\
		a9a  &$15.1\%\pm 0.2\%$ & $29.3\pm 0.2$& $1.5\pm 0.1$ &$15.1\%\pm 0.2\%$ & $30.9\pm 0.2$& $3.2\pm 0.1$\\
		CIFAR  &$19.2\%\pm 0.1\%$ & $2408\pm 14$& $820\pm 47$ &$19.0\%\pm 0.1\%$ & $2168\pm 19$& $709\pm 13$\\
		\bottomrule
	\end{tabular}
\end{table}

For Nystr\"om SVM with Pegaos we tuned the kernel parameter $\sigma$ and $\lambda$ regularizer with a simple grid search ($\sigma\in [0.1,20]$, $\lambda\in [10^{-8},10^{-1}]$, initially with a coarse grid and then more refined around the best candidates).  An analogous procedure has been used for K-SVM with its parameters $C$ and $\gamma$. 
The details of the considered data sets and the chosen parameters for our algorithm in Table ~\ref{tab:results} and ~\ref{tab:comparison} are the following: \\
\textbf{SUSY} (Table ~\ref{tab:results} and ~\ref{tab:comparison}, $n=5\times10^6$, $d=18$): we used a Gaussian kernel with $\sigma=4$, $\lambda=3\times 10^{-6}$ and $m_{ALS}=2500$, $m_{uniform}=2500$.\\
\textbf{Mnist binary} (Table ~\ref{tab:results} and ~\ref{tab:comparison}, $n=7\times10^4$, $d=784$): we used a Gaussian kernel with $\sigma=10$, $\lambda=3\times 10^{-6}$ and $m_{ALS}=15000$, $m_{uniform}=20000$.\\
\textbf{Usps} (Table ~\ref{tab:results} and ~\ref{tab:comparison}, $n=9298$, $d=256$): we used a Gaussian kernel with $\sigma=10$, $\lambda=5\times 10^{-6}$ and $m_{ALS}=2500$, $m_{uniform}=4000$.\\
\textbf{Webspam} (Table ~\ref{tab:results} and ~\ref{tab:comparison}, $n=3.5\times 10^5$, $d=254$): we used a Gaussian kernel with $\sigma=0.25$, $\lambda=8\times 10^{-7}$ and $m_{ALS}=11500$, $m_{uniform}=20000$.\\
\textbf{a9a} (Table ~\ref{tab:results} and ~\ref{tab:comparison}, $n=48842$, $d=123$): we used a Gaussian kernel with $\sigma=10$, $\lambda=1\times 10^{-5}$ and $m_{ALS}=800$, $m_{uniform}=1500$.\\
\textbf{CIFAR} (Table ~\ref{tab:results} and ~\ref{tab:comparison}, $n=6\times 10^4$, $d=400$): we used a Gaussian kernel with $\sigma=10$, $\lambda=2\times 10^{-6}$ and $m_{ALS}=20000$, $m_{uniform}=20000$.\\
In figure~\ref{fig: cov} we visualize the eigenvalues decay of the empirical covariance matrix for some of the datasets considered.

\begin{table}[h]
	\caption[caption table]{Comparison between Nystr\"om-Pegasos (hinge loss) and Nystr\"om-KRR (square loss) when using uniform sampling. We report the respective classification errors fixing the number of Nystr\"om centers. }
	\centering
	\label{tab:disc}
	\begin{tabular}{lllllllll}
		\toprule
		\multicolumn{1}{c}{}   &\multicolumn{2}{c} {Nystr\"om-Pegasos (Uniform)} &\multicolumn{2}{c} {Nystr\"om-KRR (Uniform)}             \\ 
		\cmidrule(r){1-1}	\cmidrule(r){2-3}\cmidrule(r){4-5}
		Datasets        & c-err   & $m$ & c-err  & $m$\\
		
		\cmidrule(r){1-1}	\cmidrule(r){2-3}\cmidrule(r){4-5}
		SUSY  &$20.1\%\pm 0.2 \%$& $2500$ &$19.8\%\pm 0.2\%$& $2500$\\
		Mnist bin     & $2.3\% \pm 0.1 \%$& $ 20000 $ &  $2.5\%\pm 0.1\%$& $ 20000$ \\
		Usps         & $3.0\%\pm 0.2  \%$  &$4000 $ & $3.1\%\pm0.1\%$  &$4000 $ \\
		Webspam    &$1.3\% \pm 0.1\%$    & $20000$  & $1.4\%\pm 0.1\%  $  & $20000$ \\
		a9a  &$15.1\%\pm 0.2\%$ &  $1500$ &$14.9\%\pm 0.1\%$ &  $1500$\\
		CIFAR  &$19.0\%\pm 0.1\%$ & $20000$ &$19.2\%\pm 0.1\%$ & $20000$\\
		\bottomrule
	\end{tabular}
\end{table}

\begin{figure*}[h!]
	\centering
	\includegraphics[width=7.5cm]{./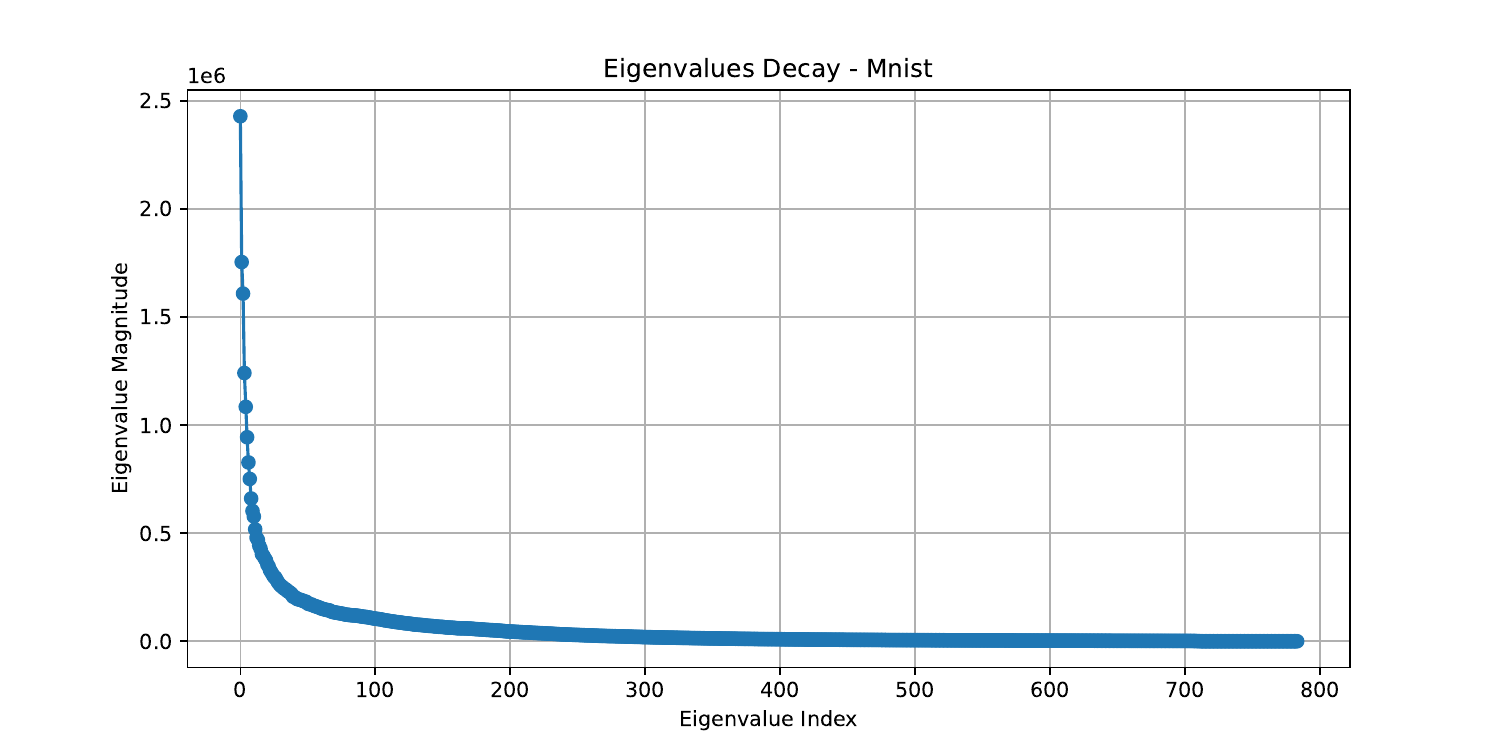}
	\includegraphics[width=7.5cm]{./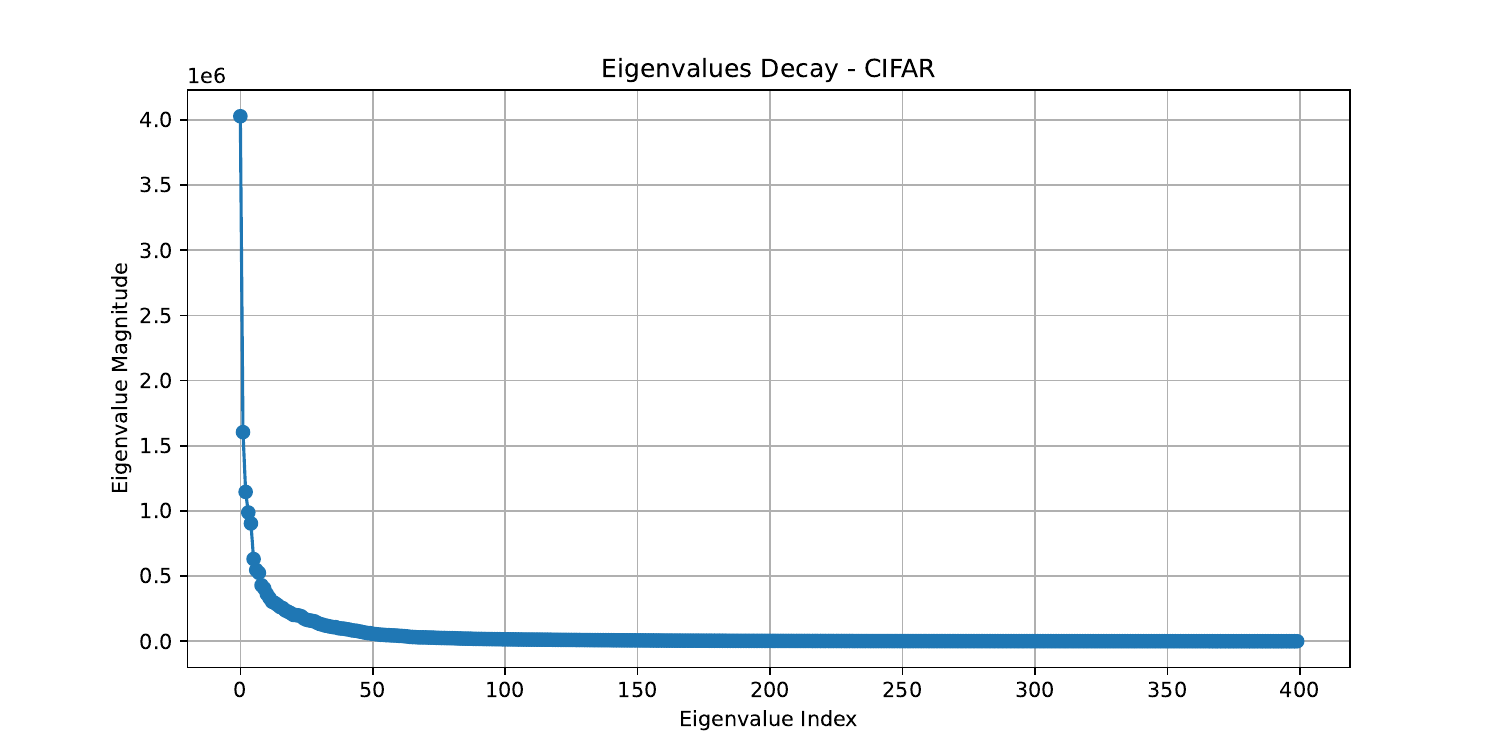}
	\includegraphics[width=7.5cm]{./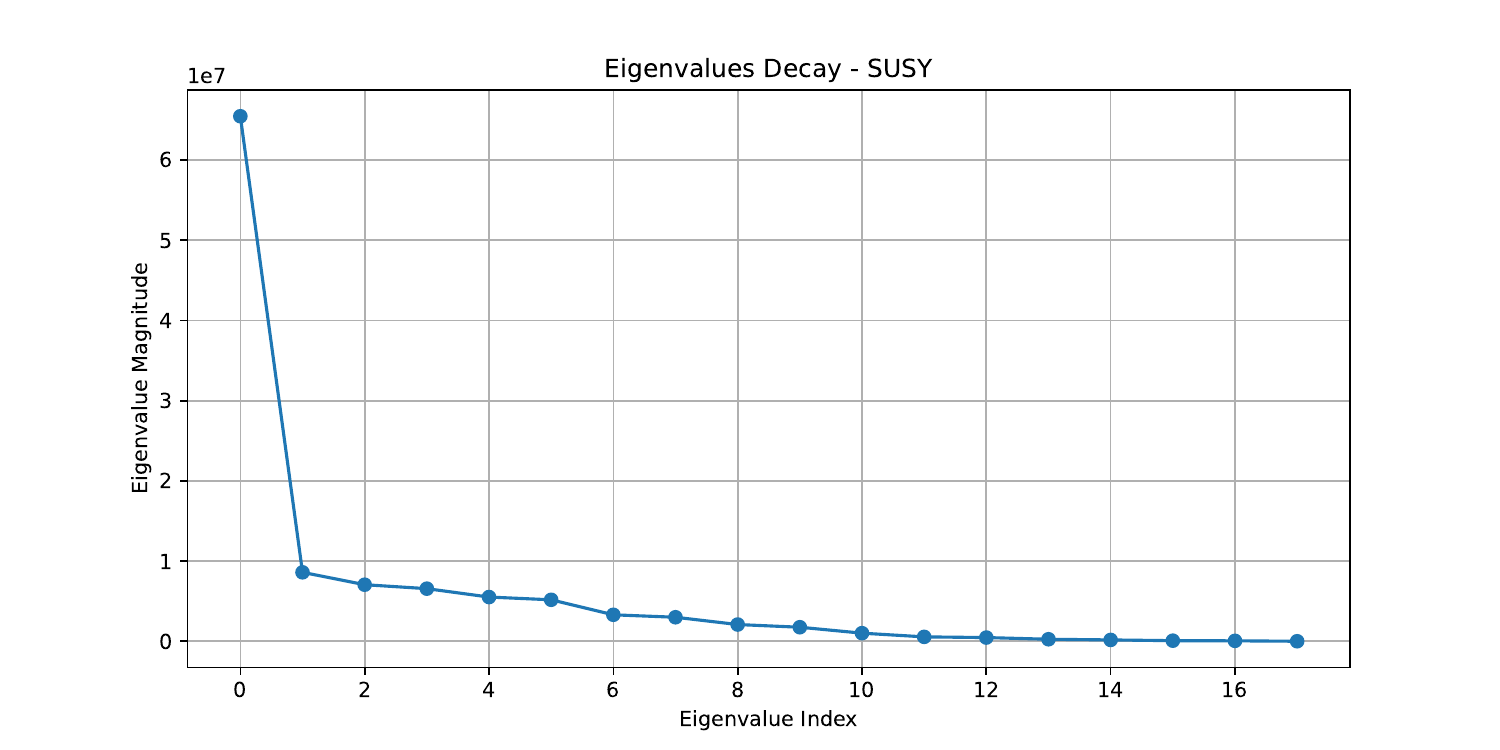}
	\caption{Eigenvalues decay of the empirical covariance matrix for Mnist binary, CIFAR and SUSY datasets.}
	\label{fig: cov}
\end{figure*}

\end{document}